\newtheorem{definition}{Definition}[section]
\newtheorem{setup}{Setup}
\newtheorem{theorem}{Theorem}[section]
\newtheorem{lemma}{Lemma}
\newtheorem{proposition}{Proposition}[section]
\icmltitlerunning{Detecting Rewards Deterioration in Episodic Reinforcement Learning}
\begin{document}

\twocolumn[

\icmltitle{Detecting Rewards Deterioration in Episodic Reinforcement Learning}

\begin{icmlauthorlist}
\icmlauthor{Ido Greenberg}{Technion}
\icmlauthor{Shie Mannor}{Technion,Nvidia}
\end{icmlauthorlist}

\icmlaffiliation{Technion}{Department of Electric Engineering, Technion, Israel}
\icmlaffiliation{Nvidia}{Nvidia Research}

\icmlcorrespondingauthor{Ido Greenberg}{gido@campus.technion.ac.il}
\icmlcorrespondingauthor{Shie Mannor}{shie@ee.technion.ac.il}

\vskip 0.3in
]

\printAffiliationsAndNotice{}

\begin{abstract}
In many RL applications, once training ends, it is vital to detect any deterioration in the agent performance as soon as possible.
Furthermore, it often has to be done without modifying the policy and under minimal assumptions regarding the environment.
In this paper, we address this problem by focusing directly on the rewards and testing for degradation.
We consider an episodic framework, where the rewards within each episode are not independent, nor identically-distributed, nor Markov.
We present this problem as a multivariate mean-shift detection problem with possibly partial observations.
We define the mean-shift in a way corresponding to deterioration of a temporal signal (such as the rewards), and derive a test for this problem with optimal statistical power.
Empirically, on deteriorated rewards in control problems (generated using various environment modifications), the test is demonstrated to be more powerful than standard tests -- often by orders of magnitude.
We also suggest a novel Bootstrap mechanism for False Alarm Rate control (BFAR), applicable to episodic (non-i.i.d) signal and allowing our test to run sequentially in an online manner.
Our method does not rely on a learned model of the environment, is entirely external to the agent, and in fact can be applied to detect changes or drifts in any episodic signal.
\end{abstract}


\section{Introduction}
\label{sec:intro}

Reinforcement learning (RL) algorithms have recently demonstrated impressive success in a variety of sequential decision-making problems~\citep{agent57,drl_improvements}.
While most RL works focus on the maximization of rewards under various conditions, a key issue in real-world RL tasks is the safety and reliability of the system~\citep{real_world_rl_challenges2,RL_reliability}, arising in both offline and online settings.

In {\bf offline settings}, comparing the agent performance in different environments is important for generalization (e.g., in sim-to-real and transfer learning).
The comparison may indicate the difficulty of the problem or help to select the right learning algorithms.
Uncertainty estimation, which could help to address this challenge, is currently considered a hard problem in RL, in particular for model-free methods~\citep{MOPO}.


In {\bf online settings}, where a fixed, already-trained agent runs continuously, its performance may be affected (gradually or abruptly) by changes in the controlled system or its surroundings, or when reaching unfamiliar states.
Some works address robustness to changes~\citep{nonstationary_mdp,context_awareness}, yet performance degradation is sometimes inevitable, and should be detected as soon as possible.
The detection allows us to fall back into manual control, send the agent to re-train, guide diagnosis, or even bring the agent to halt.
This problem is inherently different from robustness to changes during training: it focuses on safety and reliability, in post-training phase where intervention in the policy is limited or forbidden \citep{DeploymentEfficientRL}. It also operates in different time-scales: while training may take millions of episodes, changes should often be detected within tens of episodes, and critical failures -- within less than an episode.

Such post-training performance-awareness is essential for any autonomous system in risk-intolerant applications, such as autonomous driving and medical devices.
For example, when an autonomous car starts acting suspiciously with a passenger sitting inside, activating a training process and exploring for new policies is not an option. {\bf The priority is to notice the suspicious behavior as soon as possible}, so that it can be alerted in time to save lives.




\begin{figure*}
\centering
\begin{subfigure}{.35\textwidth}
  \centering
  \includegraphics[width=1.\linewidth]{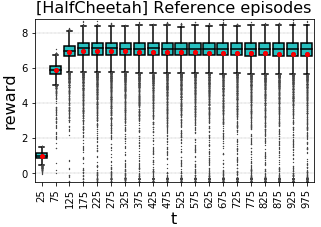}
  \caption{}
  \label{fig:cheetah_rewards}
\end{subfigure}%
\begin{subfigure}{.2\textwidth}
  \centering
  \includegraphics[width=1.\linewidth]{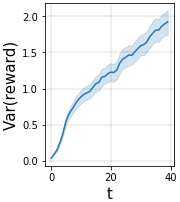}
  \caption{}
  \label{fig:cheetah_var}
\end{subfigure}
\begin{subfigure}{.35\textwidth}
  \centering
  \includegraphics[width=1.\linewidth]{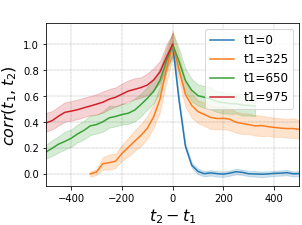}
  \caption{}
  \label{fig:cheetah_acf}
\end{subfigure}%
\caption{\footnotesize Properties of the rewards of a fixed agent in HalfCheetah, estimated over $N=10000$ episodes of $T=1000$ time-steps: (a) distribution of rewards per time-step; (b) variance per time-step; (c) correlation($t_1,t_2$) vs. $t_2-t_1$. The estimations are in resolution of 25 time-steps, i.e., every episode was split into 40 intervals of 25 consecutive steps, and each sample is the average over an interval.}
\label{fig:cheetah_params}
\end{figure*}

Many sequential statistical tests exist for detection of mean degradation in a random process.
However, common methods~\citep{CUSUM,alpha_spending,concept_drift} assume independent and identically distributed (i.i.d) samples, while in RL the feedback from the environment is usually both highly correlated over consecutive time-steps, and varies over the life-time of the task~\citep{autoregressive_policies}.
This is demonstrated in Fig.~\ref{fig:cheetah_params}. 

A possible solution is to apply statistical tests to large blocks of data assumed to be i.i.d~\citep{learning_in_nonstationary_env}.
This is particularly common in RL, where the episodic settings allow a natural blocks-partition (see for example \citet{rl_algos_comparison}).
However, this approach requires complete episodes for change detection, while a faster response is often required.
Furthermore, naively applying a statistical test on the accumulated feedback (e.g., sum of rewards) from complete episodes, ignores the dependencies within the episodes and misses vital information, leading to highly sub-optimal tests (as demonstrated in Section~\ref{sec:results}).

In this work, we devise an optimal test for detection of degradation of the rewards in an episodic RL task (or in any other episodic signal), based on the covariance structure within the episodes.
Even in absence of the assumptions that guarantee its optimality, the test is still asymptotically superior to the common approach of comparing the mean reward~\citep{rl_algos_comparison}.
The test can detect changes and drifts in both the offline and the online settings defined above.
Since tuning of the False Alarm Rate (FAR) of a sequential test usually relies on the underlying signal being i.i.d, we also suggest a novel Bootstrap mechanism for FAR control (BFAR) in sequential tests on episodic signals.
The suggested procedures rely on the ability to estimate the correlations within the episodes, e.g., through a "reference dataset" of episodes.

Since the test is applied directly to the rewards, it is model-free in the following senses: the underlying process is not assumed to be known, to be Markov, or to be observable at all (as opposed to other works, e.g., \citet{Quickest_change_detection_MDP}), and we require no knowledge about the process or the running policy.
Furthermore, as the rewards are simply referred to as episodic time-series, the test can be similarly applied to detect changes in any episodic signal.

We demonstrate the new procedures in the environments of Pendulum~\citep{Pendulum}, HalfCheetah and Humanoid~\citep{HalfCheetah,mujoco}.
BFAR is shown to successfully control the false alarm rate.
The suggested test detects degradation faster and more often than three alternative tests -- in certain cases by orders of magnitude.

The paper is organized as follows: Section~\ref{sec:setup} formulates the offline setup (individual tests) and the online setup (sequential tests).
Section~\ref{sec:optimization} defines the model of an episodic signal, and derives an optimal degradation-test for such a signal.
Section~\ref{sec:sequential_test} shows how to adjust the test for online settings and control the false alarm rate.
Section~\ref{sec:experiments} describes the experiments, and Section~\ref{sec:related_work} discusses related works. 

To the best of our knowledge, we are the first to exploit the covariance between rewards in post-training phase to test for changes in RL-based systems.
Our main contribution is an optimal test that can detect deterioration in agent rewards and other episodic signals reliably, in much shorter times than current standard tests.
We also suggest a novel bootstrap mechanism to control false alarm rate of such tests on episodic (non-i.i.d) data.
Finally, we lay a new framework for statistical tests on episodic signals, which opens the way for further research on this problem.


\section{Preliminaries}
\label{sec:preliminaries}

\paragraph{Reinforcement learning and episodic framework:}
A Reinforcement Learning (RL) problem is usually modeled as a sequential \textit{decision process}, where a learning \textit{agent} has to repeatedly make decisions that affect its future states and rewards.
The process is often organized as a finite sequence of time-steps (an \textit{episode}) that repeats multiple times in different variants, e.g., with different initial states.
Common examples are board and video games~\citep{gym}, as well as more realistic problems such as autonomous driving tasks.

Once the agent is fixed (which is the case in this work), the rewards of the decision process essentially reduce to a (decision-free) random process $\{X_t\}_{t=1}^{n}$, which can be defined by its PDF ($f_{\{X_t\}_{t=1}^{n}}: \mathbb{R}^n \rightarrow [0,\infty)$).
$\{X_t\}$ usually depend on each other: even in the popular \textit{Markov Decision Process}~\citep{MDP_Bellman}, where the dependence goes only a single step back, long-term correlations may still carry information if the states are not observable by the agent.

\paragraph{Hypothesis tests:}
Consider a parametric probability function $p(X|\theta)$ describing a random process, and consider two different hypotheses $H_0,H_A$ determining the value (\textit{simple hypothesis}) or allowed values (\textit{complex hypothesis}) of $\theta$.
When designing a test to decide between the hypotheses, the basic metrics for the test efficacy are its \textit{significance} $P(\text{not reject } H_0 | H_0 )=1-\alpha$ and its \textit{power} $P(\text{reject } H_0 | H_A )=\beta$.
A hypothesis test with significance $1-\alpha$ and power $\beta$ is \textit{optimal} if any test with as high significance $1-\tilde{\alpha} \ge 1-\alpha$ has smaller power $\tilde{\beta} \le \beta$.

The likelihood of the hypothesis $H: \theta \in \Theta$ given data $X$ is defined as $L(H|X) = \text{sup}_{\theta\in\Theta}p(X|\theta)$.
According to Neyman-Pearson lemma~\citep{NeymanPearson}, a threshold-test on the likelihood ratio $LR(H_0,H_A|X) = L(H_0|X) / L(H_A|X)$ is optimal. 
The threshold is uniquely determined by the desired significance level $\alpha$, though is often difficult to practically calculate given $\alpha$. 

In many practical applications, a hypothesis test is repeatedly applied as the data change or grow, a procedure known as a \textit{sequential test}.
If the null hypothesis $H_0$ is true, and any individual hypothesis test falsely rejects $H_0$ with some probability $\alpha$, then the probability that at least one of the multiple tests will reject $H_0$ is $\alpha_0 > \alpha$, termed \textit{family-wise type-I error} (or \textit{false alarm rate} when associated with frequency). See Appendix~\ref{sec:detailed_preliminaries} for more details about hypothesis testing and sequential tests in particular.

Common approaches for sequential tests, such as CUSUM~\citep{CUSUM,CUSUM_book} and $\alpha$-spending functions~\citep{alpha_spending,Pocock}, usually require strong assumptions such as independence or normality, as further discussed in Appendix~\ref{sec:detailed_related_work}.





\section{Problem Setup}
\label{sec:setup}

In this work, we consider two setups where detecting performance deterioration is important -- sequential degradation-tests and individual degradation-tests.
The individual tests, in addition to their importance in offline settings such as sim-to-real and transfer learning, are used in this work as building-blocks for the online sequential tests.

Both setups assume a fixed agent that was previously trained, and aim to detect whenever the agent performance begins to deteriorate, e.g., due to environment changes.
The ability to notice such changes is essential in many real-world problems, as explained in Section~\ref{sec:intro}. 

\begin{setup}[\bf Individual degradation-test]
\label{setup:ind}
\normalfont
We consider a fixed trained agent (policy must be fixed but is not necessarily optimal), whose rewards in an episodic environment (with episodes of length $T$) were previously recorded for multiple episodes (the \textit{reference dataset}).
The agent runs in a new environment for $n$ time-steps (both $n<T$ and $n\ge T$ are valid).
The goal is to decide whether the rewards in the new environment are smaller than the original environment or not.
If the new environment is identical, the probability of a false alarm must not exceed $\alpha$.
\end{setup}

\begin{setup}[\bf Sequential degradation-test]
\label{setup:seq}
\normalfont
As in Setup~\ref{setup:ind}, we consider a fixed trained agent with reference data of multiple episodes.
This time the agent keeps running in the same environment, and at a certain point in time its rewards begin to deteriorate, e.g., due to changes in the environment.
The goal is to alert to the degradation as soon as possible.
As long as the environment has not changed, the probability of a false alarm must not exceed $\alpha_0$ per $\tilde{h}$ episodes.
\end{setup}

Note that while in this work the setups focus on degradation, they can be easily modified to look for any change (as positive changes may also indicate the need for further training, for example).





\section{Optimization of Individual Tests}
\label{sec:optimization}

To tackle the problem of Setup~\ref{setup:ind}, we first define the properties of an episodic signal and the general assumptions regarding its degradation.

\begin{definition}[$T$-long episodic signal]
\label{def:episodic_signal}
Let $n,T \in \mathbb{N}$, and write $n=KT+\tau_0$ (for non-negative integers $K,\tau_0$ with $\tau_0\le T$).
A sequence of real-valued random variables $\{X_t\}_{t=1}^{n}$ is a \textit{$T$-long episodic signal}, if its joint probability density function can be written as
\begin{align}
\label{eq:episodic_dist}
\begin{split}
    f&_{\{X_t\}_{t=1}^{n}} (x_1,...,x_n) = \\
    &\left[ \prod_{k=0}^{K-1} f_{\{X_t\}_{t=1}^{T}}(\{x_{kT+t}\}_{t=1}^{T}) \right] \cdot f_{\{X_t\}_{t=1}^{\tau_0}}(\{x_{KT+t}\}_{t=1}^{\tau_0})
\end{split}
\end{align}
(where an empty product is defined as 1).
We further denote $\pmb{\mu_0} \coloneqq E[(X_1,...,X_T)^\top] \in\mathbb{R}^T$, $\Sigma_0 \coloneqq \text{Cov}((X_1,...,X_T)^\top$, $(X_1,...,X_T)) \in \mathbb{R}^{T \times T}$.
\end{definition}

Note that the episodic signal consists of i.i.d episodes, but is not assumed to be independent or identically-distributed within the episodes -- a setup particularly popular in RL.

In the analysis below we assume that both $\pmb{\mu_0}$ and $\Sigma_0$ are known.
This can be achieved either with detailed domain knowledge, or by estimation from the recorded reference dataset of Setup~\ref{setup:ind}, assuming it satisfies Eq.~\eqref{eq:episodic_dist}.
The estimation errors decrease as $\mathcal{O}(1/\sqrt{N})$ with the number $N$ of reference episodes, and are distributed according to the Central Limit Theorem (for means) and Wishart distribution~\citep{Wishart} (for covariance).
While in this work we use up to $N=10000$ reference episodes, Appendix~\ref{sec:cov_sensitivity} shows that $N=300$ reference episodes are sufficient for reasonable results in HalfCheetah, for example.
Note that correlations estimation has been already discussed in several other RL works~\citep{correlations_prior}.

Fig.~\ref{fig:cheetah_params} demonstrates the estimation of mean and covariance parameters for a trained agent in the environment of HalfCheetah, from a reference dataset of $N=10000$ episodes.
This also demonstrates the non-trivial correlations structure in the environment.
According to Fig.~\ref{fig:cheetah_var}, the variance in the rewards varies and does not seem to reach stationarity within the scope of an episode.
Fig.~\ref{fig:cheetah_acf} shows the autocorrelation function $ACF(t_2-t_1) = corr(t_1,t_2)$ for different reference times $t_1$. The correlations clearly last for hundreds of time-steps, and depend on the time $t_1$ rather than merely on the time-difference $t_2-t_1$. This means that the autocorrelation function is not expressive enough for the actual correlations structure.

Once the per-episode parameters $\pmb{\mu_0} \in \mathbb{R}^T,\Sigma_0 \in \mathbb{R}^{T\times T}$ are known, the mean $\pmb{\mu} \in \mathbb{R}^n$ and covariance $\Sigma \in \mathbb{R}^{n \times n}$ of the whole signal can be derived directly:
$\pmb{\mu}$ consists of periodic repetitions of $\pmb{\mu_0}$, and $\Sigma$ consists of copies of $\Sigma_0$ as $T\times T$ blocks along its diagonal. For both, the last repetition is cropped if $n$ is not an integer multiplication of $T$.
In other words, by taking advantage of the episodic setup, we can treat the temporal univariate non-i.i.d signal as a multivariate signal with easily-measured mean and covariance -- even if the signal ends in the middle of an episode.

The degradation in the signal $X=\{X_t\}_{t=1}^{n}$ is defined through the difference between two hypotheses.
The null hypothesis $H_0$ states that $X$ is a $T$-long episodic signal with expectations $\pmb{\mu_0} \in R^T$ and invertible covariance matrix $\Sigma_0 \in R^{T \times T}$.
Our first alternative hypothesis ($H_A$) -- uniform degradation -- states that $X$ is a $T$-long episodic signal with the same covariance $\Sigma_0$ but smaller expectations: $\exists \epsilon\ge\epsilon_0, \forall 1\le t\le T: (\pmb{\tilde{\mu}_0})_t = (\pmb{\mu_0})_t - \epsilon$.
Note that this hypothesis is complex ($\epsilon\ge\epsilon_0$), where $\epsilon_0$ can be tuned according to the minimal degradation magnitude of interest. In fact, Theorem~\ref{theorem:unif_optimality} shows that the optimal corresponding test is independent of the choice of $\epsilon_0$.

\begin{theorem}[Optimal test for uniform degradation]
\label{theorem:unif_optimality}
Define the \textit{uniform-degradation weighted-mean} $s_{unif}(X) \coloneqq W\cdot X$, where $W \coloneqq \pmb{1}^\top\cdot \Sigma^{-1} \in \mathbb{R}^n$ (and $\pmb{1}$ is the all-1 vector).
If the distribution of $X$ is multivariate normal, then a threshold-test on $s_{unif}$ is optimal.
\end{theorem}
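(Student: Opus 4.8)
The plan is to reduce the composite testing problem to a one-parameter family of simple-versus-simple tests, apply the Neyman--Pearson lemma to each, and then show the resulting most-powerful test is literally the same for every admissible $\epsilon$, so it is uniformly most powerful. First I would fix an arbitrary degradation magnitude $\epsilon\ge\epsilon_0$ and consider $H_0$ (full mean $\pmb{\mu}$) against the \emph{simple} alternative $H_A^\epsilon$ (full mean $\pmb{\mu}-\epsilon\pmb{1}$), both with the same covariance $\Sigma$; here $\pmb{\mu}\in\mathbb{R}^n$ and $\Sigma\in\mathbb{R}^{n\times n}$ are the periodic mean and block-diagonal covariance assembled from $\pmb{\mu_0},\Sigma_0$, and $\Sigma$ is invertible since it is block-diagonal with invertible blocks $\Sigma_0$. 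Because both hypotheses are Gaussian with a common covariance, I would write out the log-likelihood ratio $\log LR(X)=\log p(X\mid\pmb{\mu},\Sigma)-\log p(X\mid\pmb{\mu}-\epsilon\pmb{1},\Sigma)$ and expand the two quadratic forms. The pure quadratic terms in $X$ cancel, and using the symmetry of $\Sigma^{-1}$ the surviving $X$-dependent part collapses to $\epsilon\,\pmb{1}^\top\Sigma^{-1}X$, giving $\log LR(X)=\epsilon\,s_{unif}(X)+c(\epsilon)$ with $c(\epsilon)$ independent of $X$.

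The key observation is that $\log LR$ is a strictly increasing affine function of $s_{unif}(X)$ (the slope being $\epsilon>0$), so the likelihood-ratio threshold test guaranteed optimal by Neyman--Pearson is equivalent to a threshold test on $s_{unif}$, rejecting $H_0$ for small values of $s_{unif}(X)$. This also matches the intended direction of detection, since $E_{H_A^\epsilon}[s_{unif}]=\pmb{1}^\top\Sigma^{-1}\pmb{\mu}-\epsilon\,\pmb{1}^\top\Sigma^{-1}\pmb{1}$ lies below $E_{H_0}[s_{unif}]$ by the positive-definiteness of $\Sigma^{-1}$. By the Neyman--Pearson lemma this threshold test on $s_{unif}$ is most powerful for the simple pair $(H_0,H_A^\epsilon)$ at any prescribed significance $\alpha$.

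The remaining and only delicate step is to lift optimality from a single $\epsilon$ to the composite alternative $\epsilon\ge\epsilon_0$. The crucial point is that neither the statistic nor its null calibration depends on $\epsilon$: the statistic $s_{unif}$ is fixed, and under $H_0$ the scalar $s_{unif}(X)=\pmb{1}^\top\Sigma^{-1}X$ is Gaussian with mean $\pmb{1}^\top\Sigma^{-1}\pmb{\mu}$ and variance $\pmb{1}^\top\Sigma^{-1}\pmb{1}$, a null distribution in which $\epsilon$ does not appear. Hence the threshold calibrated to significance $\alpha$ is common to all $\epsilon\ge\epsilon_0$, so the one test that is most powerful against each $H_A^\epsilon$ separately is simultaneously most powerful against the entire family, i.e. uniformly most powerful; this is exactly the monotone-likelihood-ratio (Karlin--Rubin) situation, and it makes transparent the remark preceding the theorem that the optimal test does not depend on the choice of $\epsilon_0$. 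I expect the main obstacle to be phrasing this uniformity argument rigorously, so that the ``same statistic, same threshold'' reasoning is justified for a composite one-sided alternative rather than stretching the Neyman--Pearson lemma beyond its simple-hypothesis scope; the underlying Gaussian algebra is routine.
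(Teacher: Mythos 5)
Your proof is correct, but it takes a genuinely different route from the paper's. The paper treats $H_A^{unif}(\epsilon_0)$ as a single complex hypothesis, defines its likelihood via the supremum over $\epsilon\ge\epsilon_0$, and works with the generalized log-likelihood-ratio $\lambda_{LR}=\min_{\epsilon\ge\epsilon_0}\left[2\epsilon\,W\tilde{X}+\epsilon^2\,\pmb{1}^\top\Sigma^{-1}\pmb{1}\right]$; its key lemma solves this parabola-in-$\epsilon$ minimization, splitting at a breakpoint $s_0 = W\pmb{\mu}-\epsilon_0\,\pmb{1}^\top\Sigma^{-1}\pmb{1}$ into the boundary case $\epsilon=\epsilon_0$ (when $s_{unif}\ge s_0$) and an interior minimizer depending on $X$ only through $s_{unif}$ (when $s_{unif}\le s_0$), and then verifies strict monotonicity of $\lambda_{LR}$ in $s_{unif}$ on each piece before gluing. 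You instead fix each $\epsilon$, reduce to a simple-vs-simple Neyman--Pearson problem where $\log LR=\epsilon\,s_{unif}(X)+c(\epsilon)$ is affine with positive slope, and observe that since neither the statistic nor its null calibration involves $\epsilon$, the one test is simultaneously most powerful against every alternative in the family --- the standard Karlin--Rubin/UMP argument. Your route is cleaner and arguably more rigorous on the composite-alternative step: the Neyman--Pearson lemma strictly speaking covers simple hypotheses, and the paper's application of it to a sup-defined likelihood is exactly the point your uniformity argument makes airtight; it also explains transparently why the optimal test is independent of $\epsilon_0$, which the paper only remarks on. What the paper's heavier GLR computation buys is reusability: the same minimization machinery from its Lemma (the $2\epsilon\,a^\top\Sigma^{-1}\tilde{X}+\epsilon^2 a^\top\Sigma^{-1}a$ form) is the workhorse for the partial-degradation result (Theorem 4.2), where the minimum over subsets does \emph{not} collapse to an $\epsilon$-free statistic and only near-optimality up to $\mathcal{O}(\epsilon)$ survives --- so the case analysis you avoid here is not wasted effort in the paper's larger development.
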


\begin{proof}[Proof Sketch (see full proof in Appendix~\ref{app:deg_calculations})]
According to Neyman-Pearson lemma \citep{NeymanPearson}, a threshold-test on the likelihood-ratio (LR) between $H_0$ and $H_A$ is optimal. Since $H_A$ is complex, the LR is a minimum over $\epsilon\in[\epsilon_0,\infty)$. Lemma~\ref{lemma:unif_optimality} shows that $\exists s_0: s_{unif}\ge s_0 \Rightarrow \epsilon=\epsilon_0$ and $s_{unif}\le s_0 \Rightarrow \epsilon=\epsilon(s_{unif})$.
The rest of the proof substitutes $\epsilon$ in both domains of $s_{unif}$ to prove monotony of the LR in $s_{unif}$, from which we can conclude monotony in $s_{unif}$ over all $\mathbb{R}$.
\end{proof}

Following Theorem~\ref{theorem:unif_optimality}, we define the Uniform Degradation Test ({\bf UDT}) to be a threshold-test on $s_{unif}$, i.e., "declare a degradation if $s_{unif}<\kappa$" for a pre-defined $\kappa$.
If the weights are calculated in advance, $s_{unif}$ can be calculated in $\mathcal{O}(n)$ time, and updated in $\mathcal{O}(1)$ with every new sample.

Recall that test optimality is defined in Section~\ref{sec:preliminaries} as having maximal power per significance level.
To achieve the significance $\alpha$ required in Setup~\ref{setup:ind}, we apply a bootstrap mechanism that randomly samples episodes from the reference data and calculates the corresponding statistic (e.g., $s_{unif}$). This yields a bootstrap-estimate of the statistic's distribution under $H_0$, and the $\alpha$-quantile of the estimated distribution is chosen as the test-threshold ($\kappa = q_\alpha(s_{unif}|H_0)$).

$H_A$ is intended for degradation in a temporal signal, and derives a different optimal statistic than standard mean-change tests in multivariate variables (e.g., Hotelling).
In Section~\ref{sec:experiments}, this is indeed demonstrated to be more powerful for rewards degradation.
Also note that by explicitly referring to the temporal dimension, we allow detections even before the first episode is completed.

Theorem~\ref{theorem:unif_optimality} relies on multivariate normality assumption, which is often too strong for real-world applications.
Theorem~\ref{theorem:unif_power} guarantees that if we remove the normality assumption, it is still beneficial to look into the episodes instead of considering them as atomic blocks; that is, UDT is still asymptotically better than a test on the simple mean $s_{simp}=\sum_{t=1}^n X_t / n$.
Note that "asymptotic" refers to the signal length $n \rightarrow \infty$ (while $T$ remains constant), and is translated in the sequential setup into a "very long lookback-horizon $h$" (rather than very long running time).

\begin{theorem}[Asymptotic power of UDT]
\label{theorem:unif_power}
Denote the length of the signal $n=K\cdot T$, assume a uniform degradation of size $\frac{\epsilon}{\sqrt{K}}$, and let two threshold-tests $\tau_{simp}$ on $s_{simp}$ and UDT on $s_{unif}$ be tuned to have significance $\alpha$.
Then
\begin{align}
\label{eq:udt_power}
\begin{split}
    &\lim_{K\rightarrow \infty} P\left( \tau_{simp} \text{ rejects } \big| H_A \right) = \Phi\left(q_\alpha^0 + \frac{\epsilon T}{\sqrt{\pmb{1}^\top \Sigma_0 \pmb{1}}}\right) \\
    &\le \Phi\left(q_\alpha^0 + \epsilon \sqrt{\pmb{1}^\top \Sigma_0^{-1} \pmb{1}}\right) = \lim_{K\rightarrow \infty} P\left( \text{UDT rejects } \big| H_A \right)
\end{split}
\end{align}
where $\Phi$ is the CDF of the standard normal distribution, and $q_\alpha^0$ is its $\alpha$-quantile.
\end{theorem}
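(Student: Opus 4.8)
The plan is to express each test statistic as a sum of $K$ independent, identically-distributed per-episode contributions and then invoke the Central Limit Theorem; this is what replaces the normality assumption used in Theorem~\ref{theorem:unif_optimality}, and it is precisely why the guarantee here is only asymptotic in $K$. Writing $X^{(k)}\in\mathbb{R}^T$ for the rewards of the $k$-th episode, we have $s_{simp}=\frac{1}{KT}\sum_{k=1}^K \pmb{1}^\top X^{(k)}$. For the UDT statistic, the block-diagonal structure of $\Sigma$ (a diagonal of $K$ copies of $\Sigma_0$) makes $\Sigma^{-1}$ block-diagonal with $K$ copies of $\Sigma_0^{-1}$, so $s_{unif}=\pmb{1}^\top\Sigma^{-1}X=\sum_{k=1}^K \pmb{1}^\top\Sigma_0^{-1}X^{(k)}$. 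In both cases the summands are i.i.d.\ with finite variance (using invertibility of $\Sigma_0$), so each statistic is asymptotically normal under both hypotheses, and the variance is the same under $H_0$ and $H_A$ because $H_A$ shifts only the mean. The relevant per-episode moments are, under $H_0$: mean $\pmb{1}^\top\pmb{\mu_0}$ and variance $\pmb{1}^\top\Sigma_0\pmb{1}$ for the simple sum, and mean $\pmb{1}^\top\Sigma_0^{-1}\pmb{\mu_0}$ and variance $\pmb{1}^\top\Sigma_0^{-1}\Sigma_0\Sigma_0^{-1}\pmb{1}=\pmb{1}^\top\Sigma_0^{-1}\pmb{1}$ for the weighted sum.

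Next I would compute the two limiting powers. Tuning each test to significance $\alpha$ means its (lower-tail) rejection region, written in terms of the statistic standardized under $H_0$, becomes $\{Z<q_\alpha^0\}$ in the limit. Under the local alternative the mean of each per-episode contribution decreases by $\frac{\epsilon}{\sqrt{K}}\pmb{1}^\top\pmb{1}=\frac{\epsilon T}{\sqrt{K}}$ for the simple sum and by $\frac{\epsilon}{\sqrt{K}}\pmb{1}^\top\Sigma_0^{-1}\pmb{1}$ for the weighted sum. The key point is that the $\sqrt{K}$ in the alternative cancels the $\sqrt{K}$ from the CLT normalization of a sum of $K$ terms, leaving a finite non-centrality as $K\to\infty$: the standardized mean-shift equals $\frac{\epsilon T}{\sqrt{\pmb{1}^\top\Sigma_0\pmb{1}}}$ for $\tau_{simp}$ and $\epsilon\sqrt{\pmb{1}^\top\Sigma_0^{-1}\pmb{1}}$ for UDT. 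Since the variance is unchanged under $H_A$, the limiting power of each test is $\Phi$ evaluated at $q_\alpha^0$ plus its non-centrality, yielding exactly the two expressions in Eq.~\eqref{eq:udt_power}.

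It then remains to prove the inequality between the two arguments, which is the conceptual crux. Setting $a\coloneqq\Sigma_0^{1/2}\pmb{1}$ and $b\coloneqq\Sigma_0^{-1/2}\pmb{1}$, we get $a^\top a=\pmb{1}^\top\Sigma_0\pmb{1}$, $b^\top b=\pmb{1}^\top\Sigma_0^{-1}\pmb{1}$, and $a^\top b=\pmb{1}^\top\pmb{1}=T$. Cauchy--Schwarz then gives $T^2=(a^\top b)^2\le(a^\top a)(b^\top b)=(\pmb{1}^\top\Sigma_0\pmb{1})(\pmb{1}^\top\Sigma_0^{-1}\pmb{1})$, which rearranges to $\frac{\epsilon T}{\sqrt{\pmb{1}^\top\Sigma_0\pmb{1}}}\le\epsilon\sqrt{\pmb{1}^\top\Sigma_0^{-1}\pmb{1}}$; monotonicity of $\Phi$ converts this into the claimed ordering of powers. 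I expect the main obstacle to be bookkeeping rather than depth: carefully tracking the $\sqrt{K}$ factors through the standardization and confirming that the variance is genuinely identical under $H_0$ and $H_A$ so that the non-centrality computation is clean; once the statistics are written as i.i.d.\ episode sums, the CLT and the Cauchy--Schwarz step are routine.
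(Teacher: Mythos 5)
Your proof is correct and follows essentially the same route as the paper: decomposing $s_{simp}$ and $s_{unif}$ into i.i.d.\ per-episode sums via the block-diagonal structure of $\Sigma$, applying the CLT so the $\sqrt{K}$ in the local alternative cancels the CLT normalization, and establishing the inequality by Cauchy--Schwarz applied to $\Sigma_0^{1/2}\pmb{1}$ and $\Sigma_0^{-1/2}\pmb{1}$ (the paper's Lemma~\ref{lemma:unif_properties} computes $\mathrm{Var}(s_{unif})=K\pmb{1}^\top\Sigma_0^{-1}\pmb{1}$ by an index-level calculation, but this is the same computation as your block-wise one). Your handling of the threshold tuning -- arguing that the standardized rejection region converges to $\{Z<q_\alpha^0\}$ under $H_0$ -- matches the role of Proposition~\ref{prop:unif_test_consistency} in the paper.
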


\begin{proof}[Proof Sketch (see full proof in Appendix~\ref{app:deg_calculations})]
Since the episodes of the signal are i.i.d, both $s_{simp}$ and $s_{unif}$ are asymptotically normal according to the Central Limit Theorem.
The means and variances of both statistics are calculated in Lemma~\ref{lemma:unif_properties}.
Calculation of the variance of $s_{unif}$ relies on writing $s_{unif}$ as a sum of linear transformations of $X$ ($s_{unif}=\sum_{i=1}^n (\Sigma^{-1})_{i}X$), and using the relation between $\Sigma$ and $\Sigma_0$.
The inequality between the resulted powers is shown to be equivalent to a matrix-form of the means-inequality, and is proved using Cauchy-Schwarz inequality for $\Sigma_0^{-1/2}\pmb{1}$ and $\Sigma_0^{1/2}\pmb{1}$.
\end{proof}

Motivated by Theorem~\ref{theorem:unif_power}, we define $G^2 \coloneqq \frac{(\pmb{1}^\top \Sigma_0^{-1} \pmb{1})(\pmb{1}^\top \Sigma_0 \pmb{1})}{T^2}$ to be the asymptotic power gain of UDT, quantify it, and show that it increases with the heterogeneity of the spectrum of $\Sigma_0$.
In particular, if the rewards are heterogeneous, the suggested test is guaranteed to detect uniform degradation with much higher probability than the standard mean-test.

\begin{proposition}[Asymptotic power gain]
\label{prop:unif_power_gain}
$G^2 = 1 + \sum_{i,j=1}^T w_{ij}(\lambda_i-\lambda_j)^2$,
where $\{\lambda_i\}_{i=1}^T$ are the eigenvalues of $\Sigma_0$ and $\{w_{ij}\}_{i,j=1}^T$ are positive weights.
\end{proposition}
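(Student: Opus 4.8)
The plan is to diagonalize $\Sigma_0$ and reinterpret the two quadratic forms $\pmb{1}^\top \Sigma_0 \pmb{1}$ and $\pmb{1}^\top \Sigma_0^{-1} \pmb{1}$ as weighted averages of the eigenvalues $\{\lambda_i\}$ and their reciprocals under one common probability distribution. Once $G^2$ is written as a product of two such averages, the claimed identity will fall out of a short symmetrization argument together with an elementary algebraic identity.

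First I would use the spectral decomposition $\Sigma_0 = Q\Lambda Q^\top$, where $Q$ is orthogonal with columns equal to the eigenvectors of $\Sigma_0$ and $\Lambda = \mathrm{diag}(\lambda_1,\dots,\lambda_T)$; this is available since $\Sigma_0$ is a symmetric covariance matrix, and it is positive definite because it is assumed invertible, so $\lambda_i > 0$ for all $i$. Setting $v \coloneqq Q^\top \pmb{1}$, orthogonality gives $\sum_{i=1}^T v_i^2 = \pmb{1}^\top Q Q^\top \pmb{1} = \pmb{1}^\top\pmb{1} = T$, so the quantities $p_i \coloneqq v_i^2/T$ form a probability vector ($p_i \ge 0$, $\sum_i p_i = 1$). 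Then $\pmb{1}^\top \Sigma_0 \pmb{1} = v^\top \Lambda v = T\sum_i p_i \lambda_i$ and $\pmb{1}^\top \Sigma_0^{-1}\pmb{1} = v^\top \Lambda^{-1} v = T\sum_i p_i/\lambda_i$, whence the two factors of $T$ cancel the $T^2$ in the denominator and $G^2 = \left(\sum_{i} p_i \lambda_i\right)\left(\sum_{j} p_j/\lambda_j\right)$.

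Next I would expand this product as the double sum $\sum_{i,j} p_i p_j \,\lambda_i/\lambda_j$ and symmetrize it by averaging with its copy under the relabeling $i \leftrightarrow j$, obtaining $\tfrac{1}{2}\sum_{i,j} p_i p_j\left(\lambda_i/\lambda_j + \lambda_j/\lambda_i\right)$. Applying the identity $\lambda_i/\lambda_j + \lambda_j/\lambda_i = 2 + (\lambda_i - \lambda_j)^2/(\lambda_i\lambda_j)$ and using $\sum_{i,j} p_i p_j = \left(\sum_i p_i\right)^2 = 1$ to collect the constant part, I arrive at
\[
G^2 = 1 + \sum_{i,j=1}^T w_{ij}(\lambda_i - \lambda_j)^2, \qquad w_{ij} = \frac{p_i p_j}{2\lambda_i\lambda_j} = \frac{v_i^2\,v_j^2}{2T^2\,\lambda_i\lambda_j}.
\]
The positivity of the weights is then immediate: $\lambda_i,\lambda_j > 0$ by positive-definiteness and $v_i^2, v_j^2 \ge 0$, so each $w_{ij} \ge 0$ (and the diagonal terms drop out of the sum since $(\lambda_i-\lambda_i)^2 = 0$).

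I do not expect a genuine obstacle here; the proposition is essentially an exact, finite-dimensional version of the Kantorovich-type inequality $E[\lambda]\,E[1/\lambda] \ge 1$, and the only steps requiring care are recognizing that $v = Q^\top\pmb{1}$ induces a legitimate probability distribution (via the orthogonality identity $\sum_i v_i^2 = T$) and carrying out the symmetrization cleanly. The single mild subtlety to flag is that the identity holds for \emph{any} symmetric positive-definite $\Sigma_0$ with no appeal to the episodic structure, so the interpretation as an ``asymptotic power gain'' — together with the observation that $G^2$ grows with the heterogeneity of the spectrum — follows simply by reading off that the excess over $1$ is a nonnegative weighted sum of squared eigenvalue gaps.
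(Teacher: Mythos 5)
Your proof is correct and takes essentially the same route as the paper's: spectral decomposition of $\Sigma_0$, expansion of $G^2 = \frac{1}{T^2}\sum_{i,j}(v_i v_j)^2\,\lambda_i/\lambda_j$, symmetrization over $i\leftrightarrow j$, and the identity $\lambda_i/\lambda_j+\lambda_j/\lambda_i = 2+(\lambda_i-\lambda_j)^2/(\lambda_i\lambda_j)$, yielding the identical weights $w_{ij}=(v_i v_j)^2/(2T^2\lambda_i\lambda_j)$ (the probability-vector framing $p_i=v_i^2/T$ is only cosmetic). Your observation that the weights are merely nonnegative, since some $v_i$ may vanish, is in fact more careful than the paper's justification of strict positivity (which wrongly asserts that a row of an orthogonal matrix cannot sum to zero), and nonnegativity is all that the heterogeneity interpretation requires.
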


\begin{proof}[Proof Sketch (see full proof in Appendix~\ref{app:deg_calculations})]
The result can be calculated after diagonalization of $\Sigma_0$, and the weights $\{w_{ij}\}$ are derived from the diagonalizing matrix.
\end{proof}

\begin{figure}
  \begin{center}
    \includegraphics[width=0.8\linewidth]{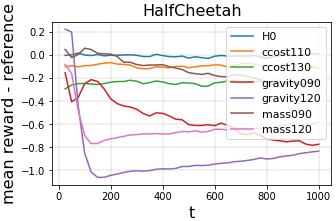}
  \end{center}
  \caption{\footnotesize Rewards degradation of a fixed agent in HalfCheetah following changes in gravity, mass, and control-cost, over $N=5000$ episodes per scenario.}
  \label{fig:cheetah_degradation}
\end{figure}

So far we assumed uniform degradation.
In the context of RL, such a model may refer to changes in constant costs or action costs, as well as certain dynamics whose change influences various states in a similar way.
Fig.~\ref{fig:cheetah_degradation} demonstrates the empiric degradation in the rewards of a fixed agent in HalfCheetah, following changes in gravity, mass and control-cost.
It seems that some modifications indeed cause a quite uniform degradation, while in others the degradation is mostly restricted to certain ranges of time. 

To model effects that are less uniform in time we suggest a partial degradation hypothesis, where some (unknown) entries of $\pmb{\mu_0}$ are reduced by $\epsilon>0$, and others do not change.
The number $m=p\cdot T$ of the reduced entries is defined by a parameter $p\in (0,1)$. 


This time, calculation of the optimal test-statistic through the LR yields a minimum over $\binom{T}{m}$ possible subsets of decreased entries, which is computationally heavy.
However, Theorem~\ref{theorem:part_optimality} shows that if we optimize for small values of $\epsilon$ (where optimality is indeed most valuable), a near-optimal statistic is $s_{part}$, which is the sum of the $m=p\cdot T$ smallest time-steps of $(X-\pmb{\mu})$ after a $\Sigma^{-1}$-transformation (see formal definition in Definition~\ref{def:part_weighted_mean}). The resulted time-complexity is $\mathcal{O}(nT)$.
We define the Partial Degradation Test ({\bf PDT}) as a threshold-test on $s_{part}$ with a parameter $p$.

\begin{theorem}[Near-optimal test for uniform degradation]
\label{theorem:part_optimality}
Assume that $X$ is multivariate normal, and let $P_\alpha$ be the maximal power of a hypothesis test with significance $1-\alpha$.
The power of a threshold-test on $s_{part}$ with significance $1-\alpha$ is $P_\alpha - \mathcal{O}(\epsilon)$.
\end{theorem}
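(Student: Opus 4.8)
The plan is to identify, to first order in $\epsilon$, the exact Neyman--Pearson statistic for the composite partial-degradation alternative with $s_{part}$, and then to convert this statistic-level proximity into a power bound. First I would fix an admissible degraded set $S$ (a periodic subset marking the $m=pT$ reduced entries per episode, with indicator $\mathbf 1_S$) and, under multivariate normality, complete the square in the per-subset log-likelihood ratio. Writing $Z\coloneqq\Sigma^{-1}(X-\pmb\mu)$, the mean shift $\pmb\mu\mapsto\pmb\mu-\epsilon\mathbf 1_S$ gives
\[
\ell_S(X)\coloneqq\log\frac{p(X\mid S,\epsilon)}{p(X\mid H_0)}=-\epsilon\,\mathbf 1_S^\top Z-\tfrac{\epsilon^2}{2}\,\mathbf 1_S^\top\Sigma^{-1}\mathbf 1_S,
\]
and, since $H_A$ is composite over $S$, the likelihood-ratio statistic (whose threshold test attains the maximal power $P_\alpha$) is $\ell(X)=\max_S\ell_S(X)$. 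The structural observation driving the whole argument is that the leading, linear term is maximised by placing $S$ on the $m$ coordinates with the smallest values of $Z$, so that $\max_S\big(-\epsilon\,\mathbf 1_S^\top Z\big)=-\epsilon\min_{|S|=m}\sum_{i\in S}Z_i=-\epsilon\,s_{part}$ by the definition of $s_{part}$.

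Next I would bound the quadratic term uniformly in $S$. Because $\Sigma^{-1}\succ 0$, we have $0\le \mathbf 1_S^\top\Sigma^{-1}\mathbf 1_S\le C\coloneqq\max_{|S|=m}\mathbf 1_S^\top\Sigma^{-1}\mathbf 1_S<\infty$ for every admissible $S$. Comparing the true maximiser $S^\star$ of $\ell$ with the minimiser $\tilde S$ of the linear part then sandwiches $\ell(X)$ between $-\epsilon\,s_{part}-\tfrac{C}{2}\epsilon^2$ and $-\epsilon\,s_{part}$, i.e. $\ell(X)=-\epsilon\,s_{part}+\rho(X)$ with $|\rho(X)|\le\tfrac{C}{2}\epsilon^2$ deterministically. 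This uniform remainder control is the main obstacle: the maximum runs over the exponentially many subsets, and one must guarantee that replacing the exact maximiser by the purely linear one — which is what turns the combinatorial $\binom{T}{m}$ search into the $\mathcal O(nT)$ order-statistic computation of $s_{part}$ — costs only $\mathcal O(\epsilon^2)$ in the statistic.

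Finally I would transfer this to power. Since $\ell=\mathcal O(\epsilon)$, the optimal threshold $c_\alpha$ tuned to significance $1-\alpha$ is itself $\mathcal O(\epsilon)$, and the sandwich yields
\[
\big\{s_{part}\le -c_\alpha/\epsilon-\tfrac{C}{2}\epsilon\big\}\subseteq\{\ell\ge c_\alpha\}\subseteq\big\{s_{part}\le -c_\alpha/\epsilon\big\},
\]
so the maximal-power test is a threshold test on $s_{part}$ whose boundary is pinned to an interval of width $\mathcal O(\epsilon)$ around the fixed quantile $-c_\alpha/\epsilon$. I would then note that $s_{part}$, being a sum of order statistics of the nondegenerate Gaussian vector $Z$, admits a density that is bounded near that quantile under both $H_0$ and $H_A$; matching the significance fixes the $s_{part}$-threshold $\kappa_\alpha$ to within $\mathcal O(\epsilon)$ of $-c_\alpha/\epsilon$, and the resulting anti-concentration bound gives $P_{H_A}\big(\{s_{part}\le\kappa_\alpha\}\,\triangle\,\{\ell\ge c_\alpha\}\big)=\mathcal O(\epsilon)$. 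Subtracting this symmetric-difference probability from $P_\alpha=P_{H_A}(\ell\ge c_\alpha)$ gives the claimed power $P_\alpha-\mathcal O(\epsilon)$. The secondary difficulty is precisely this last conversion: statistic-proximity alone is not enough, and it requires both the anti-concentration (bounded density) of $s_{part}$ and a careful matching of the two thresholds.
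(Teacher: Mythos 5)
Your proposal follows essentially the same route as the paper's proof in Appendix~\ref{app:deg_calculations}: compute the composite log-likelihood ratio, observe that its linear part equals (up to the factor $-\epsilon$) exactly $s_{part}$ with the quadratic term bounded uniformly over the periodic subsets (the paper invokes a generic min-perturbation lemma giving a $3\epsilon G$ bound, while your positivity-based sandwich $-\epsilon s_{part}-\tfrac{C}{2}\epsilon^2\le\ell\le-\epsilon s_{part}$ is a slightly sharper one-sided version of the same step), then match the $\alpha$-quantiles to within $\mathcal{O}(\epsilon)$ and use boundedness of the density of the minimum-of-Gaussians statistic to convert statistic proximity into an $\mathcal{O}(\epsilon)$ power loss. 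The argument is correct, including the subtlety the paper also handles -- that the subset must be chosen consistently across episodes, which your periodic $S$ and appeal to the definition of $s_{part}$ via the per-time-step sums take care of -- so nothing essential is missing.
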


\begin{proof}[Proof Sketch]
The expression to be minimized is shown to be the sum of two terms. One term is the sum of a subset of entries of $\Sigma^{-1}(X-\pmb{\mu})$, which is minimized by simply taking the lowest entries (up to the constraint of consistency across episodes, which requires us to sum the rewards per time-step in advance).
In Appendix~\ref{app:deg_calculations} we bound the second term and its effects on the modified statistic and on the modified test-threshold. We show that the resulted decrease of rejection probability is $\mathcal{O}(\epsilon)$.
\end{proof}




\section{Bootstrap for False Alarm Rate Control (BFAR)} 
\label{sec:sequential_test}

For Setup~\ref{setup:seq}, we suggest a sequential testing procedure:
run an individual test every $d$ steps (i.e., $F=T/d$ test-points per episode), and return once any individual test declares a degradation.
The tests can run according to Section~\ref{sec:optimization}, applied on the $h$ recent episodes.
Multiple tests may be applied every test-point, e.g., with varying test-statistics $\{s\}$ or lookback-horizons $\{h\}$.
This procedure, as implemented for the experiments of Section~\ref{sec:experiments}, is described in Fig.~\ref{fig:sequential_setup}.

Setup~\ref{setup:seq} limits the probability of a false alarm to $\alpha_0$ in a run of $\tilde{h}$ episodes.
To satisfy this condition, we set a uniform threshold $\kappa$ on the $p$-values of the individual tests (i.e., declare once a test returns $p\text{-val}<\kappa$).
The threshold is determined using a Bootstrap mechanism for False Alarm control ({\bf BFAR}, Algorithm~\ref{algo:sequential_bootstrap}).

While bootstrap methods for false alarm control are quite popular, they often rely on the data samples being i.i.d~\citep{MaxSPRT,nonparametric_sprt}, which is crucial for the re-sampling to reliably mimic the source of the signal.
To address the non-i.i.d signal, we take advantage of the episodic framework and sample whole episodes.
We then use the re-sampled sequence to simulate tests on sub-sequences where the first and last episodes may be incomplete, as described below.
This allows simulation of sequences of various lengths (including non-integer number of episodes) without assuming independence, normality, or identical distributions within the episodes.

\begin{algorithm}
\caption{BFAR: Bootstrap for FAR control}
\label{algo:sequential_bootstrap}
\SetAlgoLined
 {\bf Input}: reference dataset $x\in \mathbb{R}^{N\times T}$; statistic functions $\{s\}$; lookback-horizons $\{h_1,...,h_{max}\}$; test length $\tilde{h}\in \mathbb{N}$; bootstrap repetitions $B\in\mathbb{N}$; desired significance $\alpha_0\in(0,1)$\;
 {\bf Output}: test threshold for individual tests\;

 Initialize $P = (1,...,1)\in [0,1]^{B}$\;
 \For{b in 1:$B$}{
  Initialize $Y \in \mathbb{R}^{(h_{max}+\tilde{h})T}$\;
  
  \For{k in 0:($h_{max}$+$\tilde{h}$-1)}{
   Sample $j$ uniformly from $(1,...,N)$\;
   $Y[kT+1:kT+T] \leftarrow (x_{j1},...,x_{jT})$\;
  }
  
  \For{$t$ in test-points}{
    \For{h in \text{lookback-horizons} and s in \text{statistic functions}}{
     $y \leftarrow Y[t-hT : t]$\;
     $p \leftarrow \text{individual\_test\_pvalue}(y\text{ vs. }x; s)$\\
     $P[b] \leftarrow \text{min}(P[b], p)$\;
   }
  }
 }
 Return $quantile_{\alpha_0}(P)$\;
\end{algorithm}

BFAR samples $h_{max}+\tilde{h}$ episodes (where $h_{max}$ is the maximal lookback-horizon) from reference data of $N$ episodes, to simulate sequential data $Y$.
Then individual tests are simulated for any test-point along $\tilde{h}$ episodes, starting after $h_{max}$ episodes.
The minimal $p$-value determines whether a detection would occur in $Y$.
The whole procedure repeats $B$ times, creating a bootstrap estimate of the distribution of the minimal $p$-value along $\tilde{h}$ episodes.
We choose the tests threshold to be the $\alpha_0$-quantile of this distribution, such that $\alpha_0$ of the bootstrap simulations would raise a false alarm.

Note that the statistic for the tests is given to BFAR as an input, making its choice independent of BFAR.
BFAR can run in an offline manner (e.g., a single run before the deployment of the agent). It takes $\mathcal{O}(BF\tilde{h}\tilde{T})$ time, where $\tilde{T}$ is the time of a single update of all the test-statistics.
Additional details are discussed in Appendices~\ref{sec:detailed_sequential_test},\ref{sec:algorithms}.


\section{Experiments}
\label{sec:experiments}

\subsection{Methodology}
\label{sec:methodology}

We run experiments in standard RL environments as described below.
For each environment, we train an agent using the PyTorch version~\citep{pytorchrl} of OpenAI's baseline~\citep{openai_baselines} of A2C algorithm~\citep{A3C}.
We let the trained agent run in the environment for $N_0$ episodes and record its rewards, considered the \textit{trusted reference data}.
We then define several scenarios, and let the agent run for $M \times N$ episodes in each scenario (divided later into $M=100$ blocks of $N$ episodes).
One scenario is named $H_0$ and is identical to the reference up to the random initial-states. The other scenarios are defined per environment, and present environmental changes expected to harm the agent's rewards.
The agent is \textit{not} trained to adapt to these changes, and the goal is to test how long it takes for a degradation-test to detect the degradation.

Individual degradation-tests of length $n$ (Setup~\ref{setup:ind}) are applied for every scenario over the first $n$ time-steps of each block.
Sequential degradation-tests (Setup~\ref{setup:seq}) are applied sequentially over the episodes of each block.
Since the agent is assumed to run continuously as the environment changes from $H_0$ to an alternative scenario, each block is preceded by a random sample of $H_0$ episodes, as demonstrated in Fig.~\ref{fig:sequential_setup}.



\begin{figure*}
\centering
\includegraphics[width=0.7\linewidth]{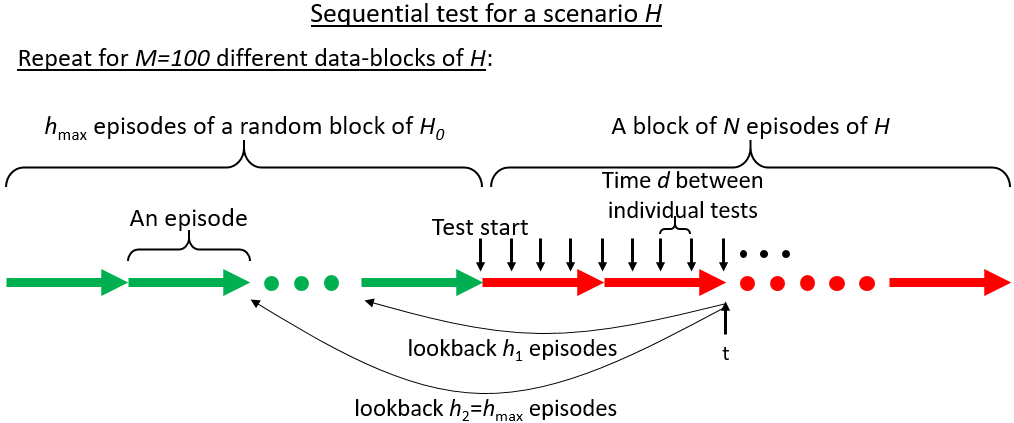}
\caption{\footnotesize A summary of the sequential degradation-test procedure described in Section~\ref{sec:methodology}.}
\label{fig:sequential_setup}
\end{figure*}

\begin{table}[b]
\vspace{-5pt}
\centering
\caption{Environments parameters \\ \footnotesize{(episode length ($T$), reference episodes ($N_0$), test blocks ($M$), episodes per block ($N$), sequential test length ($\tilde{h}$), lookback horizons ($h_1,h_2$), tests per episode ($F=T/d$))}}
\label{tab:envs}
\begin{tabular}{|l|c|c|c|c|c|c|}
\hline
Environment & $T$ & $N_0$ & $M$ & $N=\tilde{h}$ & $h_{1,2}$ & $F$ \\
\hline\hline
Pendulum & 200 & 3e3 & 100 & 30 & 3,30 & 20 \\
\hline
HalfCheetah & 1000 & 1e4 & 100 & 50 & 5,50 & 40 \\
\hline
Humanoid & 200 & 5e3 & 100 & 30 & 3,30 & 10 \\
\hline
\noalign{\smallskip}
\end{tabular}
\end{table}

\begin{figure*}[!b]
    \centering
    \includegraphics[width=1.\linewidth]{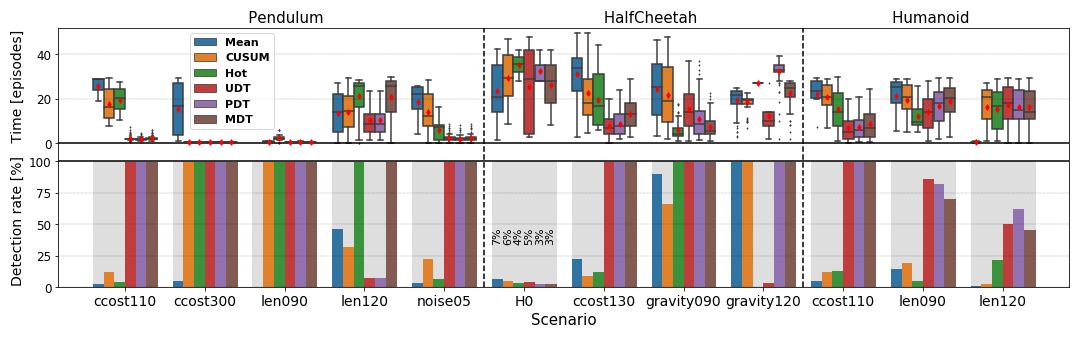}
    \caption{\footnotesize {\bf Bottom}: percent of sequential tests that ended with degradation detection (high is good), over $M=100$ runs with different seeds, for 3 standard tests and 3 variants of our test ({\bf UDT}, {\bf PDT} and {\bf MDT}), in a sample of scenarios in Pendulum, HalfCheetah and Humanoid.
    {\bf Top}: time until detection (low is good) -- for the runs that ended with detection.
    The significance of the tests is shown for HalfCheetah in $H_0$ scenario (and for Pendulum and Humanoid as well in Fig.~\ref{fig:H0} in Appendix~\ref{sec:detailed_results}).
    }
    \label{fig:seq_res}
\end{figure*}

BFAR adjusts the tests thresholds to have a false alarm with probability $\alpha_0=5\%$ per $\tilde{h}=N$ episodes (where $N$ is the data-block size).
Two lookback-horizons $h_1,h_2$ are chosen for every environment.
The rewards are downsampled by a factor of $d$ before applying the tests, intended to reduce the parameters estimation error.
Table~\ref{tab:envs} summarizes the setup of the various environments.

The experimented degradation-tests are a threshold-test on the simple {\bf Mean}; {\bf CUSUM}~\citep{CUSUM_book}; {\bf Hotelling}~\citep{Hotelling}; {\bf UDT} and {\bf PDT} (with $p=0.9$) from Section~\ref{sec:optimization}; and a Mixed Degradation Test ({\bf MDT}) that runs Mean, Hotelling and PDT in parallel -- applying all three in every test-point (as permitted in Algorithm~\ref{algo:sequential_bootstrap}).
All the degradation-tests are tuned according to the same reference data.
Further implementation details are discussed in Appendix~\ref{sec:experiments_implementation}.




\subsection{Results}
\label{sec:results}

We run the tests in the environments of Pendulum~\citep{Pendulum}, where the goal is to keep a pendulum pointing upwards; HalfCheetah~\citep{mujoco}, where the goal is for a 2D cheetah to run as fast as possible; and Humanoid, where the goal is for a person to walk without falling.
In each environment we define the scenario \textit{ccostx} of control cost increased to x\% of its original value, as well as changed-dynamics scenarios specified in Appendix~\ref{sec:experiments_implementation}.


In all the environments the rewards are clearly \textit{not} independent, identically-distributed or normal (see Fig.~\ref{fig:cheetah_params} for example).
Yet the false alarm rates are close to $\alpha_0=5\%$ per $\tilde{h}$ episodes in all the tests, as demonstrated in Fig.~\ref{fig:seq_res} (and in more details in Fig.~\ref{fig:H0} in Appendix~\ref{sec:detailed_results}).
These results under $H_0$ indicate that {\bf BFAR tunes the thresholds properly} in spite of the complexity of the data.
Note that BFAR never observed the data of scenario $H_0$ -- only the reference data.

{\bf In most of the non-$H_0$ scenarios, our tests prove to be more powerful than the standard tests, often by extreme margins}.
For example, increased control cost in all the environments and additive noise in Pendulum are all 100\%-detected by the suggested tests, usually within few episodes (Fig.~\ref{fig:seq_res}); whereas Mean, CUSUM and Hotelling have very poor detection rates. Mean did not detect degradation in Pendulum even after the control cost increased from 110\% to 300\%(!), while keeping the significance level constant ($\alpha_0=5\%$).

Note that we run the tests with two lookback-horizons in parallel, as allowed by BFAR.
This proves useful: with +30\% control cost in HalfCheetah, for example, the short lookback-horizon allows fast detection of degradation; but with merely +10\%, the long horizon is necessary to notice the slight degradation over a large number of episodes.
This is demonstrated in Fig.~\ref{fig:horizons} in Appendix~\ref{sec:detailed_results}.

Covariance-based tests reduce the weights of the highly-varying (and presumably noisier) time-steps. 
In HalfCheetah they turn out to be in the later parts of the episode.
As a result, in certain scenarios, Mean, CUSUM and Hotelling (which do not exploit the different variances optimally) do better in individual tests of 100 samples (out of $T=1000$) than they do in one or even 10 full episodes (see Fig.~\ref{fig:hc_ind_ccost130} in Appendix~\ref{sec:detailed_results}).
This does not occur in UDT and PDT.
Essentially, we see that {\bf ignoring the noise variability leads to violation of the principle that more data are better}.

In Pendulum, the ratio between variance of different steps may reach 5 orders of magnitude.
This phenomenon increases the potential power of the covariance-based tests.
For example, when the pole is shortened, negative changes in the highly-weighted time-steps are detected even when the mean of the whole signal increases.
This feature allows us to detect slight changes in the environment before they develop into larger changes and cause damage.

On the other hand, a challenging situation arises when certain rewards decrease but the highly-weighted ones slightly increase (as in longer Pendulum's pole), which strongly violates the assumptions of Section~\ref{sec:optimization}.
UDT is doomed to falter in such scenarios.
PDT proves somewhat robust to this phenomenon since it is capable of focusing on a subset of time-steps, as demonstrated in increased gravity in HalfCheetah (Fig.~\ref{fig:seq_res}).
However, it cannot overcome the extreme weights differences in Pendulum.
The one test that demonstrated robustness to all the experimented scenarios, including modified Pendulum's length and mass, is MDT. MDT combines Mean, Hotelling and PDT and does not fall far behind any of the three, in any of the scenarios.
Hence, it presents excellent results in some scenarios and reasonable results in the others.

The tests were run on a single i9-10900X CPU core.
BFAR (which needs to run only once and in an offline manner -- before the deployment of the agent) took around 30 minutes per environment and test-statistic (several hours in total). Any parallelization should accelerate the bootstrap linearly with the number of cores.
The sequential (online) tests themselves ran for 10 minutes per scenario -- for all the 6 test-statistics together and for thousands of episodes.

Detailed experiments results are available in Appendix~\ref{sec:detailed_results}.
The code of the experiments is available on \href{https://github.com/ido90/Rewards-Deterioration-Detection}{\underline{GitHub}}.


\section{Related Work}
\label{sec:related_work}

{\bf Training in non-stationary environments} has been widely researched, in particular in the frameworks of Multi-Armed Bandits~\citep{MAB_changepoints_detection,MAB_switch_detection,MAB_non_stationary,mab_adversarial_scaling,mab_multiplayer,mab_adversarial_corruption,mab_adversarial_attack}, model-based RL~\citep{nonstationary_mdp,context_awareness} and general multi-agent environments~\citep{multiagent_envs_survey}.
\citet{Quickest_change_detection_MDP} explicitly detect changes in the environment and modify the policy accordingly, but assume that the environment is Markov, fully-observable, and its transition model is known -- three assumptions that we avoid and that do not hold in many real-world problems.
Safe exploration during training in RL was addressed by~\citet{rl_safety_survey,lyapunov_safety,safety_constrained_rl,safe_rl_end2end,rl_shielding}.
Note that our work refers to changes beyond the scope of the training phase: it addresses the stage where the agent is fixed and required not to train further, in particular not in an online manner.
Robust algorithms may prevent degradation in the first place, but when they fail -- or when their assumptions are not met -- an external model-free monitor with minimal assumptions (as the one suggested in this work) is crucial.

{\bf Sequential tests} were addressed by many over the years.
Common approaches rely on strong assumptions such as samples independence~\citep{CUSUM,CUSUM_book} and normality~\citep{Pocock,ObrienFleming}.
Generalizations exist for certain private cases~\citep{CUSUM_AR1,changepoint_corr_noise}, sometimes at cost of alternative assumptions such as known change-size~\citep{changepoint_detection}.
Samples independence is usually assumed also in recent works with numeric approaches~\citep{MaxSPRT,nonparametric_sprt,concept_drift}, and is often justified by consolidating many samples (e.g., an episode) together as a single sample~\citep{rl_algos_comparison}.
\citet{learning_in_nonstationary_env} wrote that "change detection is typically carried out by inspecting i.i.d features extracted from the incoming data stream, e.g., the sample mean".
Certain works address cyclic signals monitoring~\citep{cyclic_signal_monitoring}, but to the best of our knowledge, we are the first to devise an optimal test for mean change in temporal non-i.i.d signals, and a false alarm control mechanism for such non-i.i.d signals.

Our work can be seen in part as converting a univariate temporal episodic signal into a $T$-dimensional multivariate signal.
Many works addressed the problem of {\bf changepoint detection in multivariate variables}, e.g., using histograms comparison~\citep{QuantTree}, Hotelling statistic~\citep{Hotelling}, and K-L distance~\citep{cdt_multivariate}.
Hotelling in particular also looks for changed mean under unchanged covariance.
However, unlike existing tests, we derive optimal tests for two different negative mean-change hypotheses, intended to detect degradation in temporal signals.
Indeed, Section~\ref{sec:experiments} demonstrates the advantage over Hotelling in such a context.
In addition, by considering the temporal nature of the signal, we are able to handle "incomplete observations" and in particular obtain detections even within the middle of the first episode.



\section{Summary}
\label{sec:summary}

We introduced a novel approach that is optimal (under certain conditions) for detection of changes in episodic signals, exploiting the correlations structure as measured in a reference dataset.
In environments of classic control (Pendulum) and MuJoCo (HalfCheetah, Humanoid), the suggested statistical tests detected degradation faster than alternatives, often by orders of magnitude.
Certain conditions, such as combination of positive and negative changes in very heterogeneous signals, may cause instability in some of the suggested tests; however, this is shown to be solved by running the new test in parallel to standard tests -- with only a small loss of test power.

We also introduced BFAR, a bootstrap mechanism that adjusts tests thresholds according to the desired false alarm rate in sequential tests.
The mechanism empirically succeeded in providing valid thresholds for various tests in all the environments, in spite of the non-i.i.d data.

The suggested approach may contribute to development of reliable RL-based systems.
Future research may
consider different hypotheses, such as a permitted small degradation (instead of $H_0$) or a mix of degradation and improvement (instead of $H_A$);
suggest additional stabilizing mechanisms for covariance-based tests;
exploit other metrics than rewards for tests on model-based RL systems;
and apply comparative tests of episodic signals beyond the scope of sequential change detection.


\section*{Acknowledgements}
This work was partially funded by the Israel Science Foundation (ISF).
The authors wish to thank Guy Tennenholtz and Nadav Merlis for their helpful insights.


\bibliographystyle{icml2021}
\nocite{*}
\bibliography{refs}

\begin{thebibliography}{78}
\providecommand{\natexlab}[1]{#1}
\providecommand{\url}[1]{\texttt{#1}}
\expandafter\ifx\csname urlstyle\endcsname\relax
  \providecommand{\doi}[1]{doi: #1}\else
  \providecommand{\doi}{doi: \begingroup \urlstyle{rm}\Url}\fi

\bibitem[Abhishek \& Mannor(2017)Abhishek and Mannor]{nonparametric_sprt}
Abhishek, V. and Mannor, S.
\newblock A nonparametric sequential test for online randomized experiments.
\newblock \emph{Proceedings of the 26th International Conference on World Wide
  Web Companion}, pp.\  610--6, 2017.

\bibitem[Alatur et~al.(2020)Alatur, Levy, and Krause]{mab_multiplayer}
Alatur, P., Levy, K.~Y., and Krause, A.
\newblock Multi-player bandits: The adversarial case.
\newblock \emph{JMLR}, 2020.

\bibitem[Alshiekh(2017)]{rl_shielding}
Alshiekh, M.
\newblock Safe reinforcement learning via shielding.
\newblock \emph{Logic in Computer Science}, 2017.

\bibitem[Alt et~al.(2019)Alt, Sosic, and Koeppl]{correlations_prior}
Alt, B., Sosic, A., and Koeppl, H.
\newblock Correlation priors for reinforcement learning.
\newblock \emph{NeurIPS}, 2019.

\bibitem[Aminikhanghahi \& Cook(2016)Aminikhanghahi and
  Cook]{changepoint_survey}
Aminikhanghahi, S. and Cook, D.
\newblock A survey of methods for time series change point detection.
\newblock \emph{Knowledge and Information Systems}, 51:\penalty0 339--367,
  2016.

\bibitem[Badia et~al.(2020)]{agent57}
Badia, A.~P. et~al.
\newblock Agent57: Outperforming the atari human benchmark.
\newblock \emph{ICML}, 2020.

\bibitem[Banerjee et~al.(2016)Banerjee, Liu, and
  How]{Quickest_change_detection_MDP}
Banerjee, T., Liu, M., and How, J.
\newblock Quickest change detection approach to optimal control in markov
  decision processes with model changes, 09 2016.

\bibitem[Bellman(1957)]{MDP_Bellman}
Bellman, R.
\newblock A markovian decision process.
\newblock \emph{Indiana Univ. Math. J.}, 6:\penalty0 679--684, 1957.
\newblock ISSN 0022-2518.

\bibitem[Berry \& Fristedt(1985)Berry and Fristedt]{MAB}
Berry, D.~A. and Fristedt, B.
\newblock \emph{Bandit problems}.
\newblock Springer Netherlands, 1985.
\newblock \doi{10.1007/978-94-015-3711-7}.

\bibitem[Besbes et~al.(2014)Besbes, Gur, and Zeevi]{MAB_non_stationary}
Besbes, O., Gur, Y., and Zeevi, A.
\newblock Stochastic multi-armed-bandit problem with non-stationary rewards.
\newblock \emph{Advances in Neural Information Processing Systems (NIPS)}, 27,
  2014.

\bibitem[Boracchi et~al.(2018)Boracchi, Carrera, Cervellera, and
  Maccio]{QuantTree}
Boracchi, G., Carrera, D., Cervellera, C., and Maccio, D.
\newblock Quanttree: Histograms for change detection in multivariate data
  streams.
\newblock \emph{Proceedings of Machine Learning Research}, 80:\penalty0
  639--648, 10--15 Jul 2018.
\newblock URL \url{http://proceedings.mlr.press/v80/boracchi18a.html}.

\bibitem[Brockman et~al.(2016)Brockman, Cheung, Pettersson, Schneider,
  Schulman, Tang, and Zaremba]{gym}
Brockman, G., Cheung, V., Pettersson, L., Schneider, J., Schulman, J., Tang,
  J., and Zaremba, W.
\newblock Openai gym, 2016.

\bibitem[Brook et~al.(1972)]{CUSUM_history1}
Brook, D. et~al.
\newblock An approach to the probability distribution of cusum run length.
\newblock \emph{Biometrika}, 59(3):\penalty0 539--549, 1972.

\bibitem[Bylander()]{MDP_and_episodes}
Bylander, T.
\newblock Lecture notes: Reinforcement learning.
\newblock
  \url{http://www.cs.utsa.edu/~bylander/cs6243/reinforcement-learning.pdf}.

\bibitem[Chan et~al.(2020)]{RL_reliability}
Chan, S.~C. et~al.
\newblock Measuring the reliability of reinforcement learning algorithms.
\newblock \emph{ICLR}, 2020.

\bibitem[Chen(2020)]{CVaR_investopedia}
Chen, J.
\newblock Conditional value at risk (cvar).
\newblock
  \url{https://www.investopedia.com/terms/c/conditional_value_at_risk.asp},
  2020.

\bibitem[Cheng et~al.(2019)]{safe_rl_end2end}
Cheng, R. et~al.
\newblock End-to-end safe reinforcement learning through barrier functions for
  safety-critical continuous control tasks.
\newblock \emph{AAAI Conference on Artificial Intelligence}, 2019.

\bibitem[Chow et~al.(2018)]{lyapunov_safety}
Chow, Y. et~al.
\newblock A lyapunov-based approach to safe reinforcement learning.
\newblock \emph{NIPS}, 2018.

\bibitem[Colas et~al.(2019)Colas, Sigaud, and Oudeyer]{rl_algos_comparison}
Colas, C., Sigaud, O., and Oudeyer, P.-Y.
\newblock A hitchhiker's guide to statistical comparisons of reinforcement
  learning algorithms, 2019.

\bibitem[Dai et~al.(2013)Dai, Ding, and Wahba]{MV_Bernoulli}
Dai, B., Ding, S., and Wahba, G.
\newblock Multivariate bernoulli distribution.
\newblock \emph{Bernoulli}, 19\penalty0 (4):\penalty0 1465--1483, 09 2013.
\newblock \doi{10.3150/12-BEJSP10}.
\newblock URL \url{https://doi.org/10.3150/12-BEJSP10}.

\bibitem[Dhariwal et~al.(2017)Dhariwal, Hesse, Klimov, Nichol, Plappert,
  Radford, Schulman, Sidor, Wu, and Zhokhov]{openai_baselines}
Dhariwal, P., Hesse, C., Klimov, O., Nichol, A., Plappert, M., Radford, A.,
  Schulman, J., Sidor, S., Wu, Y., and Zhokhov, P.
\newblock Openai baselines.
\newblock \url{https://github.com/openai/baselines}, 2017.

\bibitem[Dickey \& Fuller(1979)Dickey and Fuller]{DickeyFuller}
Dickey, D.~A. and Fuller, W.~A.
\newblock Distribution of the estimators for autoregressive time series with a
  unit root.
\newblock \emph{Journal of the American Statistical Association}, 74\penalty0
  (366a):\penalty0 427--431, 1979.
\newblock \doi{10.1080/01621459.1979.10482531}.
\newblock URL \url{https://doi.org/10.1080/01621459.1979.10482531}.

\bibitem[Ditzler et~al.(2015)Ditzler, Polikar, and
  Alippi]{learning_in_nonstationary_env}
Ditzler, G., Polikar, R., and Alippi, C.
\newblock Learning in nonstationary environments: A survey.
\newblock \emph{IEEE Computational Intelligence Magazine}, 2015.

\bibitem[Dulac-Arnold et~al.(2019)Dulac-Arnold, Mankowitz, and
  Hester]{real_world_rl_challenges2}
Dulac-Arnold, G., Mankowitz, D., and Hester, T.
\newblock Challenges of real-world reinforcement learning, 2019.

\bibitem[Efron(2003)]{bootstrap_2003}
Efron, B.
\newblock Second thoughts on the bootstrap.
\newblock \emph{Statist. Sci.}, 18\penalty0 (2):\penalty0 135--140, 05 2003.
\newblock \doi{10.1214/ss/1063994968}.
\newblock URL \url{https://doi.org/10.1214/ss/1063994968}.

\bibitem[Freedman(2017)]{MC_convergence}
Freedman, A.
\newblock Convergence theorem for finite markov chains, 2017.
\newblock URL
  \url{https://math.uchicago.edu/~may/REU2017/REUPapers/Freedman.pdf}.

\bibitem[Garcia \& Fernandez(2015)Garcia and Fernandez]{rl_safety_survey}
Garcia, J. and Fernandez, F.
\newblock A comprehensive survey on safe reinforcement learning.
\newblock \emph{JMLR}, 2015.

\bibitem[Garivier \& Moulines(2011)Garivier and Moulines]{MAB_switch_detection}
Garivier, A. and Moulines, E.
\newblock On upper-confidence bound policies for switching bandit problems.
\newblock \emph{International Conference on Algorithmic Learning Theory}, pp.\
  174--188, 10 2011.
\newblock \doi{10.1007/978-3-642-24412-4_16}.

\bibitem[Goldman(2008)]{bonferroni}
Goldman, M.
\newblock Lecture notes in stat c141: The bonferroni correction.
\newblock https://www.stat.berkeley.edu/~mgoldman/Section0402.pdf, 2008.

\bibitem[Gupta et~al.(2019)Gupta, Koren, and
  Talwar]{mab_adversarial_corruption}
Gupta, A., Koren, T., and Talwar, K.
\newblock Better algorithms for stochastic bandits with adversarial
  corruptions.
\newblock \emph{Proceedings of Machine Learning Research}, 2019.

\bibitem[Harel et~al.(2014)Harel, Crammer, El-Yaniv, and Mannor]{concept_drift}
Harel, M., Crammer, K., El-Yaniv, R., and Mannor, S.
\newblock Concept drift detection through resampling.
\newblock \emph{International Conference on Machine Learning}, pp.\
  II–1009–II–1017, 2014.

\bibitem[Henderson et~al.(2017)]{DRL_that_matters}
Henderson, P. et~al.
\newblock Deep reinforcement learning that matters.
\newblock \emph{AAAI}, 2017.

\bibitem[Hernandez-Leal et~al.(2019)Hernandez-Leal, Kaisers, Baarslag, and
  de~Cote]{multiagent_envs_survey}
Hernandez-Leal, P., Kaisers, M., Baarslag, T., and de~Cote, E.~M.
\newblock A survey of learning in multiagent environments: Dealing with
  non-stationarity, 2019.

\bibitem[Hessel et~al.(2018)Hessel, Modayil, van Hasselt, Schaul, Ostrovski,
  Dabney, Horgan, Piot, Azar, and Silver]{drl_improvements}
Hessel, M., Modayil, J., van Hasselt, H., Schaul, T., Ostrovski, G., Dabney,
  W., Horgan, D., Piot, B., Azar, M., and Silver, D.
\newblock Rainbow: Combining improvements in deep reinforcement learning.
\newblock \emph{AAAI}, 2018.

\bibitem[Hotelling(1931)]{Hotelling}
Hotelling, H.
\newblock The generalization of student's ratio.
\newblock \emph{Ann. Math. Statist.}, 2\penalty0 (3):\penalty0 360--378, 08
  1931.
\newblock \doi{10.1214/aoms/1177732979}.
\newblock URL \url{https://doi.org/10.1214/aoms/1177732979}.

\bibitem[Irwin(2006)]{CLT2}
Irwin, M.~E.
\newblock Lecture notes: Convergence in distribution and central limit theorem.
\newblock \url{http://www2.stat.duke.edu/~sayan/230/2017/Section53.pdf}, 2006.

\bibitem[Jun et~al.(2018)]{mab_adversarial_attack}
Jun, K.-S. et~al.
\newblock Adversarial attacks on stochastic bandits.
\newblock \emph{NeurIPS}, 2018.

\bibitem[Junges et~al.(2016)]{safety_constrained_rl}
Junges, S. et~al.
\newblock Safety-constrained reinforcement learning for mdps.
\newblock \emph{International Conference on Tools and Algorithms for the
  Construction and Analysis of Systems}, 2016.

\bibitem[K.~V.~Mardia \& Bibby(1979)K.~V.~Mardia and Bibby]{Wishart}
K.~V.~Mardia, J. T.~K. and Bibby, J.~M.
\newblock \emph{Multivariate analysis}.
\newblock Academic Press, 1979.

\bibitem[Kharitonov et~al.(2015)Kharitonov, Vorobev, Macdonald, Serdyukov, and
  Ounis]{MaxSPRT}
Kharitonov, E., Vorobev, A., Macdonald, C., Serdyukov, P., and Ounis, I.
\newblock Sequential testing for early stopping of online experiments.
\newblock \emph{Proceedings of the 38th International ACM SIGIR Conference on
  Research and Development in Information Retrieval}, pp.\  473–482, 2015.
\newblock \doi{10.1145/2766462.2767729}.
\newblock URL \url{https://doi.org/10.1145/2766462.2767729}.

\bibitem[Korenkevych et~al.(2019)Korenkevych, Mahmood, Vasan, and
  Bergstra]{autoregressive_policies}
Korenkevych, D., Mahmood, A.~R., Vasan, G., and Bergstra, J.
\newblock Autoregressive policies for continuous control deep reinforcement
  learning, 2019.

\bibitem[Kostrikov(2018)]{pytorchrl}
Kostrikov, I.
\newblock Pytorch implementations of reinforcement learning algorithms.
\newblock \url{https://github.com/ikostrikov/pytorch-a2c-ppo-acktr-gail}, 2018.

\bibitem[Kroese et~al.(2014)Kroese, Brereton, Taimre, and Botev]{MonteCarlo}
Kroese, D.~P., Brereton, T., Taimre, T., and Botev, Z.
\newblock Why the monte carlo method is so important today.
\newblock \emph{Wiley Interdisciplinary Reviews: Computational Statistics},
  6:\penalty0 386--392, 2014.

\bibitem[{Kuncheva}(2013)]{cdt_multivariate}
{Kuncheva}, L.~I.
\newblock Change detection in streaming multivariate data using likelihood
  detectors.
\newblock \emph{IEEE Transactions on Knowledge and Data Engineering},
  25\penalty0 (5):\penalty0 1175--1180, 2013.
\newblock \doi{10.1109/TKDE.2011.226}.

\bibitem[Lan(1994)]{alpha_spending}
Lan, D. L. D. K. K.~G.
\newblock Interim analysis: The alpha spending function approach.
\newblock \emph{Statistics in Medicine}, 13:\penalty0 1341--52, 1994.

\bibitem[Lecarpentier \& Rachelson(2019)Lecarpentier and
  Rachelson]{nonstationary_mdp}
Lecarpentier, E. and Rachelson, E.
\newblock Non-stationary markov decision processes: a worst-case approach using
  model-based reinforcement learning.
\newblock \emph{NeurIPS 2019}, abs/1904.10090, 2019.
\newblock URL \url{http://arxiv.org/abs/1904.10090}.

\bibitem[Lee et~al.(2020)]{context_awareness}
Lee, K. et~al.
\newblock Context-aware dynamics model for generalization in model-based rl.
\newblock \emph{ICML}, 2020.

\bibitem[Lu \& Jr.(2001)Lu and Jr.]{CUSUM_AR1}
Lu, C.-W. and Jr., M. R.~R.
\newblock Cusum charts for monitoring an autocorrelated process.
\newblock \emph{Journal of Quality Technology}, 33\penalty0 (3):\penalty0
  316--334, 2001.
\newblock \doi{10.1080/00224065.2001.11980082}.
\newblock URL \url{https://doi.org/10.1080/00224065.2001.11980082}.

\bibitem[Lund et~al.(2007)Lund, Wang, Lu, Reeves, Gallagher, and
  Feng]{changepoint_detection}
Lund, R., Wang, X.~L., Lu, Q.~Q., Reeves, J., Gallagher, C., and Feng, Y.
\newblock {Changepoint Detection in Periodic and Autocorrelated Time Series}.
\newblock \emph{Journal of Climate}, 20\penalty0 (20):\penalty0 5178--5190, 10
  2007.
\newblock ISSN 0894-8755.
\newblock \doi{10.1175/JCLI4291.1}.
\newblock URL \url{https://doi.org/10.1175/JCLI4291.1}.

\bibitem[Lykouris et~al.(2020)Lykouris, Mirrokni, and
  Leme]{mab_adversarial_scaling}
Lykouris, T., Mirrokni, V., and Leme, R.~P.
\newblock Bandits with adversarial scaling.
\newblock \emph{ICML}, 2020.

\bibitem[MathWorks()]{CVaR_matlab}
MathWorks.
\newblock Conditional value-at-risk (cvar).
\newblock
  \url{https://www.mathworks.com/discovery/conditional-value-at-risk.html}.

\bibitem[Matsushima et~al.(2020)Matsushima, Furuta, Matsuo, Nachum, and
  Gu]{DeploymentEfficientRL}
Matsushima, T., Furuta, H., Matsuo, Y., Nachum, O., and Gu, S.
\newblock Deployment-efficient reinforcement learning via model-based offline
  optimization.
\newblock \emph{ArXiv}, abs/2006.03647, 2020.

\bibitem[Mnih et~al.(2016)Mnih, Badia, Mirza, Graves, Lillicrap, Harley,
  Silver, and Kavukcuoglu]{A3C}
Mnih, V., Badia, A.~P., Mirza, M., Graves, A., Lillicrap, T., Harley, T.,
  Silver, D., and Kavukcuoglu, K.
\newblock Asynchronous methods for deep reinforcement learning.
\newblock \emph{Proceedings of Machine Learning Research}, 48:\penalty0
  1928--1937, 20-22 Jun 2016.

\bibitem[MuJoCo()]{HalfCheetah}
MuJoCo.
\newblock Halfcheetah-v2.
\newblock \url{https://gym.openai.com/envs/HalfCheetah-v2/}.

\bibitem[Mukherjee \& Maillard(2019)Mukherjee and
  Maillard]{MAB_changepoints_detection}
Mukherjee, S. and Maillard, O.-A.
\newblock Distribution-dependent and time-uniform bounds for piecewise i.i.d
  bandits.
\newblock \emph{arXiv preprint arXiv:1905.13159}, 2019.

\bibitem[Murphy et~al.(2001)Murphy, van~der Laan, and
  Robins]{medical_decision_making}
Murphy, S.~A., van~der Laan, M.~J., and Robins, J.~M.
\newblock Marginal mean models for dynamic regimes.
\newblock \emph{Journal of the American Statistical Association}, 2001.

\bibitem[Nachum et~al.(2020)Nachum, Ahn, Ponte, Gu, and Kumar]{multi_agent_rl}
Nachum, O., Ahn, M., Ponte, H., Gu, S.~S., and Kumar, V.
\newblock Multi-agent manipulation via locomotion using hierarchical sim2real.
\newblock \emph{PMLR}, 100:\penalty0 110--121, 30 Oct--01 Nov 2020.
\newblock URL \url{http://proceedings.mlr.press/v100/nachum20a.html}.

\bibitem[NCSS()]{CUSUM_description}
NCSS.
\newblock Cumulative sum (cusum) charts.
\newblock
  \url{https://ncss-wpengine.netdna-ssl.com/wp-content/themes/ncss/pdf/Procedures/NCSS/CUSUM_Charts.pdf}.

\bibitem[Neyman et~al.(1933)Neyman, Pearson, and Pearson]{NeymanPearson}
Neyman, J., Pearson, E.~S., and Pearson, K.
\newblock On the problem of the most efficient tests of statistical hypotheses.
\newblock \emph{Philosophical Transactions of the Royal Society of London},
  1933.
\newblock \doi{10.1098/rsta.1933.0009}.

\bibitem[O'Brien \& Fleming(1979)O'Brien and Fleming]{ObrienFleming}
O'Brien, P.~C. and Fleming, T.~R.
\newblock A multiple testing procedure for clinical trials.
\newblock \emph{Biometrics}, 35\penalty0 (3):\penalty0 549--556, 1979.
\newblock ISSN 0006341X, 15410420.
\newblock URL \url{http://www.jstor.org/stable/2530245}.

\bibitem[OpenAI()]{Pendulum}
OpenAI.
\newblock Pendulum-v0.
\newblock \url{https://gym.openai.com/envs/Pendulum-v0/}.

\bibitem[Page(1954)]{CUSUM}
Page, E.~S.
\newblock {Continuous Inspection Schemes}.
\newblock \emph{Biometrika}, 41\penalty0 (1-2):\penalty0 100--115, 06 1954.
\newblock ISSN 0006-3444.
\newblock \doi{10.1093/biomet/41.1-2.100}.
\newblock URL \url{https://doi.org/10.1093/biomet/41.1-2.100}.

\bibitem[Pardo et~al.(2017)Pardo, Tavakoli, Levdik, and
  Kormushev]{RL_time_limits}
Pardo, F., Tavakoli, A., Levdik, V., and Kormushev, P.
\newblock Time limits in reinforcement learning.
\newblock \emph{CoRR}, abs/1712.00378, 2017.
\newblock URL \url{http://arxiv.org/abs/1712.00378}.

\bibitem[{PennState College of Science}()]{alpha_spending_notes}
{PennState College of Science}.
\newblock Lecture notes in stat 509: Alpha spending function approach.
\newblock \url{https://online.stat.psu.edu/stat509/node/81/}.

\bibitem[Petrov(1972)]{CLT1}
Petrov, V.~V.
\newblock \emph{Sums of Independent Random Variables}.
\newblock Nauka, 1972.

\bibitem[Pocock(1977)]{Pocock}
Pocock, S.~J.
\newblock Group sequential methods in the design and analysis of clinical
  trials.
\newblock \emph{Biometrika}, 64\penalty0 (2):\penalty0 191--199, 08 1977.
\newblock ISSN 0006-3444.
\newblock \doi{10.1093/biomet/64.2.191}.
\newblock URL \url{https://doi.org/10.1093/biomet/64.2.191}.

\bibitem[Rockafellar \& Uryasev(2000)Rockafellar and
  Uryasev]{CVaR_optimization_finance}
Rockafellar, R.~T. and Uryasev, S.
\newblock Optimization of conditional value-at-risk.
\newblock \emph{Journal of Risk}, 2:\penalty0 21--41, 2000.
\newblock \doi{10.21314/JOR.2000.038}.

\bibitem[Ryan(2011)]{CUSUM_book}
Ryan, T.~P.
\newblock \emph{{Statistical Methods for Quality Improvement}}.
\newblock Wiley; 3rd Edition, 2011.

\bibitem[{Todorov} et~al.(2012){Todorov}, {Erez}, and {Tassa}]{mujoco}
{Todorov}, E., {Erez}, T., and {Tassa}, Y.
\newblock Mujoco: A physics engine for model-based control.
\newblock \emph{2012 IEEE/RSJ International Conference on Intelligent Robots
  and Systems}, pp.\  5026--5033, 2012.

\bibitem[Wald(1945)]{SPRT}
Wald, A.
\newblock Sequential tests of statistical hypotheses.
\newblock \emph{Annals of Mathematical Statistics}, 16\penalty0 (2):\penalty0
  117--186, 06 1945.
\newblock \doi{10.1214/aoms/1177731118}.
\newblock URL \url{https://doi.org/10.1214/aoms/1177731118}.

\bibitem[Westgard et~al.(1977)Westgard, Groth, Aronsson, and
  Verdier]{CUSUM_app2}
Westgard, J., Groth, T., Aronsson, T., and Verdier, C.
\newblock Combined shewhart-cusum control chart for improved quality control in
  clinical chemistry.
\newblock \emph{Clinical chemistry}, 23:\penalty0 1881--7, 11 1977.
\newblock \doi{10.1093/clinchem/23.10.1881}.

\bibitem[Wilks(1938)]{wilks1938}
Wilks, S.~S.
\newblock The large-sample distribution of the likelihood ratio for testing
  composite hypotheses.
\newblock \emph{Ann. Math. Statist.}, 9\penalty0 (1):\penalty0 60--62, 03 1938.
\newblock \doi{10.1214/aoms/1177732360}.
\newblock URL \url{https://doi.org/10.1214/aoms/1177732360}.

\bibitem[Williams et~al.(1992)]{CUSUM_app1}
Williams, S.~M. et~al.
\newblock Quality control: an application of the cusum.
\newblock \emph{BMJ: British medical journal}, 304.6838:\penalty0 1359, 1992.

\bibitem[Xie \& Siegmund(2011)Xie and Siegmund]{changepoint_corr_noise}
Xie, Y. and Siegmund, D.
\newblock Weak change-point detection using temporal correlation, 2011.

\bibitem[Yashchin(1985)]{CUSUM_history2}
Yashchin, E.
\newblock On the analysis and design of cusum-shewhart control schemes.
\newblock \emph{IBM Journal of Research and Development}, 29\penalty0
  (4):\penalty0 377--391, 1985.

\bibitem[Yu et~al.(2020)Yu, Thomas, Yu, Ermon, Zou, Levine, Finn, and Ma]{MOPO}
Yu, T., Thomas, G., Yu, L., Ermon, S., Zou, J., Levine, S., Finn, C., and Ma,
  T.
\newblock Mopo: Model-based offline policy optimization, 2020.

\bibitem[Zhao et~al.(2019)]{auto_vehicle_reliability}
Zhao, X. et~al.
\newblock Assessing the safety and reliability of autonomous vehicles from road
  testing.
\newblock \emph{ISSRE}, 2019.

\bibitem[Zhou et~al.(2005)Zhou, Jin, and Jin]{cyclic_signal_monitoring}
Zhou, S., Jin, N., and Jin, J.~J.
\newblock Cycle-based signal monitoring using a directionally variant
  multivariate control chart system.
\newblock \emph{IIE Transactions}, 37\penalty0 (11):\penalty0 971--982, 2005.
\newblock \doi{10.1080/07408170590925553}.
\newblock URL \url{https://doi.org/10.1080/07408170590925553}.

\end{thebibliography}


\onecolumn
\appendix

\newpage
\setcounter{tocdepth}{1}
\tableofcontents

\newpage
\listoftheorems

\newpage


\section{Detailed Preliminary Materials}
\label{sec:detailed_preliminaries}

\subsection{Reinforcement Learning and Episodic Framework}
\label{sec:bg_rl}

The environment of a Reinforcement Learning (RL) problem is usually modeled as a \textit{Decision Process}.
This is essentially a state-machine, where the (possibly random) transition between states depends on decision-making, as well as on the current and the previous states (in the general case).
Every state (and possibly every decision) is assigned a corresponding reward,
and the goal of the decision-making system (termed \textit{agent}) is to maximize some function of the rewards, named the \textit{return function}.
In contrast to Supervised Learning, the feedback from the environment does not inform the agent whether it succeeded to maximize the rewards, but merely how high the rewards were.
It is up to the agent to explore the possible decisions (also termed \textit{actions}) and the corresponding rewards.

The return function is usually a simple sum of the rewards for a finite process, and a decayed sum for an infinite process.
In the finite case, the process usually repeats multiple times in different variants, e.g., with different initial states.
Common examples are board and video games~\citep{gym}, as well as more realistic problems such as repeating drives in autonomous driving task.
In the context of RL, the repetitions of the decision process are usually named \textit{episodes}.
Bylander~\citep{MDP_and_episodes} defined an episode as the "path from initial to a terminal state".
Pardo et al.~\citep{RL_time_limits} wrote that "it is common to let an agent interact for a fixed amount of time with its environment before resetting it and repeating the process in a series of episodes".

Note that once the agent chooses a decision-making scheme (termed \textit{policy}), the decision process essentially reduces to a (decision-free) random process.
Every time-step in the process has a certain distribution of (state and) reward, and different time-steps may depend on each other.

The decision process in RL is often modeled as a \textit{Markov Decision Process} (MDP)~\citep{MDP_Bellman}, where every state depends only on the preceding state and the agent's action.
The decision-free process received from an MDP with relation to a fixed policy is a \textit{Markov Chain} (MC), which under certain further assumptions is guaranteed to converge into a stationary state~\citep{MC_convergence}.
However, even in such a restrictive model, long-term correlations between rewards may still carry information if the states are not observable by the agent; and even under the further conditions of convergence to a stationary state, the rate of convergence may be slow compared to the length of an episode.
The non-stationarity of the rewards within an episode is demonstrated, for example, in Fig.~\ref{fig:cheetah_var}.

This work exploits the repetitive nature of the episodic random processes -- and in particular the rewards of episodic decision processes in the context of RL -- to estimate the expectations and the correlations in the process.
Since we measure the rewards directly, without considering the underlying states or any other observations available to the agent, we may call this approach model-free in the context of RL.

Note that in the scope of this work, the goal of the episodes is to provide i.i.d samples of a non-i.i.d random process, so that the covariance parameters of the process can be estimated.
Hence, the scope of "episodic problems" may be quite extensive: it may include even life-time systems that run continuously without ever resetting -- as long as a reference dataset of other instances of the system is available, and the sample resolution does not introduce too many covariance parameters to estimate from the reference dataset.
Indeed, the model defined in Section~\ref{sec:detailed_model} and the optimality results in Section~\ref{sec:detailed_optimal_test} are fully capable of handling a part of a single, long episode (with the exception of the asymptotic results in Section~\ref{sec:unif_asymptotics}).


\subsection{Hypothesis Testing}
\label{sec:bg_hypothesis_tests}

In a standard hypothesis test, two hypotheses are formulated regarding some observable phenomenon, and we wish to decide which one is true according to available evidence, given in the form of observations $X \in \mathbb{X}$ from a corresponding observation space $\mathbb{X}$.
One hypothesis is often regarded as the default, named the \textit{Null Hypothesis} and denoted $H_0$; and given $X$ we have to decide whether to \textit{reject} $H_0$ in favor of the \textit{Alternative Hypothesis} $H_A$.

The fundamental distinction between the hypotheses lays on their different probabilistic models $P\left(X \big| H\right)$ (either probability function or probability density function), also referred to as the \textit{likelihood} $L\left(H\big|X\right)$ of the hypothesis given the observations.
The difference between the models is often formulated in terms of different values of a parameter $\theta$ for some parametric probability function $P\left(X\big|\theta\right)$.
A \textit{complex} hypothesis is one that allows different possible probabilistic models, represented by a set $\Theta$ of permitted values of $\theta$.
The likelihood of a complex hypothesis $H: \theta \in \Theta$ is defined as $L\left(H\big|X\right) = \text{sup}_{\theta\in\Theta}P\left(X\big|\theta\right)$.
The \textit{likelihood-ratio} between two hypotheses is defined as $LR\left(H_0,H_A\big|X\right) = \frac{L\left(H_0\big|X\right)}{L\left(H_A\big|X\right)}$.
The log-likelihood-ratio is often used instead~\citep{wilks1938}, since it tends to derive simpler expressions for exponential families of distributions such as the Normal distribution.
In this work we often denote $\lambda_{LR}\left(H_0,H_A\big|X\right) = 2\text{ln}\left(LR\right)$.

The basic metrics for the efficiency of a hypothesis test are its \textit{significance} $P\left(\text{not reject } H_0 \big| H_0 \right)=1-P\left(\text{type-I error}\right)=1-\alpha$ and its \textit{power} $P\left(\text{reject } H_0 \big| H_A \right)=1-P\left(\text{type-II error}\right)=\beta$.
A statistical hypothesis test with significance $1-\alpha$ and power $\beta$ is said to be \textit{optimal} if any statistical test with as high significance $1-\tilde{\alpha} \ge 1-\alpha$ has smaller power $\tilde{\beta} \le \beta$.

According to Neyman-Pearson lemma~\citep{NeymanPearson}, a threshold-test on the likelihood ratio is an optimal hypothesis test.
In a likelihood-ratio threshold-test with a threshold $\kappa\in\mathbb{R}$, we reject $H_0$ if $LR\left(H_0,H_A\big|X\right)<\kappa$; reject with a certain probability $\rho\in[0,1]$ if $LR=\kappa$; and do not reject $H_0$ otherwise.
Note that the behavior in the edge-case $LR=\kappa$ (controlled by $\rho$) only matters in the case of non-continuous distributions, where it is possible that $P\left(LR=\kappa\right) \ne 0$.

Note that the optimal hypothesis test is not unique, but rather leaves a degree of freedom in the tradeoff between $\alpha$ and $\beta$. In the case of a threshold-test, this degree of freedom is controlled by the threshold $\kappa$ (and the edge probability $\rho$).
It is common to define the test according to a desired significance level (often $\alpha=0.01$ or $\alpha=0.05$), and derive the corresponding threshold $\kappa_\alpha$.

In certain cases, given a test-statistic and desired $\alpha$, the threshold $\kappa_\alpha$ can be analytically calculated from the corresponding probabilistic model $P\left(X \big| H_0\right)$.
If the model is too complex or not well-defined, but expresses the sum of i.i.d random variables, then according to the \textit{Central Limit Theorem} (CLT)~\citep{CLT1,CLT2}, the model becomes closer to a Normal distribution as the number of summed variables grows, allowing to analytically calculate the asymptotic value of $\kappa_\alpha$.
Note that the CLT lays on the independence and identical distributions of the summed variables -- two properties which are not generally satisfied by episodic rewards in the decision processes described in Section~\ref{sec:bg_rl}.

Numeric methods are also available for estimation of properties of a hypothesis test (or the properties of a statistic of the observations).
In \textit{Monte-Carlo method}~\citep{MonteCarlo}, the test is simulated (or the statistic is computed) multiple times for observations $X$ generated in a way which is assumed to be similar to a hypothesis $H$ (in particular $H_0$ for significance estimation).
In the \textit{bootstrap} method~\citep{bootstrap_2003}, given i.i.d observations $X\in\mathbb{R}^n$ (assumed to be drawn according to a hypothesis $H$), Monte-Carlo method is applied on artificial observations $X_b$ drawn by repeatedly sampling $n$ elements from $X$ with replacement.


\subsection{Sequential Tests}
\label{sec:bg_sequential_tests}

Section~\ref{sec:bg_hypothesis_tests} describes the general scheme of a standard hypothesis test for distinction between two hypotheses according to certain available data.
In many practical applications, the hypothesis test is repeatedly applied as the data change or grow, a procedure known as a \textit{sequential test}.
If the null hypothesis $H_0$ is true, and any individual hypothesis test falsely rejects $H_0$ with some probability $\alpha$, then the probability that at least one of the multiple tests will reject $H_0$ is $\alpha_0 > \alpha$, termed \textit{family-wise type-I error rate}.
For simplicity, consider the private case of $k$ independent tests, where $\alpha_0 = 1-(1-\alpha)^k \xrightarrow{\enskip k\rightarrow \infty \enskip} 1$.

This problem, also known as inflation of significance or inflation of $\alpha$ in sequential tests, was addressed by many over the years.
A simple solution is the \textit{Bonferroni correction}~\citep{bonferroni}, setting significance level of $1-\alpha/k$ in every individual test. This way, we have $P(\exists i: \text{test $i$ rejects} | H_0) \le \sum_{i=1}^k \alpha/k = \alpha$. However, the inequality becomes equality only if the rejections of the various tests are disjoint events (not even independent); thus in practice we often have $\alpha_0 \ll \alpha$, which makes the Bonferroni correction extremely conservative.
Appendix~\ref{sec:detailed_related_work} describes other relevant works on sequential testing.


\section{Related Work: Detailed Discussion}
\label{sec:detailed_related_work}

As explained in Section~\ref{sec:preliminaries}, sequential tests repeatedly apply individual hypothesis tests with certain significance level $1-\alpha \in (0,1)$.
The probability that at least one test would reject the null hypothesis $H_0$ increases with the number of the individual tests, leading to "inflation of $\alpha$" and decreased family-wise significance level $1-\alpha_0 < 1-\alpha$.
Section~\ref{sec:sequential_test} discusses this problem in the context of tests on episodic signals.
Here we discuss some of the existing methods for design of sequential tests.

\paragraph{Sequential Probability Ratio Test (SPRT):}
SPRT~\citep{SPRT} considers a symmetric approach between two hypotheses $H_1$,$H_2$, and aims to decide between them as fast as possible, subject to the probability of a wrong decision being bounded by $\alpha$.
The decision rule is chosen such that the expected time until decision is minimized.
The element that bounds the probability of wrong decision is the setup of the flow of the test.
Every iteration, the decision rule decides between three possibilities: accept $H_1$, accept $H_2$, or continue.
The possibility to stop on acceptance of the true hypothesis limits the inflation of $\alpha$.

In contrast to this setup, in the change-point detection problem -- where continuously looking for changes -- we either reject $H_0$ or continue, but never stop to accept $H_0$.
Dedicated sequential tests are designed for the problem of change-point detection.

\paragraph{Cumulative Sum test (CUSUM):}
The CUSUM test~\citep{CUSUM,CUSUM_description} is a well-studied~\citep{CUSUM_history1,CUSUM_history2} and very popular method in quality control and change detection~\citep{CUSUM_app1,CUSUM_app2}.
While being useful in a wide scope of problems, the test requires the size of change to be defined in advance as a parameter (a requirement that exists in other methods as well~\citep{changepoint_detection}).
In addition, CUSUM assumes to observe i.i.d samples.
The statistic is defined incrementally in a non-linear way, making it more difficult to generalize to non-i.i.d models, although several generalizations do exist, e.g., for the case of first-order autoregressive signal AR(1)~\citep{CUSUM_AR1}.
However, for example, Fig.~\ref{fig:cheetah_acf} demonstrates empiric rewards in HalfCheetah environment~\citep{HalfCheetah}, where the dependencies in the signal require a more expressive model.

\paragraph{Persistent drift and Dickey-Fuller test:}
Certain methods are available for detection of persistent drifts (also known as trends) in time-series.
For example, Dickey-Fuller test~\citep{DickeyFuller} for unit-roots in autoregressive models essentially looks for linear drifts.
However, in the scope of this work we do not assume a persistent drift, nor limit ourselves to autoregressive models.

\paragraph{$\alpha$-spending functions:}
The $\alpha$-spending functions~\citep{alpha_spending,alpha_spending_notes} deal with the inflation of $\alpha$ in sequential tests by conceptually referring to $\alpha$ as a limited budget of significance, where every individual test spends some of the budget.
Due to the dependence between the individual tests, the total budget spent is smaller than the sum of the individual spends $\alpha_0 < \sum_i \alpha_i$.
Thus, careful calculations are required for tuning of the family-wise significance level $\alpha_0$.

\citet{Pocock}, for example, showed how to calculate a constant individual significance level $\alpha_i \equiv \alpha$ given a desired family-wise significance $\alpha_0$ and known number of $k$ individual tests, assuming that the tests are applied to accumulated normal i.i.d data samples.
For many applications, such a constant significance level tends to spend too much $\alpha$-budget in the first individual tests, reducing too much power from the later tests -- where most of the data are available.
It is often preferred to keep high significance level for these final tests, and reject $H_0$ in earlier tests only in radical cases.
Accordingly, the O'Brien-Fleming function~\citep{ObrienFleming} determines the individual significance levels $\{\alpha_i\}_{i=1}^k$ under similar i.i.d and normality assumptions as Pocock, but lets $\alpha_i$ gradually increase over the sequential test.
In Section~\ref{sec:sequential_test} we consider the $\alpha$-spending approach and generalize it through a bootstrap mechanism to handle any sequence of individual tests for the case of episodic data; that is, i.i.d episodes consisting of samples which are not assumed to be independent, normal, or identically-distributed.

\paragraph{Multivariate mean shift:}
In a way, our work can be seen as a test for change-point or mean-shift of i.i.d $T$-dimensional multivariate random variables -- the episodes.
This problem was addressed before, e.g., using Hotelling statistic~\citep{Hotelling}, histograms comparison~\citep{QuantTree}, and K-L distance~\citep{cdt_multivariate}.
However, our setup has two essential differences from the multivariate mean-shift problem:
first, since we look for a signed (negative) change in a univariate signal, we form the test's alternative hypothesis $H_A$ correspondingly. This results in the uniform and partial degradation hypotheses, which are essentially different from the alternative hypothesis of Hotelling test, for example.
Indeed, Section~\ref{sec:experiments} demonstrates the advantage over Hotelling in the framework of RL, that is, episodic univariate rewards signal.

Second, since the episodic signal is \textit{temporal} univariate, the coordinates of the "multivariate variables" are not observed simultaneously. As a result, when observing in the middle of an episode, we have incomplete information about the last multivariate variable (and possibly the first one, depending on how the lookback-horizon is defined).
Both BFAR and the test statistics in this work take care of this issue.
This is required for correct inference at any mid-episode time, but is particularly important for fast detection of large changes -- which should be detected in the middle of the first episode.

\paragraph{Numeric methods:}
\citet{rl_algos_comparison} address the problem of comparing different RL algorithms, referring to whole episode as a single data sample for the tests.
\citet{concept_drift} apply permutations test to detect changes in i.i.d data, focusing on drifts that impair predictive models of the data.
The bootstrap mechanism discussed in Section~\ref{sec:sequential_test} can be seen as a permutations test on i.i.d episodes (instead of single samples).
\citet{nonparametric_sprt} also bring together ideas from bootstrap and sequential tests to construct a nonparametric sequential hypothesis test.
The test applies bootstrap on single samples within blocks of data, assuming the data samples are i.i.d.
Certain machine-learning based approaches were also suggested for changepoint detection in time-series~\citep{changepoint_survey}.
\citet{learning_in_nonstationary_env} wrote that "change detection is typically carried out by inspecting independent and identically distributed (i.i.d) features extracted from the incoming data stream, e.g., the sample mean, the sample variance, and/or the classification error".

\paragraph{Changing environment and safety in RL:}
In Multi-Armed Bandits (MAB)~\citep{MAB}, where by default each bandit (action) yields i.i.d rewards, several works address the problem of regret minimization (namely, optimization of rewards during training) with abrupt changes~\citep{MAB_switch_detection,MAB_changepoints_detection}, gradual changes~\citep{MAB_non_stationary} and even adversarial changes~\citep{mab_adversarial_scaling,mab_multiplayer,mab_adversarial_corruption,mab_adversarial_attack}.

Training in presence of non-stationary environment was also considered in other environments such as multi-agent environments~\citep{multiagent_envs_survey} and in model-based RL with varying model~\citep{nonstationary_mdp,Quickest_change_detection_MDP}.
Several works addressed the problem of safety in exploration of RL algorithms during training~\citep{rl_safety_survey,lyapunov_safety,safety_constrained_rl}, often using model-based learning of the environment~\citep{safe_rl_end2end} or specified constraints~\citep{rl_shielding}.

Note that our work refers to changes beyond the scope of the training phase, at the stage where the agent is fixed and required not to train further, in particular not in an online manner.
Robust algorithms may prevent rewards degradation in the first place, but when they do not -- it is crucial to be alerted.
To the best of our knowledge, we are the first to exploit correlations between rewards in post-training phase to test for changes in both model-based and model-free RL.


\section{Extended Definitions and Model Discussions}
\label{sec:detailed_model}

\paragraph{Episodic signal model:}
Below is the formal definition of an episodic signal, as discussed in Section~\ref{sec:detailed_model}.

\begin{definition}[Episodic index decomposition]
\label{def:index_decomposition}
Let $t,T \in \mathbb{N}$. We define $k(t,T) \coloneqq \lfloor \frac{t-1}{T} \rfloor$, $\tilde{\tau}(t,T) \coloneqq t \text{ (mod } T)$, and $\tau(t,T) \coloneqq \begin{cases} T & \text{if } \tilde{\tau}(t,T)=0 \\ \tilde{\tau}(t,T) & \text{if } \tilde{\tau}(t,T)\ne 0 \end{cases}$.
When no confusion is risked, we may simply write $k=k(t,T), \tau=\tau(t,T)$.
Note that $\forall t,T \in \mathbb{N}: t = kT+\tau$.
\end{definition}

\begin{definition}[$T$-long episodic signal; an extended formulation of Definition~\ref{def:episodic_signal}]
\label{def:episodic_signal_copy}
Let $n,T \in \mathbb{N}$.
Denote $K=k(n,T)$, $\tau_0=\tau(n,T)$ according to Definition~\ref{def:index_decomposition}.
A sequence of real-valued random variables $\{X_t\}_{t=1}^{n}$ is a \textit{$T$-long episodic signal}, if its joint probability density distribution can be written as
\begin{align}
\begin{split}
    f_{\{X_t\}_{t=1}^{n}} & (x_1,...,x_n) = \\
    & \left[ \prod_{k=0}^{K-1} f_{\{X_t\}_{t=1}^{T}}(x_{kT+1},...,x_{kT+T}) \right] \cdot f_{\{X_t\}_{t=1}^{\tau_0}}(x_{KT+1},...,x_{KT+\tau_0})
\end{split}
\end{align}
(where in the edge case $K=0$ we define the empty product to be 1).
We further denote $\pmb{\mu_0} \coloneqq E[(X_1,...,X_T)^\top] \in \mathbb{R}^T, \Sigma_0 \coloneqq \text{Cov}((X_1,...,X_T)^\top, (X_1,...,X_T)) \in \mathbb{R}^{T \times T}$.
\end{definition}

\paragraph{Expectation and covariance of an episodic signal:}
The expectations and covariance matrix of a whole episodic signal can be directly derived from the parameters $\pmb{\mu_0},\Sigma_0$ corresponding to the expectations and covariance matrix of a single episode.

\begin{proposition}[Expectation and covariance of an episodic signal]
\label{prop:episodic_signal_covariance}
Let $\{X_t\}_{t=1}^{n}$ be a $T$-long episodic signal with parameters $\pmb{\mu_0},\Sigma_0$.
The expectations $\pmb{\mu} \coloneqq E[(X_1,...,X_n)^\top] \in \mathbb{R}^n$ and covariance matrix $\Sigma \coloneqq \text{Cov}((X_1,...,X_n)^\top, (X_1,...,X_n)) \in \mathbb{R}^{n \times n}$ are uniquely determined by $\pmb{\mu_0}$ and $\Sigma_0$, respectively.
\end{proposition}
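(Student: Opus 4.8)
The plan is to read off both $\pmb{\mu}$ and $\Sigma$ directly from the factorized density of Definition~\ref{def:episodic_signal_copy}, exploiting the fact that the product form forces the episodes to be mutually independent, with each complete episode identically distributed to the reference episode. First I would observe that the factorization $f_{\{X_t\}_{t=1}^n} = \left[\prod_{k=0}^{K-1} f_{\{X_t\}_{t=1}^{T}}(\cdot)\right]\cdot f_{\{X_t\}_{t=1}^{\tau_0}}(\cdot)$ is precisely the statement that the $K$ complete blocks $(X_{kT+1},\dots,X_{kT+T})$ (for $0\le k\le K-1$) together with the terminal partial block $(X_{KT+1},\dots,X_{KT+\tau_0})$ are jointly independent, and that every complete block carries the same joint law as the reference episode $(X_1,\dots,X_T)$, whose mean is $\pmb{\mu_0}$ and covariance $\Sigma_0$.

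Second, for the terminal partial block I would invoke marginalization consistency: the factor $f_{\{X_t\}_{t=1}^{\tau_0}}$ is by definition the joint density of $(X_1,\dots,X_{\tau_0})$, hence the marginal of $f_{\{X_t\}_{t=1}^{T}}$ obtained by integrating out the last $T-\tau_0$ coordinates. Consequently the partial block is distributed as the first $\tau_0$ entries of a reference episode, so its mean is $((\pmb{\mu_0})_1,\dots,(\pmb{\mu_0})_{\tau_0})^\top$ and its covariance is the leading $\tau_0\times\tau_0$ principal submatrix of $\Sigma_0$.

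Third, I would assemble $\pmb{\mu}$ and $\Sigma$ entry-by-entry using the decomposition $t=k(t,T)\,T+\tau(t,T)$ of Definition~\ref{def:index_decomposition}. For the mean, $(\pmb{\mu})_t = E[X_t] = (\pmb{\mu_0})_{\tau(t,T)}$ for every $t$, since each coordinate inherits its marginal from the (complete or partial) episode containing it; thus $\pmb{\mu}$ is the periodic repetition of $\pmb{\mu_0}$, cropped after $n$ entries. For the covariance, given indices $s,t$, either they lie in different blocks, in which case independence gives $\mathrm{Cov}(X_s,X_t)=0$, or they share a block, in which case $\mathrm{Cov}(X_s,X_t)=(\Sigma_0)_{\tau(s,T),\tau(t,T)}$. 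This makes $\Sigma$ block-diagonal with $\Sigma_0$ repeated along the diagonal and the final block truncated to its leading $\tau_0\times\tau_0$ submatrix. In every case the entries depend only on $\pmb{\mu_0}$ and $\Sigma_0$, which establishes the claimed uniqueness.

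The only delicate point I anticipate is justifying the marginal-consistency identification of the terminal partial block with a prefix of the reference episode; everything else is mechanical bookkeeping of indices through the factorization and the index decomposition. I would be careful to emphasize that $f_{\{X_t\}_{t=1}^{\tau_0}}$ denotes the genuine marginal under the reference law rather than an independently posited density, since this is exactly what forces the partial block's parameters to be sub-objects of $\pmb{\mu_0},\Sigma_0$ instead of free parameters.
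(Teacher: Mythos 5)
Your proposal is correct and follows essentially the same route as the paper's proof, which likewise reads off $\mu_{t_1}=(\pmb{\mu_0})_{\tau_1}$ and $\Sigma_{t_1 t_2}=(\Sigma_0)_{\tau_1\tau_2}$ (if $k_1=k_2$, else $0$) entrywise from the factorized density via the index decomposition $t=kT+\tau$. You merely make explicit two points the paper leaves implicit -- the joint independence of the blocks and the identification of the terminal factor $f_{\{X_t\}_{t=1}^{\tau_0}}$ with the prefix marginal of the reference episode -- and both are correctly handled.
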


\begin{proof}
For any $t \in \{1,...,n\}$, denote $t=kT+\tau$ according to Definition~\ref{def:index_decomposition}.
From Eq.~\eqref{eq:episodic_dist} it is clear that $\forall t_1 = k_1 T + \tau_1, t_2 = k_2 T + \tau_2 \in \{1,...,n\}:$
\begin{align}
    \label{eq:episodic_signal_covariance}
    \begin{split}
\mu_{t_1} &= E[X_{t_1}] = (\pmb{\mu_0})_{\tau_1} \\
\Sigma_{t_1t_2} &= \text{Cov}(X_{t_1},X_{t_2}) = \begin{cases} (\Sigma_0)_{\tau_1\tau_2} & \text{if } k_1=k_2 \\ 0 & \text{if } k_1\ne k_2 \end{cases}
    \end{split}
\end{align}
\end{proof}

Proposition \ref{prop:episodic_signal_covariance} essentially means that $\pmb{\mu}$ consists of periodic repetitions of $\pmb{\mu_0}$, and $\Sigma$ consists of copies of $\Sigma_0$ as $T\times T$ blocks along its diagonal. For both parameters, the last repetition is cropped if $\tau(n,T) < T$.

\paragraph{Multivariate normal episodic signal:}
Some of the theoretical results in Section~\ref{sec:detailed_optimal_test} assume multivariate normality of the episodic signal. The formal definition of such a signal is given below.

\begin{definition}[Multivariate normal $T$-long episodic signal]
\label{def:multivariate_normal}
Let $\{X_t\}_{t=1}^{n}$ be a $T$-long episodic signal (Definition~\ref{def:episodic_signal_copy}).
For any $1\le \tau \le \text{min}(T,n)$, define $\pmb{\mu_\tau} \in \mathbb{R}^\tau$ to be the first $\tau$ elements of $\pmb{\mu_0}$ and $\Sigma_\tau \in \mathbb{R}^{\tau \times \tau}$ to be the upper-left $\tau \times \tau$ block of $\Sigma_0$.
The signal $\{X_t\}_{t=1}^{n}$ is \textit{multivariate normal} if $\forall 1\le \tau \le \text{min}(T,n)$, 
\begin{equation}
\label{eq:multivariate_normal}
    f_{X_1,...,X_\tau}(\pmb{x}) = (2\pi)^{-\tau/2} \text{det}(\Sigma_\tau)^{-1/2} e^{-\frac{1}{2} (\pmb{x}-\pmb{\mu_\tau})^\top \Sigma_\tau^{-1} (\pmb{x}-\pmb{\mu_\tau})}
\end{equation}
\end{definition}

From Definitions~\ref{def:episodic_signal_copy},\ref{def:multivariate_normal} it is clear that if $\{X_t\}_{t=1}^{n}$ form a multivariate normal $T$-long episodic signal, then in particular $X=(X_1,...,X_n)^\top \in \mathbb{R}^n$ is an $n$-dimensional multivariate normal variable, with expectations $\pmb{\mu}$ and covariance $\Sigma$ determined by Eq.~\eqref{eq:episodic_signal_covariance}.

\paragraph{Parameters estimation:}
As mentioned above, a possible way to estimate the parameters $\pmb{\mu_0},\Sigma_0$ of an episodic signal is to compute the mean vector and the covariance matrix of a dataset $\{x_{i\tau} | 1\le i\le N, 1\le\tau\le T\}$ of $N$ episodes assumed to satisfy Eq.~\eqref{eq:episodic_dist}.
According to the Central Limit Theorem~\citep{CLT1,CLT2}, since the episodes are i.i.d, for any time-step $\tau$ the estimate $(\hat{\pmb{\mu_0}})_\tau = \frac{1}{N}\sum_{i=1}^N x_{i\tau}$ is asymptotically normally-distributed around the true mean $(\pmb{\mu_0})_\tau$ with variance $\frac{\text{Var}((\pmb{\mu_0})_\tau)}{N}$.
Furthermore, in the private case of a multivariate normal signal, the covariance matrix estimate $(\hat{\Sigma_0})_{ij} = \frac{1}{N-1}\sum_{k=1}^N (x_{ik}-\bar{x}_i)(x_{jk}-\bar{x}_j)$ follows Wishart distribution~\citep{Wishart} (up to a factor of $N-1$), with $N-1$ degrees of freedom and variance $\text{Var}((\hat{\Sigma_0})_{ij}) = \frac{1}{N-1}\left( (\Sigma_0)_{ii}(\Sigma_0)_{jj}+(\Sigma_0)_{ij}^2 \right)$.

If $N$ is suspected to be too small for accurate estimation, it is possible to deal with the estimation error of the model parameters through regularization. One possible regularization is assuming absence of correlations between distant time-steps ($\exists\delta\in\mathbb{N},\forall |t_2-t_1|>\delta: (\Sigma_0)_{t_1t_2}=0$). Another is to essentially reduce $T$ through grouping of sequences of time-steps together (as we do in Section~\ref{sec:experiments}, for example).

In the analysis in the following sections we assume that both $\pmb{\mu_0}$ and $\Sigma_0$ are known.

\paragraph{Multidimensional signals:}
For simplicity of the theoretical discussion, we only consider one-dimensional signals: for any $t$, the random variable $X_t$ returns a scalar $x_t\in \mathbb{R}$.
However, a generalization to multidimensional signals ($x_t\in\mathbb{R}^d$) is straight-forward:
A $d$-dimensional $T$-long episodic signal is simply a one-dimensional $(dT)$-long episodic signal, where the observations arrive in groups of $d$ samples per group (i.e., $n$ is always an integer multiplication of $d$).
Since the various dimensions are equivalent to time-steps in the eyes of this model, the correlations between the various dimensions are inherently captured.
Note that for a large number of dimensions, the $\mathcal{O}(d^2T^2)$ degrees of freedom in the model may be impractical to estimate through a reference dataset.


\section{Likelihood-Ratio Test for Drift in Episodic Signal: Formal Development}
\label{sec:detailed_optimal_test}

In this section we look for an optimal hypothesis test for detection of a negative drift in multivariate normal episodic signal (see Definitions~\ref{def:episodic_signal_copy},\ref{def:multivariate_normal}).
The corresponding hypotheses are episodic signal with known parameters ($H_0$), and episodic signal with identical covariance matrix but smaller expected values ($H_A$), as defined below.
By "optimal test" we mean that given the test's significance level (i.e., type-I error rate), it should provide the maximum possible power (i.e., minimum type-II error rate) with respect to $H_A$.
To that end, we calculate the log-likelihood-ratio and use it (up to a monotonous transformation) as a test-statistic according to Neyman-Pearson lemma~\citep{NeymanPearson}.

In Section~\ref{sec:unif_asymptotics}, after proving optimality for a certain negative drift, we eliminate the multivariate-normality assumption and analyze the asymptotic power of the suggested statistical test.
In particular, we show that it is asymptotically superior to a simple threshold-test on the average reward.

Note that in the scope of this section we assume an individual test at a certain point of time. Adjustment of the significance level to sequential tests is handled in Section \ref{sec:sequential_test}.

Formally, the test is defined with respect to some real-valued random variables $X_1,...,X_n$.

\begin{definition}[Null hypothesis]
\label{def:H0}
Let $\{X_t\}_{t=1}^{n}$ be real-valued random variables, and let $T\in \mathbb{N}, \pmb{\mu_0} \in R^T, \Sigma_0 \in R^{T \times T}$.
The null hypothesis $H_0(T,\pmb{\mu_0},\Sigma_0)$ in the scope of this section, is that $\{X_t\}_{t=1}^{n}$ form a $T$-long episodic signal (Definition~\ref{def:episodic_signal_copy}), with known parameters $T,\pmb{\mu_0},\Sigma_0$.
For simplicity, we further assume that $\Sigma_0$ is of full-rank (i.e., invertible).
\end{definition}


We define a standard setup for most of the analysis below, both with and without the multivariate-normality assumption.

\begin{definition}[The standard setup]
\label{def:setup}
In the \textit{standard setup}, we denote by $X=\{X_t\}_{t=1}^n$ a $T$-long episodic signal for some $n,T\in\mathbb{N}$ (Definition~\ref{def:episodic_signal_copy}),
and let the null hypothesis $H_0$ be as in Definition~\ref{def:H0}, with known parameters $\pmb{\mu_0}\in\mathbb{R}^T, \Sigma_0 \in \mathbb{R}^{T\times T}$.

Note that under $H_0$, the complete signal's expectations $\pmb{\mu}\in \mathbb{R}^n$ and covariance matrix $\Sigma \in \mathbb{R}^{n\times n}$ are also known through Proposition~\ref{prop:episodic_signal_covariance}.

We also denote $k(t)=k(t,T),\tau(t)=\tau(t,T)$ as in Definition~\ref{def:index_decomposition}, and in particular $K\coloneqq k(n,T),\tau_0 \coloneqq \tau(n,T)$.
\end{definition}

\begin{definition}[The standard normal setup]
\label{def:normal_setup}
The \textit{standard normal setup} is the standard setup where $X$ is a multivariate-normal episodic signal (Definition~\ref{def:multivariate_normal}).
\end{definition}


\subsection{Uniform Degradation Test}
\label{sec:uniform_degradation}

The general alternative hypothesis we use assumes conservation of the correlations structure of $H_0$, along with decrease in the expectations.

\begin{definition}[General degradation hypothesis]
\label{def:H1_general}
Given the standard setup (Definition~\ref{def:setup}),
let $\mathbb{E} \subseteq \mathbb{R}^T$ s.t. $\forall \pmb{\epsilon_0} \in \mathbb{E}, 1 \le t \le T: (\pmb{\epsilon_0})_t \ge 0$. According to the \textit{$\mathbb{E}$-degradation hypothesis}, denoted $H_A(\mathbb{E})$, there exists $\pmb{\epsilon_0} \in \mathbb{E}$ such that $\{X_t\}_{t=1}^{n}$ form $\tilde{T}$-long episodic signal with the parameters $\tilde{T}=T, \tilde{\Sigma_0}=\Sigma_0$ and $\pmb{\tilde{\mu_0}}=\pmb{\mu_0}-\pmb{\epsilon_0}$.

In particular, according to Eq.~\eqref{eq:episodic_signal_covariance}, the covariance and the mean of the whole signal under $H_A(\mathbb{E})$ are $\tilde{\Sigma}=\Sigma$ and $\pmb{\tilde{\mu}}=\pmb{\mu} - \pmb{\epsilon}$, where $\pmb{\epsilon} = \pmb{\epsilon}(\pmb{\epsilon_0}) \in \mathbb{R}^n$ is a cyclic completion defined by $\forall t=kT+\tau: (\pmb{\epsilon})_{t} \coloneqq (\pmb{\epsilon_0})_{\tau}$.
\end{definition}


Proposition~\ref{prop:lr_general} calculates the log-likelihood-ratio with respect to the hypotheses in Definitions~\ref{def:H0},\ref{def:H1_general}, assuming a multivariate-normal episodic signal.
Still, to derive a concrete statistical test, further assumptions must be applied on $\mathbb{E}$. We begin with the \textit{uniform degradation} assumption, corresponding to a disturbance source that affects the whole signal uniformly. For example, in the context of Reinforcement Learning, such a model may refer to changes in constant costs or action costs, as well as certain environment dynamics whose change influences the various states in a similar way.

\begin{definition}[Uniform degradation hypothesis]
\label{def:H1_unif}
Let $\epsilon_0 > 0$. The uniform degradation hypothesis, denoted $H_A^{unif}(\epsilon_0)$, is a degradation hypothesis $H_A(\mathbb{E})$ with $\mathbb{E} \coloneqq \{ \epsilon \cdot \pmb{1} | \epsilon \ge \epsilon_0 \}$, where $\pmb{1} \coloneqq (1,...,1)^\top \in \mathbb{R}^T$.
\end{definition}

Fig.~\ref{fig:cheetah_degradation} demonstrates the empiric degradation in the rewards of a trained agent in HalfCheetah environment, following changes in gravity, mass, and control-cost (see Table~\ref{tab:scenarios} for details).
It seems that some modifications indeed cause a quite uniform degradation, while in others the degradation is mostly restricted to certain ranges of time. This may be important, in particular if the non-degraded time-steps happen to be assigned large weights by the test, as demonstrated in Section~\ref{sec:results}.
In Section~\ref{sec:partial_degradation} we suggest an alternative model, whose corresponding test is proved in Section~\ref{sec:results} to be more robust to such non-uniform degradation.

We now show that an optimal hypothesis test for detection of uniform degradation in multivariate normal episodic signal is a threshold-test on the weighted-mean of the signal, where the weights are derived from the inverted covariance matrix.

Note that according to Proposition~\ref{prop:episodic_signal_covariance}, the covariance matrix $\Sigma = \Sigma(\Sigma_0) \in \mathbb{R}^{n\times n}$ of the full signal is block-diagonal with the blocks being $\Sigma_0 \in \mathbb{R}^{T\times T}$ (or an upper-left block of $\Sigma_0$). Hence, the inverted $\Sigma$ is given directly by inverting $\Sigma_0$ (and possibly one upper-left block of $\Sigma_0$).

\begin{definition}[Uniform degradation weighted-mean]
\label{def:unif_weighted_mean}
Given the standard setup (Definition~\ref{def:setup}),
the \textit{uniform-degradation weighted-mean} of $X$ is $s_{unif}(X | \Sigma_0) \coloneqq W\cdot X$, where $W \coloneqq \pmb{1}^\top\cdot \Sigma^{-1} \in \mathbb{R}^n$.

Note that the first $KT$ elements of $W$ are $T$-periodic with $\forall 0\le k\le K-1: w_{kT+1},...,w_{kT+T}=\pmb{1}^\top\cdot\Sigma_0^{-1} \in \mathbb{R}^T$.
We define accordingly $W_0 \coloneqq \pmb{1}^\top\cdot\Sigma_0^{-1}$ and $W_{\tau_0} \coloneqq (w_{KT+1},...,w_{KT+\tau_0})^\top = \pmb{1}^\top\cdot\Sigma_{\tau_0}^{-1}$, where $\Sigma_{\tau_0}$ is the upper-left $\tau_0 \times \tau_0$ block of $\Sigma_0$.
\end{definition}

Proposition \ref{prop:unif_consistency} shows the consistency of the uniform-degradation weighted-mean, and Theorem \ref{theorem:unif_optimality2} shows that it derives an optimal hypothesis test for uniform degradation.

\begin{definition}[Threshold test]
\label{def:threshold_test}
Assume the standard setup (Definition~\ref{def:setup}), and let $S: \mathbb{R}^n \rightarrow \mathbb{R}$ (statistic), $\kappa \in \mathbb{R}$ (threshold) and $\rho \in [0,1]$ (edge-case probability).
The corresponding $\kappa$-\textit{threshold-test} is defined as follows:

Given the observations $\forall 1\le t\le n: X_t=x_t\in\mathbb{R}$, calculate the statistic $s=S(x_1,...,x_n)$.
If $s<\kappa$, reject $H_0$.
If $s=\kappa$, reject $H_0$ with probability $p=\rho$ (note that this is only relevant for non-continuous distributions, where $P(S=\kappa)\ne0$).
If $s>\kappa$, do not reject $H_0$.

We denote the significance level of the test $\alpha \coloneqq P(\text{reject }H_0 | H_0)$.
For simplicity, in the discussion below we often omit $\rho$, implicitly assuming continuous distribution of the signal.
\end{definition}

\begin{theorem}[Optimal test for uniform degradation; an extended formulation of Theorem~\ref{theorem:unif_optimality}]
\label{theorem:unif_optimality2}
Assume the standard normal setup (Definition~\ref{def:normal_setup}) with $H_A^{unif}(\epsilon_0)$ of Definition~\ref{def:H1_unif} as the alternative hypothesis, and let $\alpha \in (0,1)$.
Then, there exists $\kappa \in \mathbb{R}$ such that a $\kappa$-threshold-test on the uniform-degradation weighted-mean statistic has the greatest power among all the statistical tests with significance level $\tilde{\alpha} \le \alpha$.
\end{theorem}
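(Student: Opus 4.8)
The plan is to pass through the Neyman--Pearson lemma in its simple-versus-simple form and then promote the resulting per-alternative optimality to uniform optimality over the composite $H_A^{unif}(\epsilon_0)$. Under the standard normal setup both hypotheses are Gaussian with the same known, invertible covariance $\Sigma$ (block-diagonal with invertible blocks $\Sigma_0$ and $\Sigma_{\tau_0}$, by Proposition~\ref{prop:episodic_signal_covariance}), differing only in their means: $\pmb{\mu}$ under $H_0$ and $\pmb{\mu}-\epsilon\pmb{1}$ under the simple sub-alternative indexed by $\epsilon\ge\epsilon_0$, with $\pmb{1}\in\mathbb{R}^n$. Writing $Z\coloneqq X-\pmb{\mu}$, $s_\mu\coloneqq\pmb{1}^\top\Sigma^{-1}\pmb{\mu}$ and $a\coloneqq\pmb{1}^\top\Sigma^{-1}\pmb{1}>0$ (positivity from $\Sigma^{-1}\succ 0$), the normalizers cancel and the log-likelihood-ratio against a fixed $\epsilon$ becomes
\[
2\ln\frac{p(X\mid 0)}{p(X\mid\epsilon)} = (Z+\epsilon\pmb{1})^\top\Sigma^{-1}(Z+\epsilon\pmb{1})-Z^\top\Sigma^{-1}Z = 2\epsilon\big(s_{unif}(X)-s_\mu\big)+\epsilon^2 a .
\]
The key structural fact is that this depends on $X$ only through $s_{unif}(X)=\pmb{1}^\top\Sigma^{-1}X$, and is affine in $s_{unif}$ with positive slope $2\epsilon$.

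First I would fix a single $\epsilon>0$: the display shows that the Neyman--Pearson test rejects $H_0$ exactly when $s_{unif}$ drops below a constant, and that the rejection direction (small $s_{unif}$) is the same for every $\epsilon$. Next I would exploit that the $H_0$-distribution of $s_{unif}$ is free of $\epsilon$, so to reach significance $\alpha$ one sets the cutoff to the $\alpha$-quantile $\kappa\coloneqq q_\alpha(s_{unif}\mid H_0)$, and this $\kappa$ is identical across all $\epsilon\ge\epsilon_0$. Hence the single rule ``reject iff $s_{unif}<\kappa$'' is simultaneously Neyman--Pearson-optimal against every simple sub-alternative, and therefore uniformly most powerful for the composite $H_A^{unif}(\epsilon_0)$ among all tests with significance $\tilde\alpha\le\alpha$. (Equivalently, $\{p(X\mid\epsilon)\}_\epsilon$ has monotone likelihood ratio in $-s_{unif}$, so Karlin--Rubin gives the same conclusion.) Because the Gaussian law of $s_{unif}$ is continuous, the edge randomization $\rho$ of Definition~\ref{def:threshold_test} is unnecessary.

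A more self-contained route, matching the proof sketch, forms the composite likelihood ratio directly by replacing $L(H_A\mid X)$ with $\sup_{\epsilon\ge\epsilon_0}p(X\mid\epsilon)$, i.e.\ minimizing $g(\epsilon)\coloneqq 2\epsilon(s_{unif}-s_\mu)+\epsilon^2 a$ over $\epsilon\ge\epsilon_0$. The unconstrained minimizer is $(s_\mu-s_{unif})/a$, so the constrained minimizer is $\hat\epsilon=(s_\mu-s_{unif})/a$ for $s_{unif}\le s_0\coloneqq s_\mu-a\epsilon_0$ and $\hat\epsilon=\epsilon_0$ for $s_{unif}\ge s_0$ (this is Lemma~\ref{lemma:unif_optimality}). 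Substituting back, $2\ln LR$ equals $-(s_{unif}-s_\mu)^2/a$ on the first region and the affine $2\epsilon_0(s_{unif}-s_\mu)+a\epsilon_0^2$ on the second; both pieces are strictly increasing in $s_{unif}$ and meet at $s_0$ with common value $-a\epsilon_0^2$, so $LR$ is a continuous, strictly increasing function of $s_{unif}$ on all of $\mathbb{R}$, and a threshold on $LR$ is equivalent to a threshold on $s_{unif}$.

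The hard part is conceptual rather than computational: the Neyman--Pearson lemma only certifies optimality against simple alternatives, so the composite $\epsilon\ge\epsilon_0$ must be dealt with honestly. The resolution is precisely the $\epsilon$-independence of both the rejection direction and the null law of $s_{unif}$, which lifts per-alternative optimality to uniform optimality; in the composite-LR route the same fact reappears as the need to check continuity and monotonicity across the kink at $s_0$. The remaining bookkeeping -- assembling $\Sigma$, and hence $s_{unif}$, correctly from $\Sigma_0$ and the truncated block $\Sigma_{\tau_0}$ when the signal ends mid-episode -- is routine given Proposition~\ref{prop:episodic_signal_covariance} and also secures invertibility of $\Sigma$.
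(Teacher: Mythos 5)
Your proposal is correct, and its second route is the paper's proof almost verbatim: the paper (Lemma~\ref{lemma:unif_optimality} together with Eq.~\eqref{eq:lr_unif2}) minimizes the same parabola $2\epsilon\,(s_{unif}-W\pmb{\mu}) + \epsilon^2\,\pmb{1}^\top\Sigma^{-1}\pmb{1}$ over $\epsilon\ge\epsilon_0$, obtains the same piecewise expression (quadratic for $s_{unif}\le s_0$, affine for $s_{unif}\ge s_0$, meeting at $s_0$), and concludes strict monotonicity of $\lambda_{LR}$ in $s_{unif}$, hence equivalence of a threshold test on the likelihood ratio to a threshold test on $s_{unif}$. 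Where you genuinely diverge --- and in fact tighten the argument --- is your first route: the paper simply asserts, via its preliminaries, that a threshold test on the sup-likelihood ratio $L(H_0\mid X)/\sup_{\epsilon\ge\epsilon_0}p(X\mid\epsilon)$ is optimal, which is not literally the Neyman--Pearson lemma once $H_A$ is composite. Your observation that for each fixed $\epsilon$ the simple-vs-simple log-LR $2\epsilon(s_{unif}-s_\mu)+\epsilon^2 a$ is affine and increasing in $s_{unif}$, with an $\epsilon$-free rejection direction and an $\epsilon$-free null calibration, so that the single rule ``reject iff $s_{unif}<q_\alpha(s_{unif}\mid H_0)$'' is simultaneously Neyman--Pearson-optimal against every sub-alternative (equivalently, monotone likelihood ratio in $-s_{unif}$ plus Karlin--Rubin), is the standard rigorous patch for this looseness and delivers the uniformly-most-powerful conclusion that the theorem statement actually requires; the paper's monotonicity computation then appears as a corollary rather than the load-bearing step. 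The bookkeeping points you flag --- continuity of the Gaussian law of $s_{unif}$ making the edge randomization $\rho$ of Definition~\ref{def:threshold_test} unnecessary, and invertibility of the truncated block $\Sigma_{\tau_0}$ as a principal submatrix of the positive-definite $\Sigma_0$ --- are also correct and are left implicit in the paper.
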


\begin{proof}
The proof is available in Appendix \ref{app:deg_calculations}.
Roughly speaking, according to Neyman-Pearson lemma~\citep{NeymanPearson} a threshold-test on the likelihood-ratio is optimal, hence it is sufficient to show that the uniform-degradation weighted-mean $s_{unif}$ is monotonous with the likelihood-ratio.

Note that the likelihood-ratio is taken with respect to a complex hypothesis $H_A^{unif}(\epsilon_0)$ that has a degree of freedom $\epsilon \in [\epsilon_0,\infty)$, where $\epsilon$ depends on $X$.
Some algebraic work is required to show that $\epsilon$ only depends on $X$ through $s_{unif}$, and that the whole likelihood-ratio is monotonous with $s_{unif}$.
\end{proof}

Algorithm~\ref{algo:individual_test} describes the threshold-test in the non-sequential framework.
The uniform-degradation test-statistic (or any other function) can be fed into the algorithm as an input.

As can be seen, the rejection threshold $\kappa_\alpha \in \mathbb{R}$ is chosen according to the desired type-I error rate $\alpha \in (0,1)$, using a bootstrap mechanism described in Algorithm~\ref{algo:individual_bootstrap}. $B$ bootstrap-samples are sampled from a reference dataset of $N$ episodes of the signal, assumed to follow the null hypothesis $H_0$ of Definition~\ref{def:H0}.
For each bootstrap-sample\footnote{As a terminological note, this sampling mechanism can be considered a bootstrap in the sense of distribution estimation from a single dataset using sampling with replacement; or can be merely considered a Monte-Carlo simulation in the sense that the test signal is compared to distribution estimated by an external simulative source (the reference data).} the test-statistic is calculated, yielding a bootstrap-estimate for the distribution of the statistic under $H_0$.
The rejection threshold $\kappa_\alpha$ is set to be the $\alpha$-quantile of the estimated distribution.
If the estimated distribution is close to the true distribution, then we have $P\left( s \le \kappa_\alpha | H_0 \right) \approx P\left( s \le q_\alpha(s | H_0) | H_0 \right) = \alpha$, where $q_\alpha(s | H_0)$ is the $\alpha$-quantile of $s$ under $H_0$.

\subsubsection{Asymptotic analysis in absence of the normality assumption}
\label{sec:unif_asymptotics}

The optimality of the uniform-degradation weighted-mean test (proved in Theorem~\ref{theorem:unif_optimality2}) relies on the assumption that the episodic signal is multivariate normal.
In this section we show that even in absence of the normality assumption, the test while not necessary is asymptotically superior to a standard threshold-test on the average of the signal (though it is not necessarily the optimal test anymore).

Since the episodes in the signal are still assumed to be i.i.d, both a simple mean and the uniform-degradation weighted-mean $s_{unif}$ are asymptotically normal (where $n \rightarrow \infty$ with respect to a constant episode length $T\in\mathbb{N}$).
For simplicity of the asymptotic analysis below, we focus on integer number of episodes, i.e., $n=KT$ and $K\rightarrow \infty$ (rather than $n\rightarrow \infty$).
We also define normalized variants of our statistics, with zero-mean and unit-variance per episode:
\begin{align}
\label{eq:normalized_statistics}
\begin{split}
    &s_{simp}(\{X_t\}_{t=1}^n) \coloneqq \sum_{t=1}^n X_t \\
    &\tilde{s}_{simp}^K \coloneqq \frac{s_{simp}(\{X_t\}_{t=1}^{KT}) - K\cdot E\left[ s_{simp}(\{X_t\}_{t=1}^T) \big| H_0 \right] }{\sqrt{K\cdot\text{Var}(s_{simp}(\{X_t\}_{t=1}^T) \big| H_0)}} \\
    &\tilde{s}_{unif}^K \coloneqq \frac{s_{unif}(\{X_t\}_{t=1}^{KT}) - K\cdot E\left[ s_{unif}(\{X_t\}_{t=1}^T) \big| H_0\right] }{\sqrt{K\cdot\text{Var}(s_{unif}(\{X_t\}_{t=1}^T) \big| H_0)}} \\
\end{split}
\end{align}
Note that Algorithm~\ref{algo:individual_test} is invariant to linear transformation of the statistic, since the test-statistic and the reference bootstrap distribution pass through the same transformation.
Hence, the tests on $s_{simp},s_{unif}$ are equivalent to the tests on $\tilde{s}_{simp},\tilde{s}_{unif}$, respectively.

Since $\tilde{s}_{simp},\tilde{s}_{unif}$ are asymptotically normal with zero-mean and unit-variance under $H_0$, the desired test threshold for sufficiently large $K$ is $\kappa \approx q_\alpha^0$, where $q_\alpha^0$ is the $\alpha$-quantile of the standard normal distribution.
This threshold should be indirectly estimated by Algorithm~\ref{algo:individual_bootstrap}.

Note that the sequential test of Algorithm~\ref{algo:sequential_test} in Section~\ref{sec:sequential_test} applies the individual tests of Algorithm~\ref{algo:individual_test} on a constant number of episodes (defined by the lookback horizon $h$).
Hence, in the context of the sequential tests suggested in this work, the asymptotic analysis in this section refers to a very long lookback horizon, rather than very long running time.
Regardless, as the analysis refers to a varying $n$, we need to generalize the standard setup (that assumes a constant signal length $n$).

\begin{definition}[The rolling setup]
\label{def:rolling_setup}
Let $\{X_t\}_{t\in\mathbb{N}}$ be an infinite sequence of real-valued random variables.
In the \textit{rolling setup}, for any $n\in\mathbb{N}$ we assume the standard setup of Definition~\ref{def:setup} with relation to the variables $\{X_t\}_{t=1}^n$ and the parameters $T,\pmb{\mu_0},\Sigma_0$ (which are independent of $n$).
\end{definition}


We first show that the test threshold $\kappa = q_\alpha^0$ indeed yields asymptotic significance level of $1-\alpha$, and guarantees asymptotic rejection of $H_0$ for uniform degradation of any size $\epsilon>0$.
Note that Algorithm~\ref{algo:individual_test} does not pick $q_\alpha^0$ directly as a threshold, but should estimate it indirectly through Algorithm~\ref{algo:individual_bootstrap}.

\begin{proposition}[Uniform degradation test consistency]
\label{prop:unif_test_consistency}
Given the rolling setup, we define the alternative hypothesis $H_A^\epsilon$ to be that $\forall K \in \mathbb{N}$, the parameters of the signal $\{X_t\}_{t=1}^{KT}$ are $\pmb{\mu_0}-\epsilon\cdot\pmb{1}, \Sigma_0$ (i.e., $H_A(\{\epsilon\pmb{1}\})$ in terms of Definition~\ref{def:H1_general}).
Given a significance level $\alpha\in(0,1)$, we have
\begin{align*}
    \text{lim}_{K\rightarrow \infty} P\left( s \le q_\alpha^0 \big| H_0 \right) &= \alpha \\
    \text{lim}_{K\rightarrow \infty} P\left( s \le q_\alpha^0 \big| H_A^\epsilon \right) &= 1 \\
\end{align*}
for both $s=\tilde{s}_{simp}^K$ and $s=\tilde{s}_{unif}^K$ of Eq.~\eqref{eq:normalized_statistics},
where $q_\alpha^0$ is the $\alpha$-quantile of the standard normal distribution.
\end{proposition}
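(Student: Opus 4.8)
The plan is to exploit the block-diagonal structure of $\Sigma$ to write each statistic as a sum of independent, identically-distributed per-episode contributions, and then invoke the Central Limit Theorem. For $n=KT$, because $\Sigma$ is block-diagonal with blocks $\Sigma_0$, the simple sum factorizes as $s_{simp}(\{X_t\}_{t=1}^{KT}) = \sum_{k=0}^{K-1} U_k$ with $U_k \coloneqq \sum_{t=1}^{T} X_{kT+t}$, and since the weight vector $W$ is $T$-periodic (Definition~\ref{def:unif_weighted_mean}) the weighted sum factorizes as $s_{unif}(\{X_t\}_{t=1}^{KT}) = \sum_{k=0}^{K-1} V_k$ with $V_k \coloneqq W_0 \cdot (X_{kT+1},\dots,X_{kT+T})^\top$. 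By Definition~\ref{def:episodic_signal_copy} the episodes are i.i.d., so $\{U_k\}$ and $\{V_k\}$ are each i.i.d.\ sequences with finite, strictly positive per-episode variances $\sigma_{simp}^2 = \pmb{1}^\top \Sigma_0 \pmb{1}$ and $\sigma_{unif}^2 = \pmb{1}^\top \Sigma_0^{-1}\pmb{1}$ (positivity follows from $\Sigma_0 \succ 0$), which makes the normalizations in Eq.~\eqref{eq:normalized_statistics} well-defined.

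Next I would treat the significance statement. Under $H_0$ the constants subtracted and divided in Eq.~\eqref{eq:normalized_statistics} are exactly the per-episode mean and variance of $U_k$ (resp.\ $V_k$), so the Central Limit Theorem gives $\tilde{s}_{simp}^K \xrightarrow{d} N(0,1)$ and $\tilde{s}_{unif}^K \xrightarrow{d} N(0,1)$. Because the limiting standard-normal law is continuous at $q_\alpha^0$, convergence in distribution yields pointwise convergence of the CDFs there, hence $\lim_{K\to\infty} P(s \le q_\alpha^0 \mid H_0) = \Phi(q_\alpha^0) = \alpha$ for both statistics, as required.

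For the power statement I would keep the same $H_0$-based standardization but recompute the true mean under $H_A^\epsilon$. The covariance is unchanged, so the per-episode variances are identical to the $H_0$ case, whereas the per-episode means decrease: $E[U_k\mid H_A^\epsilon]$ drops by $\epsilon T$ and $E[V_k\mid H_A^\epsilon]$ drops by $\epsilon\,\pmb{1}^\top \Sigma_0^{-1}\pmb{1}$. Writing $Z_K$ for the statistic recentered at the \emph{true} $H_A^\epsilon$ mean (which is asymptotically $N(0,1)$ by the CLT), one gets $\tilde{s}_{simp}^K = Z_K - \sqrt{K}\,\epsilon T / \sigma_{simp}$ and $\tilde{s}_{unif}^K = Z_K - \sqrt{K}\,\epsilon\,(\pmb{1}^\top\Sigma_0^{-1}\pmb{1}) / \sigma_{unif}$. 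Since $\epsilon>0$ and both drift coefficients are strictly positive, the deterministic term tends to $-\infty$, so $P(s \le q_\alpha^0 \mid H_A^\epsilon) = P\big(Z_K \le q_\alpha^0 + \sqrt{K}\,\delta\big)$ with a threshold $\sqrt{K}\,\delta \to +\infty$; a routine argument (for any $M$, eventually the threshold exceeds $M$, so the $\liminf$ is at least $\Phi(M)$, and letting $M\to\infty$ forces the limit to be $1$) gives the claimed power of $1$.

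I expect the only delicate points to be bookkeeping rather than depth: confirming that $\Sigma_0 \succ 0$ makes both normalizing variances strictly positive (so the statistics are genuinely defined and the CLT applies), and correctly upgrading the two modes of convergence---ordinary CDF convergence at the fixed continuity point $q_\alpha^0$ for significance, versus convergence against a diverging threshold for power. The direction of the drift (strictly toward $-\infty$, not $+\infty$) is where the sign of the degradation and the positivity $\pmb{1}^\top\Sigma_0^{-1}\pmb{1}>0$ enter, and it is exactly what guarantees the test rejects rather than accepts under $H_A^\epsilon$.
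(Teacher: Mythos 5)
Your proof is correct and follows essentially the same route as the paper's: it decomposes each statistic into i.i.d.\ per-episode terms whose means and variances coincide with those computed in Lemma~\ref{lemma:unif_properties}, applies the CLT under $H_0$ for the significance claim, and under $H_A^\epsilon$ identifies the same $\sqrt{K}$-drift terms (note $W_0\pmb{1}=\pmb{1}^\top\Sigma_0^{-1}\pmb{1}$, so your drift coefficients match the paper's exactly). If anything, your explicit $\liminf$-over-$M$ argument for the diverging threshold is slightly more careful than the paper's direct passage to $\lim_{K\to\infty}\Phi\bigl(q_\alpha^0+\sqrt{K}\,\delta\bigr)=1$.
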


\begin{proof}
The proof, fully provided in Appendix~\ref{app:deg_calculations}, applies the Central Limit Theorem~\citep{CLT1,CLT2} on the i.i.d episodes.
\end{proof}

Theorem~\ref{theorem:unif_power2} quantifies the asymptotic power of the threshold test for both simple mean and uniform-degradation weighted-mean. To that end, we consider uniform-degradation scaled as $\epsilon \propto \frac{1}{\sqrt{K}}$.
We also denote by $\Phi$ the Cumulative Distribution Function of the standard normal distribution

\begin{theorem}[Uniform degradation test asymptotic power; an extended formulation of Theorem~\ref{theorem:unif_power}]
\label{theorem:unif_power2}
Given the rolling setup, we define the alternative hypothesis $H_A^{\epsilon,K}$ to be that $\forall K \in \mathbb{N}$, the parameters of the signal $\{X_t\}_{t=1}^{KT}$ are $\pmb{\mu_0}-\frac{\epsilon}{\sqrt{K}}\cdot\pmb{1}, \Sigma_0$ (i.e., $H_A(\{\frac{\epsilon}{\sqrt{K}}\pmb{1}\})$ of Definition~\ref{def:H1_general}).
Given a significance level $\alpha\in(0,1)$, we have
\begin{align*}
    \text{lim}_{K\rightarrow \infty} P\left( \tilde{s}_{simp}^K \le q_\alpha^0 \big| H_A^{\epsilon,K} \right) &= \Phi\left(q_\alpha^0 + \frac{\epsilon T}{\sqrt{\pmb{1}^\top \Sigma_0 \pmb{1}}}\right) \\
    \le \Phi\left(q_\alpha^0 + \epsilon \sqrt{\pmb{1}^\top \Sigma_0^{-1} \pmb{1}}\right) &= \text{lim}_{K\rightarrow \infty} P\left( \tilde{s}_{unif}^K \le q_\alpha^0 \big| H_A^{\epsilon,K} \right)
\end{align*}
\end{theorem}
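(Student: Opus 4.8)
The plan is to exploit the fact that, for $n = KT$, both $s_{simp}$ and $s_{unif}$ are \emph{sums of $K$ i.i.d.\ per-episode contributions}: since $\Sigma$ is block-diagonal with blocks $\Sigma_0$, its inverse is block-diagonal with blocks $\Sigma_0^{-1}$, so $s_{unif}(\{X_t\}_{t=1}^{KT}) = \sum_{k=0}^{K-1} \pmb{1}^\top\Sigma_0^{-1}(X_{kT+1},\dots,X_{kT+T})^\top$, and likewise $s_{simp}$ is a sum of per-episode sums. Hence the Central Limit Theorem applies to each. The crucial structural observation is that $H_A^{\epsilon,K}$ shifts only the mean and leaves $\Sigma_0$ (hence the per-episode variance, and therefore the normalizing denominator in Eq.~\eqref{eq:normalized_statistics}) unchanged. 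First I would compute, for each statistic, the per-episode mean and variance under $H_0$ and under $H_A^{\epsilon,K}$ (this is the content of Lemma~\ref{lemma:unif_properties}).

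Writing $\sigma^2$ for the per-episode $H_0$-variance and $c$ for the constant by which the per-episode mean drops when $\pmb{\mu_0}$ is replaced by $\pmb{\mu_0} - \frac{\epsilon}{\sqrt K}\pmb{1}$, I would decompose the normalized statistic under $H_A^{\epsilon,K}$ as $\tilde{s}^K = \frac{1}{\sqrt K \sigma}\sum_{k} Z_k - \frac{\epsilon c}{\sigma}$, where the $Z_k$ are the centered per-episode terms. Interpreting the alternative as a pure location shift of the $H_0$ law, the $Z_k$ are i.i.d.\ with a $K$-independent distribution, so the standardized sum converges to a standard normal while the deterministic drift tends to the constant $\epsilon c/\sigma$; thus $\tilde{s}^K$ is asymptotically normal with mean $-\epsilon c/\sigma$ and unit variance. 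Evaluating $\lim_K P(\tilde s^K \le q_\alpha^0\mid H_A^{\epsilon,K}) = \Phi(q_\alpha^0 + \epsilon c/\sigma)$ then reduces to substituting the two constant-pairs. For $s_{simp}$ one gets $c = \pmb{1}^\top\pmb{1} = T$ and $\sigma^2 = \pmb{1}^\top\Sigma_0\pmb{1}$, yielding drift $\epsilon T/\sqrt{\pmb{1}^\top\Sigma_0\pmb{1}}$; for $s_{unif}$ both $c$ and $\sigma^2$ equal $\pmb{1}^\top\Sigma_0^{-1}\pmb{1}$ (since $\mathrm{Var}(\pmb{1}^\top\Sigma_0^{-1}X) = \pmb{1}^\top\Sigma_0^{-1}\Sigma_0\Sigma_0^{-1}\pmb{1}$), yielding drift $\epsilon\sqrt{\pmb{1}^\top\Sigma_0^{-1}\pmb{1}}$. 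This matches both outer sides of the claim.

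It remains to prove the middle inequality. Since $\Phi$ is strictly increasing and $\epsilon>0$, it suffices to show $T/\sqrt{\pmb{1}^\top\Sigma_0\pmb{1}} \le \sqrt{\pmb{1}^\top\Sigma_0^{-1}\pmb{1}}$, i.e.\ $T^2 \le (\pmb{1}^\top\Sigma_0\pmb{1})(\pmb{1}^\top\Sigma_0^{-1}\pmb{1})$. I would write this in terms of the symmetric square root $\Sigma_0^{1/2}$: with $u = \Sigma_0^{1/2}\pmb{1}$ and $v = \Sigma_0^{-1/2}\pmb{1}$ one has $\|u\|^2 = \pmb{1}^\top\Sigma_0\pmb{1}$, $\|v\|^2 = \pmb{1}^\top\Sigma_0^{-1}\pmb{1}$, and $\langle u,v\rangle = \pmb{1}^\top\pmb{1} = T$, so the bound is exactly Cauchy--Schwarz, $\langle u,v\rangle^2 \le \|u\|^2\|v\|^2$.

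I expect the only genuine subtlety to be the CLT under $H_A^{\epsilon,K}$: because the per-episode law itself depends on $K$ through the $\epsilon/\sqrt K$ shift, this is formally a triangular array rather than a single i.i.d.\ sequence. The decomposition above isolates the $K$-dependence into a deterministic drift, so that (under the location-shift reading of Definition~\ref{def:H1_general}, or more generally via a Lindeberg/Lyapunov condition on the per-episode terms) the random part reduces to a standard i.i.d.\ CLT; I would flag this reduction explicitly rather than invoking the i.i.d.\ CLT verbatim.
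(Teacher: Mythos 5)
Your proposal is correct and follows essentially the same route as the paper: the per-episode mean/variance computations (Lemma~\ref{lemma:unif_properties}), the CLT on the i.i.d.\ episodes with the $\epsilon/\sqrt{K}$ shift absorbed into a constant drift (the paper does this by writing $F_{\tilde{s}^K\mid H_A^{\epsilon,K}}(q_\alpha^0)=F_{\tilde{s}^K+\text{drift}\mid H_0}(q_\alpha^0)$, exactly your location-shift decomposition), and Cauchy--Schwarz applied to $\Sigma_0^{1/2}\pmb{1}$ and $\Sigma_0^{-1/2}\pmb{1}$ for the middle inequality. Your explicit flagging of the triangular-array issue is a sound refinement of a step the paper leaves implicit, not a different argument.
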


\begin{proof}
Similarly to Proposition~\ref{prop:unif_test_consistency}, the proof applies the Central Limit Theorem on the i.i.d episodes to calculate the asymptotic properties. Full details are provided in Appendix~\ref{app:deg_calculations}.
\end{proof}


Note that while Theorem~\ref{theorem:unif_optimality2} shows optimality of the uniform-degradation weighted-mean test for multivariate-normal episodic signal,
Theorem~\ref{theorem:unif_power2} proves that even in absence of normality the test is asymptotically superior to a threshold-test on the simple mean.

Finally, we quantify the difference of power between the tests.

\begin{definition}[Uniform degradation asymptotic power gain]
\label{def:unif_power_gain}
Given the rolling setup, the \textit{uniform-degradation power gain} is defined to be $G^2 \coloneqq \frac{(\pmb{1}^\top \Sigma_0^{-1} \pmb{1})(\pmb{1}^\top \Sigma_0 \pmb{1})}{T^2}$.

Note that according to Theorem~\ref{theorem:unif_power2}, if the asymptotic power of the simple-mean threshold-test with relation to the alternative hypothesis $H_A^{\epsilon,K}$ is $\Phi(q_\alpha^0+y)$ (where $y\in\mathbb{R}$), then the asymptotic power of the weighted-mean threshold-test is $\Phi(q_\alpha^0+G\cdot y)$.
%
\end{definition}

\begin{proposition}[Uniform degradation test asymptotic power gain]
\label{prop:unif_power_gain2}
Under the setup of Theorem~\ref{theorem:unif_power2},
there exist positive weights $\{w_{ij}\}_{i,j=1}^T$ such that the uniform-degradation power gain is
$$G^2 = 1 + \sum_{i,j=1}^T w_{ij}(\lambda_i-\lambda_j)^2$$
where $\{\lambda_i\}_{i=1}^T$ are the eigenvalues of $\Sigma_0$.
\end{proposition}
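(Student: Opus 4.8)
The plan is to diagonalize $\Sigma_0$ and reduce $G^2$ to a weighted product of the two quadratic forms $\pmb{1}^\top\Sigma_0^{-1}\pmb{1}$ and $\pmb{1}^\top\Sigma_0\pmb{1}$ expressed in the eigenbasis, then rearrange the resulting double sum algebraically into the claimed squared-difference form. First I would write the spectral decomposition $\Sigma_0=Q\Lambda Q^\top$ with $Q$ orthogonal and $\Lambda=\mathrm{diag}(\lambda_1,\dots,\lambda_T)$, where every $\lambda_i>0$ since $\Sigma_0$ is a full-rank covariance matrix (as assumed in Definition~\ref{def:H0}). Setting $\pmb{v}\coloneqq Q^\top\pmb{1}$, orthogonality of $Q$ gives $\sum_{i=1}^T v_i^2=\|\pmb{v}\|^2=\|\pmb{1}\|^2=T$, while the two forms become diagonal: $\pmb{1}^\top\Sigma_0\pmb{1}=\sum_i \lambda_i v_i^2$ and $\pmb{1}^\top\Sigma_0^{-1}\pmb{1}=\sum_i v_i^2/\lambda_i$.

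Next I would normalize by introducing $p_i\coloneqq v_i^2/T\ge 0$, so that $\sum_i p_i=1$, which turns $G^2$ into a clean double sum,
\[
G^2=\Big(\sum_{i=1}^T\tfrac{p_i}{\lambda_i}\Big)\Big(\sum_{j=1}^T\lambda_j p_j\Big)=\sum_{i,j=1}^T p_i p_j\,\frac{\lambda_j}{\lambda_i}.
\]
The crux of the argument is a symmetrization step: pairing the $(i,j)$ and $(j,i)$ terms and using the elementary identity $\frac{\lambda_j}{\lambda_i}+\frac{\lambda_i}{\lambda_j}=2+\frac{(\lambda_i-\lambda_j)^2}{\lambda_i\lambda_j}$. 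The diagonal terms contribute $\sum_i p_i^2$, and combined with the constant $2$ coming from each off-diagonal pair this reassembles into $\big(\sum_i p_i\big)^2=1$; everything else collapses into the squared-difference remainder. Concretely, this yields
\[
G^2=1+\sum_{i<j}p_i p_j\,\frac{(\lambda_i-\lambda_j)^2}{\lambda_i\lambda_j},
\]
from which one reads off the symmetric weights $w_{ij}\coloneqq \frac{p_i p_j}{2\lambda_i\lambda_j}=\frac{v_i^2 v_j^2}{2T^2\lambda_i\lambda_j}$, summed over all $i,j$ (the diagonal drops out automatically since $(\lambda_i-\lambda_i)^2=0$).

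The weights are manifestly nonnegative because all $\lambda_i>0$, and are strictly positive precisely on those index pairs for which $\pmb{1}$ has a nonzero component along both eigenvectors, so the decomposition immediately gives $G^2\ge 1$ with equality iff all eigenvalues carrying weight coincide — consistent with the Cauchy–Schwarz bound used in Theorem~\ref{theorem:unif_power2}. I do not expect a genuine obstacle here: the computation is elementary once the eigenbasis is chosen, and the only point demanding care is the bookkeeping in the symmetrization — keeping track of the diagonal contribution and the factor of $2$ so that the constant terms sum exactly to $\big(\sum_i p_i\big)^2=1$ rather than to some spurious multiple of it.
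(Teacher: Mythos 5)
Your proof is correct and takes essentially the same route as the paper's: both diagonalize $\Sigma_0$, reduce $G^2$ to the double sum $\frac{1}{T^2}\sum_{i,j}(v_iv_j)^2\,\lambda_i/\lambda_j$ in the eigenbasis, and symmetrize via $\frac{\lambda_i}{\lambda_j}+\frac{\lambda_j}{\lambda_i}=2+\frac{(\lambda_i-\lambda_j)^2}{\lambda_i\lambda_j}$ to arrive at exactly the same weights $w_{ij}=\frac{v_i^2v_j^2}{2T^2\lambda_i\lambda_j}$. One small point in your favor: your observation that the weights are strictly positive only on pairs where $\pmb{1}$ has a nonzero component along both eigenvectors is actually more careful than the paper, which justifies $w_{ij}>0$ by asserting that $u_i=U_{i\cdot}\pmb{1}\neq0$ always holds for an orthogonal $U$ --- false in general (a row proportional to $(1,-1)/\sqrt{2}$ sums to zero), so the proposition is safest read with nonnegative weights, as your version makes explicit.
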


\begin{proof}
The result is received from simple algebra after diagonalizing the symmetric positive-definite covariance matrix $\Sigma_0$.
The full details are available in Appendix~\ref{app:deg_calculations}.
\end{proof}

Clearly, the asymptotic power gain $G$ becomes larger as the covariance matrix $\Sigma_0$ introduces more heterogeneous eigenvalues.
Note that in the independent case, the eigenvalues are simply the variances of the different time-steps.
In particular, in the i.i.d case, the variances are identical and the gain becomes $G=1$, which is consistent with the fact that the two tests are equivalent in this case.


\subsection{Partial Degradation Test}
\label{sec:partial_degradation}

Definition~\ref{def:H1_unif} assumes uniform degradation over all the time-steps in every episode.
However, the effects of many environmental changes may focus on certain states (which is translated in our model-free setup into "certain time-steps"). An example is available in Fig.~\ref{fig:cheetah_degradation}, as discussed before.
To model such effects we introduce the \textit{partial degradation hypothesis}.

\begin{definition}[Partial degradation hypothesis]
\label{def:H1_part}
Let $\epsilon > 0, p\in(0,1)$. Define $A_m^T \coloneqq \{ a_0\in\{0,1\}^T | \sum_{t=1}^T (a_0)_t = m\}$ the set of binary vectors with exactly $m$ one-entries.
The partial degradation hypothesis, denoted $H_A^{part}(\epsilon,p)$, is a degradation hypothesis $H_A(\mathbb{E})$ (see Definition~\ref{def:H1_general}) with $\mathbb{E} \coloneqq \{ \epsilon\cdot a_0 | a_0 \in A_{\lceil pT \rceil}^T \}$.
\end{definition}

\paragraph{Interpretation:} As a private case of Definition~\ref{def:H1_general}, Definition~\ref{def:H1_part} assumes conservation of the correlations structure.
One possible interpretation of this assumption is causal relationships (where change in a certain time-step affects any other time-steps correlated with it).
Another possible interpretation is that the partial degradation hypothesis does not restrict the degradation to only $m=\lceil pT \rceil$ time-steps, but rather distributes the degradation from $m$ time-steps to all the episode, according to the same relations that created the correlations in the signal from the first place.

Similarly to the case of uniform-degradation, we can use the likelihood-ratio to derive a test-statistic and prove its approximate optimality with respect to $H_A^{part}(\epsilon,p)$. 

\begin{definition}[Partial degradation mean]
\label{def:part_weighted_mean}
Given the standard setup (Definition~\ref{def:setup}),
denote $\tilde{X} \coloneqq X-\mu$, $\tilde{s} \coloneqq \Sigma^{-1}\tilde{X} \in \mathbb{R}^n$ and $\forall: 1\le \tau \le T: \tilde{S}_\tau \coloneqq \sum_{k=0}^{\lfloor \frac{n-\tau}{T}\rfloor} \tilde{s}_{kT+\tau}$.
Given $a_0 \in \{0,1\}^T$ denote $o(a_0) \coloneqq \{1\le t\le T | (a_0)_t=1\}$.
Given $p\in(0,1)$, the \textit{$p$-partial-degradation mean} of $X$ is $s_{part}(X;p) \coloneqq \text{min}_{a_0\in A_m^T} \sum_{\tau\in o(a_0)} \tilde{S}_\tau$, where $m(p)=\lceil pT \rceil$ and $A_m^T$ is defined as in Definition~\ref{def:H1_part}.
\end{definition}

Note that while $a$ has $|A_m^T| = \binom{T}{m}$ possible values, to compute $s_{part}$ we only need to sum the lowest $m$ values in $\{\tilde{S}_\tau\}_{\tau=1}^T$.


\begin{theorem}[Optimal test for partial degradation]
\label{theorem:part_optimality2}
Consider the standard normal setup (Definition~\ref{def:normal_setup}) with $H_A^{part}(\epsilon,p)$ of Definition~\ref{def:H1_part} as the alternative hypothesis, and let $\alpha \in (0,1)$.
Denote by $P_\alpha$ the largest possible power (with respect to $H_A^{part}(\epsilon,p)$) of a statistical test with significance level $\le \alpha$.
Then, there exists $\kappa \in \mathbb{R}$ such that the power of the $\kappa$-threshold-test on the partial-degradation mean is $P_\alpha - \mathcal{O}(\epsilon)$ (where $\mathcal{O}(\epsilon)$ is defined with relation to $\epsilon\rightarrow0$).
\end{theorem}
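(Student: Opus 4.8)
The plan is to extract the exact Neyman--Pearson statistic from the log-likelihood-ratio, recognize $s_{part}$ as this statistic stripped of an $\mathcal{O}(\epsilon)$ quadratic correction, and then convert the resulting pointwise closeness of the two statistics into an $\mathcal{O}(\epsilon)$ gap in power. First I would compute the log-likelihood-ratio for $H_A^{part}(\epsilon,p)$. Writing $\tilde{X}=X-\pmb{\mu}$ and using that, under the subset $a_0\in A_m^T$, the signal is normal with mean $\pmb{\mu}-\epsilon\,a$ (where $a$ is the cyclic completion of $a_0$, $a_{kT+\tau}=(a_0)_\tau$) and covariance $\Sigma$, the quadratic terms $\tilde{X}^\top\Sigma^{-1}\tilde{X}$ cancel and
\begin{align*}
2\ln LR(X) = \min_{a_0\in A_m^T}\left[\, 2\epsilon\, a^\top\Sigma^{-1}\tilde{X} + \epsilon^2\, a^\top\Sigma^{-1}a \,\right].
\end{align*}
By the Neyman--Pearson lemma \citep{NeymanPearson}, applied to the supremum-likelihood of the complex hypothesis exactly as in the preliminaries, the threshold-test on this quantity --- equivalently on
\begin{align*}
S_{opt}(X) \coloneqq \min_{a_0\in A_m^T}\left[\, a^\top\tilde{s} + \tfrac{\epsilon}{2}\, a^\top\Sigma^{-1}a \,\right], \qquad \tilde{s}\coloneqq\Sigma^{-1}\tilde{X},
\end{align*}
is optimal and attains the maximal power $P_\alpha$ at significance $\alpha$; this is the benchmark against which $s_{part}$ is compared.

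Next I would identify $s_{part}$ inside $S_{opt}$. For the cyclic completion $a$ of $a_0$, the linear term telescopes across episodes, $a^\top\tilde{s}=\sum_{\tau=1}^{T}(a_0)_\tau\sum_{k}\tilde{s}_{kT+\tau}=\sum_{\tau\in o(a_0)}\tilde{S}_\tau$, so by Definition~\ref{def:part_weighted_mean} the statistic $s_{part}(X;p)=\min_{a_0} a^\top\tilde{s}$ is precisely $S_{opt}$ with its quadratic correction removed. Since $\Sigma^{-1}\succ 0$ forces $a^\top\Sigma^{-1}a\ge 0$, comparing the two minima (both over the finite set $A_m^T$, evaluated at a minimizer of the linear part) yields the pointwise sandwich
\begin{align*}
s_{part}(X) \;\le\; S_{opt}(X) \;\le\; s_{part}(X) + \tfrac{\epsilon}{2}\,C, \qquad C\coloneqq\max_{a_0\in A_m^T} a^\top\Sigma^{-1}a < \infty,
\end{align*}
where $C$ is finite for the fixed signal length $n$ (so the stated $\mathcal{O}(\epsilon)$ is with respect to $\epsilon\to 0$ at fixed $n$). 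Thus the two statistics coincide up to $\mathcal{O}(\epsilon)$.

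The hard part will be converting this into an $\mathcal{O}(\epsilon)$ power gap, since a small pointwise shift of a statistic moves rejection probabilities only as fast as the statistic's law is spread out. The sandwich gives, for every $\kappa$, the inclusions $\{S_{opt}\le\kappa\}\subseteq\{s_{part}\le\kappa\}\subseteq\{S_{opt}\le\kappa+\tfrac{\epsilon}{2}C\}$, hence $P(S_{opt}\le\kappa)\le P(s_{part}\le\kappa)\le P(S_{opt}\le\kappa+\tfrac{\epsilon}{2}C)$ under any fixed normal law. Letting $\kappa_{opt}$ be the optimal threshold and running the $s_{part}$-test at $\kappa=\kappa_{opt}-\tfrac{\epsilon}{2}C$, the upper inclusion forces its significance below $P(S_{opt}\le\kappa_{opt}\mid H_0)=\alpha$, while the lower inclusion bounds its power from below by $P(S_{opt}\le\kappa_{opt}-\tfrac{\epsilon}{2}C\mid H_A)$. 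It therefore remains to show that the CDF of $S_{opt}$ under $H_A$ is Lipschitz near $\kappa_{opt}$ with a constant independent of $\epsilon$, so that retreating the threshold by $\tfrac{\epsilon}{2}C$ costs only $\mathcal{O}(\epsilon)$ of power.

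This Lipschitz bound is the technical heart. Since $\tilde{s}=\Sigma^{-1}\tilde{X}$ is Gaussian and $S_{opt}$ is a pointwise minimum of the $\binom{T}{m}$ affine forms $a^\top\tilde{s}+\tfrac{\epsilon}{2}a^\top\Sigma^{-1}a$, each of strictly positive variance $a^\top\Sigma^{-1}a>0$, its density is dominated near any point by the finite sum of the corresponding bounded Gaussian densities; this gives a density bound uniform over the finitely many subset-alternatives and uniform for small $\epsilon$, because the $\tfrac{\epsilon}{2}a^\top\Sigma^{-1}a$ terms are vanishing perturbations of the $\epsilon$-independent statistic $s_{part}$. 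Combining the significance control, the power lower bound, and this Lipschitz estimate closes the argument with power $P_\alpha-\mathcal{O}(\epsilon)$. The one point to track carefully is that both $\kappa_{opt}$ and the alternative law depend on $\epsilon$, so the density bound must be established uniformly as $\epsilon\to 0$; the perturbative structure just described is exactly what makes this uniformity hold.
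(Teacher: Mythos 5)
Your proposal is correct and follows essentially the same route as the paper's proof of Theorem~\ref{theorem:part_optimality2}: compute the generalized log-likelihood-ratio, split it into $s_{part}$ plus a uniformly bounded quadratic perturbation of size $\mathcal{O}(\epsilon)$ (the paper packages this via Lemma~\ref{lemma:min_delta_bound}, you via a one-sided sandwich with constant $\tfrac{\epsilon}{2}C$), and convert the pointwise closeness of the two statistics into an $\mathcal{O}(\epsilon)$ power loss through regularity of the CDF of the min-of-Gaussians statistic. The only differences are bookkeeping: the paper compares the $\alpha$-quantiles of $y$ and $s_{part}$ under $H_0$ and invokes a finite CDF derivative at $\tilde{\kappa}$, whereas you retreat the threshold by $\tfrac{\epsilon}{2}C$ and establish a global Lipschitz bound on the CDF that is explicitly uniform in $\epsilon$ --- a uniformity point (needed because both $\tilde{\kappa}$ and the alternative law depend on $\epsilon$) that the paper leaves implicit, so your treatment is, if anything, slightly more careful.
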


\begin{proof}
The proof is provided in Appendix~\ref{app:deg_calculations}.
Similarly to the proof of Theorem~\ref{theorem:unif_optimality2}, it is based on calculation of the log-likelihood-ratio $\lambda_{LR}$ from Lemma~\ref{lemma:unif_optimality} -- after substituting Definition~\ref{def:H1_part}.
The calculation results in $s_{part}$ along with an $\epsilon$-dependent term whose effect on the test power is shown to be $\mathcal{O}(\epsilon)$.
\end{proof}

\paragraph{The parameter $p$ and comparison to CVaR:}
In Definition~\ref{def:part_weighted_mean}, the "weighted mean" completely eliminates $T-m$ of the entries of $\Sigma^{-1}X$, and only sums the most negative ones. Hence it can be interpreted as the well known CVaR statistic~\citep{CVaR_optimization_finance} of $X-\mu$ after transformation to $\Sigma^{-1}$-basis.
CVaR (Conditional Value at Risk) is intended to measure the "risky" tail of a random variable's distribution~\citep{CVaR_investopedia} by estimating its expectation -- conditioned on it being below the $p$-quantile of the distribution. This is done simply by averaging the $p$ "worst" (lowest) values in the corresponding data.
To express risk, the parameter $p$ is often set below 5\%~\citep{CVaR_matlab}.

In our context, however, the relative part of time-steps $p$ represents the scope of degradation rather than extremity of risk, and there is usually no reason to assume that $p\ll1$.
Such an assumption, when misplaced, may cause elimination of necessary information from the statistic.
In fact, $0.9\le p < 1$ is shown in Section~\ref{sec:experiments} to often achieve superior results, presumably because it maintains most of the information while still being able to filter noisy or misleading time-steps (e.g., time-steps with particularly large values).

Note that filtering positive time-steps is not "cheating" in the context of our problem: we essentially apply a monitor which looks for negative changes, and thus positive changes in other time-steps are indeed considered as noise for the sake of our monitor.
If a more symmetric comparison is desired, then the test can be applied twice -- once for negative changes, and once for positive changes.

\paragraph{The dependence on $\epsilon$ and $\mathcal{O}(\epsilon)$ approximation:}
The partial degradation mean is shown to be equal to an optimal test-statistic up to $\mathcal{O}(\epsilon)$.
The approximation is used to handle the dependence of the minimum $$\text{min}_{a_0 \in A_m^T} \left( a^\top\Sigma^{-1}\tilde{X} + 0.5\epsilon (a^\top\Sigma^{-1}a) \right)$$ on $\epsilon$.
Note that for small degradation the second term is indeed negligible, while for larger degradation the distinction between the two hypotheses should pose little challenge to any detection algorithm.

Furthermore, the test is entirely invariant to a constant additive factor (due to the adjustment of the test threshold using the bootstrap in Algorithm~\ref{algo:individual_bootstrap}); hence, the true distortion in the test is not of size $\gamma = 0.5\epsilon (a^\top\Sigma^{-1}a)$, but rather the change in $\gamma$ due to the possibly-changed choice of $a$.
Note that (a) since $\Sigma^{-1}$ is positive-definite, we have $\forall a: a^\top\Sigma^{-1}a>0$, hence the change in $\gamma$ is necessarily smaller than $\gamma$; (b) if the parameter $p$ is close to 100\% (as discussed above), then most of the entries of $a$ are necessarily kept unchanged, further reducing the change in $\gamma$.

If we wish to apply a more formal test, we can define for example $H_A^{part}(p)\coloneqq \exists \epsilon>0: H_A^{part}(\epsilon,p)$ (similarly to Definition~\ref{def:H1_unif} of uniform degradation, for $\epsilon_0=0$), which yields the log-likelihood-ratio $\text{min}_{a\in A_m^T} -\frac{(a^\top \Sigma^{-1}\tilde{X})^2}{a^\top \Sigma^{-1}a}$ s.t. $a^\top \Sigma^{-1}\tilde{X} \le 0$ (after minimization with relation to $\epsilon>0$, similarly to the proof of Theorem~\ref{theorem:unif_optimality2}).
Due to the discrete domain $A_m^T$ of $a$, this becomes a non-linear integer programming problem, which should be solved for every run of the test on new data $X$.

Note that from a practical point of view, a major role of the partial-degradation model is to allow focusing on negative (degrading) entries of $a^\top \Sigma^{-1}\tilde{X}$, and filtering positive ones (as discussed above).
For this role, both the approximate $s_{part}$ and the accurate minimizer of $H_A^{part}(p) = \exists \epsilon>0: H_A^{part}(\epsilon,p)$ above are perfectly qualified, as both would tend to reject positive entries of $a^\top \Sigma^{-1}\tilde{X}$.

In the scope of this work, we stick to the approximately-correct and computationally-simpler partial degradation mean $s_{part}$.


\section{Supplementary Calculations}
\label{app:deg_calculations}

\begin{proposition}[Likelihood ratio with respect to general degradation]
\label{prop:lr_general}
Let the standard normal setup (Definition~\ref{def:normal_setup} with the $\mathbb{E}$-degradation hypothesis $H_A(\mathbb{E})$ (Definition~\ref{def:H1_general}).
Define $\pmb{\epsilon}(\pmb{\epsilon_0})$ as in Definition~\ref{def:H1_general}, and denote $\tilde{X}_t \coloneqq X_t-(\pmb{\mu})_t$.
Then, the log-likelihood $\lambda_{LR}(H_0,H_A | \{X_t\}_{t=1}^{n}) \coloneqq 2\text{ln} (\frac{P(\{X_t\}_{t=1}^{n}|H_0)}{\text{sup}_{H\in H_A}P(\{X_t\}_{t=1}^{n}|H)})$ of $\{X_t\}_{t=1}^{n}$ with respect to (the simple hypothesis) $H_0$ and (the complex hypothesis) $H_A$ is
\begin{align}
\label{eq:lr_general}
\begin{split}
    \lambda_{LR}&(H_0,H_A | \{X_t\}_{t=1}^{n}) = \\
    &\text{min}_{\pmb{\epsilon_0} \in \mathbb{E}} 2(\pmb{\epsilon}(\pmb{\epsilon_0}))^\top\Sigma^{-1}\tilde{X} + (\pmb{\epsilon}(\pmb{\epsilon_0}))^\top\Sigma^{-1}\pmb{\epsilon}(\pmb{\epsilon_0})
\end{split}
\end{align}
\end{proposition}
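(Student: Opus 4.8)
The plan is a direct computation from the Gaussian densities. By the standard normal setup together with Proposition~\ref{prop:episodic_signal_covariance}, under $H_0$ the vector $X=(X_1,\dots,X_n)^\top$ is $n$-dimensional multivariate normal with mean $\pmb{\mu}$ and covariance $\Sigma$, so its density is $(2\pi)^{-n/2}\det(\Sigma)^{-1/2}\exp(-\tfrac{1}{2}\tilde{X}^\top\Sigma^{-1}\tilde{X})$ with $\tilde{X}=X-\pmb{\mu}$. The crucial structural point supplied by Definition~\ref{def:H1_general} is that under the $\mathbb{E}$-degradation hypothesis the covariance is unchanged ($\tilde{\Sigma}=\Sigma$) and only the mean shifts to $\pmb{\mu}-\pmb{\epsilon}(\pmb{\epsilon_0})$. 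Hence, for each fixed $\pmb{\epsilon_0}\in\mathbb{E}$, the corresponding density is $(2\pi)^{-n/2}\det(\Sigma)^{-1/2}\exp(-\tfrac{1}{2}(\tilde{X}+\pmb{\epsilon}(\pmb{\epsilon_0}))^\top\Sigma^{-1}(\tilde{X}+\pmb{\epsilon}(\pmb{\epsilon_0})))$, sharing the same normalizing constant and the same $\Sigma^{-1}$.

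First I would form the likelihood ratio; because the factors $(2\pi)^{-n/2}\det(\Sigma)^{-1/2}$ are identical under $H_0$ and under every member of $H_A$, they cancel, leaving only the ratio of exponential factors. Next I would evaluate the supremum defining $L(H_A|X)$: since $\exp$ is strictly increasing and the exponent carries a factor $-\tfrac{1}{2}$, the supremum over $\pmb{\epsilon_0}\in\mathbb{E}$ of $\exp(-\tfrac{1}{2}(\tilde{X}+\pmb{\epsilon})^\top\Sigma^{-1}(\tilde{X}+\pmb{\epsilon}))$ equals $\exp(-\tfrac{1}{2}\min_{\pmb{\epsilon_0}\in\mathbb{E}}(\tilde{X}+\pmb{\epsilon})^\top\Sigma^{-1}(\tilde{X}+\pmb{\epsilon}))$, converting the sup of the likelihood into a min of a quadratic form.

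Taking $2\ln$ then gives $\lambda_{LR}=-\tilde{X}^\top\Sigma^{-1}\tilde{X}+\min_{\pmb{\epsilon_0}\in\mathbb{E}}(\tilde{X}+\pmb{\epsilon})^\top\Sigma^{-1}(\tilde{X}+\pmb{\epsilon})$. Finally I would expand the quadratic using the symmetry of $\Sigma^{-1}$ to obtain $(\tilde{X}+\pmb{\epsilon})^\top\Sigma^{-1}(\tilde{X}+\pmb{\epsilon})=\tilde{X}^\top\Sigma^{-1}\tilde{X}+2\pmb{\epsilon}^\top\Sigma^{-1}\tilde{X}+\pmb{\epsilon}^\top\Sigma^{-1}\pmb{\epsilon}$; the $\pmb{\epsilon}$-independent term $\tilde{X}^\top\Sigma^{-1}\tilde{X}$ can be pulled out of the minimum and cancels the leading $-\tilde{X}^\top\Sigma^{-1}\tilde{X}$, yielding exactly $\min_{\pmb{\epsilon_0}\in\mathbb{E}}\big[2\pmb{\epsilon}^\top\Sigma^{-1}\tilde{X}+\pmb{\epsilon}^\top\Sigma^{-1}\pmb{\epsilon}\big]$, which is the claim. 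The computation is essentially routine; the only points requiring care are confirming that the determinant/normalization factors genuinely cancel (which hinges on the covariance being preserved under $H_A$, per Definition~\ref{def:H1_general}) and that the exchange of the supremum of the likelihood with the exponential is legitimate, which follows from monotonicity of $\exp$. There is thus no real analytic obstacle, and a possibly-incomplete final episode needs no special treatment, since $X$ is already handled as a single $n$-dimensional normal vector with the correspondingly cropped $\Sigma$.
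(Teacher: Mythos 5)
Your proof is correct and follows essentially the same route as the paper's: a direct computation from the Gaussian densities, cancellation of the shared normalizing constant (valid because $H_A$ preserves $\Sigma$), conversion of the supremum into a minimum via monotonicity of $\exp$, and expansion of the quadratic form using the symmetry of $\Sigma^{-1}$ so that the $\tilde{X}^\top\Sigma^{-1}\tilde{X}$ terms cancel. Your closing remarks on the determinant cancellation and on the cropped last episode being absorbed into the single $n$-dimensional normal vector are accurate and match the paper's treatment.
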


\begin{proof}
Using the density function of Eq.~\eqref{eq:multivariate_normal}, we have
\begin{align*}
\begin{split}
    \lambda_{LR}&(H_0,H_A | \{X_t\}_{t=1}^{n}) = \\
    &= 2\text{ln} ( \frac{e^{-0.5\tilde{X}^\top\Sigma^{-1}\tilde{X}}}{\text{max}_{\pmb{\epsilon_0} \in \mathbb{E}}e^{-0.5(\tilde{X}+\pmb{\epsilon})^\top\Sigma^{-1}(\tilde{X}+\pmb{\epsilon})}} ) = \\
    &= \text{min}_{\pmb{\epsilon_0} \in \mathbb{E}} (\tilde{X}+\pmb{\epsilon})^\top\Sigma^{-1}(\tilde{X}+\pmb{\epsilon}) - \tilde{X}^\top\Sigma^{-1}\tilde{X} \\
    &= \text{min}_{\pmb{\epsilon_0} \in \mathbb{E}} \pmb{\epsilon}^\top\Sigma^{-1}\tilde{X} + \tilde{X}^\top\Sigma^{-1}\pmb{\epsilon} + \pmb{\epsilon}^\top\Sigma^{-1}\pmb{\epsilon} \\
    &= \text{min}_{\pmb{\epsilon_0} \in \mathbb{E}} 2\pmb{\epsilon}^\top\Sigma^{-1}\tilde{X} + \pmb{\epsilon}^\top\Sigma^{-1}\pmb{\epsilon}
\end{split}
\end{align*}
where the last equality relies on the invariance of the scalar $\tilde{X}^\top\Sigma^{-1}\pmb{\epsilon} \in R$ to the transpose operation, as well as the symmetry of the covariance matrix (and its inverse).
\end{proof}

\begin{proposition}[Expected value of uniform-degradation weighted-mean]
\label{prop:unif_weighted_mean_expectation}
Let $X$ be a $T$-long episodic signal of length $n=KT$ for some $K\in \mathbb{N}$ (i.e., integer number of episodes), with parameters $\pmb{\mu_0}, \Sigma_0$.
The expected value of $\frac{1}{n}s_{unif}(X | \Sigma_0)$ defined in Definition~\ref{def:unif_weighted_mean} is $\frac{1}{T} (W_0 \cdot \pmb{\mu_0})$.
\end{proposition}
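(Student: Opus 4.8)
The plan is to exploit linearity of expectation together with the block structure of the covariance matrix established in Proposition~\ref{prop:episodic_signal_covariance}. Since $s_{unif}(X\mid\Sigma_0) = W\cdot X$ with the deterministic weight vector $W = \pmb{1}^\top \Sigma^{-1}$, expectation passes straight through the inner product: $E[s_{unif}(X\mid\Sigma_0)] = W\cdot E[X] = W\cdot \pmb{\mu}$. So the whole computation reduces to evaluating a single dot product between two vectors whose structure is already known.

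First I would pin down the form of $W$. Because $n = KT$ is an exact integer number of episodes, Proposition~\ref{prop:episodic_signal_covariance} gives that $\Sigma$ is block-diagonal with $K$ identical diagonal blocks equal to $\Sigma_0$ and no cropped final block. The inverse of a block-diagonal matrix is block-diagonal with each block inverted, so $\Sigma^{-1}$ consists of $K$ copies of $\Sigma_0^{-1}$ along its diagonal. Multiplying on the left by $\pmb{1}^\top \in \mathbb{R}^n$ (which is itself the $K$-fold repetition of the all-ones vector $\pmb{1}\in\mathbb{R}^T$) therefore yields that $W$ is the $K$-fold periodic repetition of $W_0 = \pmb{1}^\top \Sigma_0^{-1}\in\mathbb{R}^T$, matching the periodicity noted in Definition~\ref{def:unif_weighted_mean}.

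Next I would use the corresponding periodic structure of the mean. Again by Proposition~\ref{prop:episodic_signal_covariance}, $\pmb{\mu}$ is the $K$-fold repetition of $\pmb{\mu_0}$. Since both $W$ and $\pmb{\mu}$ decompose into $K$ identical blocks aligned with the episode boundaries, the dot product splits across episodes as
\begin{align}
\label{eq:unif_mean_expectation}
W\cdot \pmb{\mu} = \sum_{k=0}^{K-1} W_0 \cdot \pmb{\mu_0} = K\,(W_0\cdot \pmb{\mu_0}).
\end{align}
Dividing by $n = KT$ gives $\frac{1}{n}E[s_{unif}(X\mid\Sigma_0)] = \frac{K(W_0\cdot\pmb{\mu_0})}{KT} = \frac{1}{T}(W_0\cdot\pmb{\mu_0})$, as claimed.

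There is no genuine obstacle in this argument; it is a direct consequence of the block-diagonal structure combined with the integer-episode assumption $n=KT$. The only point requiring a word of care is the claim that $\Sigma^{-1}$ inherits the block-diagonal structure of $\Sigma$ with each block individually inverted, which relies on $\Sigma_0$ being invertible (assumed in Definition~\ref{def:H0}) and on there being no partial final block to complicate the inversion. Because $n=KT$ removes any cropped block, every block is a full copy of $\Sigma_0$ and the decomposition in \eqref{eq:unif_mean_expectation} is exact.
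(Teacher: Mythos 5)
Your proof is correct and takes essentially the same route as the paper's, which likewise applies linearity of expectation and the $T$-periodicity of $W$ and $\pmb{\mu}$ to get $E[\frac{1}{n}s_{unif}] = \frac{1}{n}W\cdot\pmb{\mu} = \frac{1}{n}K(W_0\cdot\pmb{\mu_0}) = \frac{1}{T}(W_0\cdot\pmb{\mu_0})$. The only difference is that you explicitly derive the periodicity of $W$ from the block-diagonal inversion of $\Sigma$, a fact the paper already records in Definition~\ref{def:unif_weighted_mean} and simply cites.
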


\begin{proof}
Using the $T$-periodicity of $W$ and $\pmb{\mu}$ (see Eq.~\eqref{eq:episodic_signal_covariance}), we have $ E[\frac{1}{n}s_{unif}] = E[\frac{1}{n} W \cdot X] = \frac{1}{n} W\cdot\pmb{\mu}(\pmb{\mu_0}) = \frac{1}{n} K \cdot (W_0 \cdot \pmb{\mu_0}) = \frac{1}{T} (W_0 \cdot \pmb{\mu_0}) $.
\end{proof}

\begin{proposition}[Consistency of uniform-degradation weighted-mean]
\label{prop:unif_consistency}
$\frac{1}{n}s_{unif}$ defined in Definition~\ref{def:unif_weighted_mean} is consistent with relation to the expected value $\frac{1}{T} (W_0 \cdot \pmb{\mu_0})$ calculated in Proposition~\ref{prop:unif_weighted_mean_expectation}, i.e., $\forall \epsilon>0: \text{lim}_{n\rightarrow \infty} P \left( | \frac{1}{n}s_{unif} - \frac{1}{T} (W_0 \cdot \pmb{\mu_0}) | \ge \epsilon \right) = 0$.
\end{proposition}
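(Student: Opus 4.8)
The plan is to recognize this as a weak law of large numbers statement: since the episodes of $X$ are i.i.d.\ (Definition~\ref{def:episodic_signal_copy}), $\frac{1}{n}s_{unif}$ is an average of i.i.d.\ contributions plus a negligible boundary term. First I would decompose the statistic. Writing $n = KT + \tau_0$ and using the $T$-periodicity of $W$ recorded in Definition~\ref{def:unif_weighted_mean}, set $Z_k \coloneqq W_0\cdot(X_{kT+1},\dots,X_{kT+T})^\top$ for $0\le k\le K-1$ and let $R_n \coloneqq W_{\tau_0}\cdot(X_{KT+1},\dots,X_{KT+\tau_0})^\top$ be the contribution of the incomplete final episode, so that $s_{unif} = \sum_{k=0}^{K-1} Z_k + R_n$. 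Because the episodes are i.i.d., the $\{Z_k\}$ are i.i.d.\ with mean $E[Z_k] = W_0\cdot\pmb{\mu_0}$ (as in the computation of Proposition~\ref{prop:unif_weighted_mean_expectation}) and variance $W_0\Sigma_0 W_0^\top$, which is finite since $\Sigma_0$ (hence $W_0 = \pmb{1}^\top\Sigma_0^{-1}$) is finite.

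Next I would split $\frac{1}{n}s_{unif} = \frac{K}{n}\cdot\frac{1}{K}\sum_{k=0}^{K-1}Z_k + \frac{1}{n}R_n$ and treat the two terms separately. For the main term, the weak law of large numbers gives $\frac{1}{K}\sum_{k=0}^{K-1}Z_k \xrightarrow{P} W_0\cdot\pmb{\mu_0}$ as $K\to\infty$, and since $0\le\tau_0<T$ we have $K/n\to 1/T$; together these yield $\frac{K}{n}\cdot\frac{1}{K}\sum_k Z_k \xrightarrow{P} \frac{1}{T}(W_0\cdot\pmb{\mu_0})$. For the boundary term, $R_n$ is a fixed linear functional of a single episode's first $\tau_0\le T$ coordinates and therefore has uniformly bounded variance, so Chebyshev's inequality gives $\frac{1}{n}R_n \xrightarrow{P} 0$. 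Combining the two limits, either via Slutsky's theorem or directly by a triangle-inequality argument bounding $P(|\frac1n s_{unif}-\frac1T W_0\pmb{\mu_0}|\ge\epsilon)$ by the two events $\{|\cdot|\ge\epsilon/2\}$, establishes the claimed convergence in probability.

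A more self-contained alternative is to argue purely through the second moment. Since $\mathrm{Var}(s_{unif}) = \mathrm{Var}(W\cdot X) = W\Sigma W^\top = \pmb{1}^\top\Sigma^{-1}\pmb{1}$, and $\Sigma^{-1}$ is block-diagonal with $K$ copies of $\Sigma_0^{-1}$ plus one $\tau_0\times\tau_0$ block $\Sigma_{\tau_0}^{-1}$, we obtain $\mathrm{Var}(s_{unif}) = K(\pmb{1}^\top\Sigma_0^{-1}\pmb{1}) + \mathcal{O}(1) = \mathcal{O}(n)$, hence $\mathrm{Var}(\tfrac{1}{n}s_{unif}) = \mathcal{O}(1/n)\to 0$. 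Coupled with $E[\tfrac1n s_{unif}]\to\frac1T(W_0\cdot\pmb{\mu_0})$, Chebyshev's inequality closes the argument directly.

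I expect the only genuine subtlety to be the bookkeeping around the incomplete final episode: Proposition~\ref{prop:unif_weighted_mean_expectation} is stated for $n=KT$, whereas consistency requires $n\to\infty$ over all integers. One must verify that the tail term $R_n$ (both its mean and its variance) is asymptotically negligible after dividing by $n$, and that $K/n\to 1/T$; everything else reduces to a routine application of the law of large numbers to the i.i.d.\ episode contributions.
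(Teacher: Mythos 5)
Your main argument is correct and is essentially the paper's own proof: the paper makes the identical decomposition of $\frac{1}{n}s_{unif}-\frac{1}{T}(W_0\cdot\pmb{\mu_0})$ into the i.i.d.\ complete-episode contributions $S_k=\sum_{\tau=1}^T (W_0)_\tau (X_{kT+\tau}-(\pmb{\mu_0})_\tau)$ (your centered $Z_k$) plus a tail term from the incomplete final episode, applies the law of large numbers to the $\{S_k\}$, and splits the event with a union bound at $\epsilon/2$. The only cosmetic difference is in the tail: the paper bounds $|\frac{1}{n}S^{tail}_{K,\tau_0}|$ by a fixed random variable (a weighted sum of $|X_\tau-(\pmb{\mu_0})_\tau|$ over one episode, with weights $\max_{\tilde{\tau}}|(W_{\tilde{\tau}})_\tau|$) exceeding the growing threshold $n\epsilon/2$, which needs no moment assumptions, whereas you invoke Chebyshev with the uniformly bounded variance of $R_n$ -- equally valid here since $\Sigma_0$ exists, and you correctly flag that the bounded mean of $R_n$ must also be absorbed. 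Your second-moment alternative ($\mathrm{Var}(s_{unif})=\pmb{1}^\top\Sigma^{-1}\pmb{1}=K\,\pmb{1}^\top\Sigma_0^{-1}\pmb{1}+\mathcal{O}(1)$, then Chebyshev on $\frac{1}{n}s_{unif}$ directly) is a genuinely different and shorter route that the paper does not take; it buys a one-step proof at the cost of leaning on the exact block-diagonal variance identity (which the paper only establishes later, in Lemma~\ref{lemma:unif_properties}, and only for $n=KT$, so you would need the small extra observation that the $\tau_0$-block adds at most $\max_{1\le\tau\le T}\pmb{1}^\top\Sigma_\tau^{-1}\pmb{1}=\mathcal{O}(1)$).
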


\begin{proof}
The consistency is proven through the L.L.N over the i.i.d episodes, where the last possibly-partial episode becomes negligible in the limit of infinitely-many episodes.

Using the episodic index decomposition of Definition~\ref{def:index_decomposition}, and the notations of $W_0,W_\tau$ from Definition~\ref{def:unif_weighted_mean}, we can write
\begin{align*}
\begin{split}    
\frac{1}{n}s_{unif} &- \frac{1}{T} (W_0 \cdot \pmb{\mu_0}) = \frac{1}{n}WX - \frac{1}{T} (W_0 \cdot \pmb{\mu_0}) = \frac{1}{n} \sum_{t=1}^n [w_t X_t] - \frac{1}{T} (W_0 \cdot \pmb{\mu_0}) \\
&= \left[ \frac{1}{n} \sum_{k=0}^{K-1} \sum_{\tau=1}^T (W_0)_\tau X_{kT+\tau} - \frac{KT}{n} W_0 \pmb{\mu_0} \right] + \left[ \frac{1}{n} \sum_{\tau=1}^{\tau_0} (W_{\tau_0})_\tau X_{kT+\tau} - \frac{\tau_0}{n} W_0 \pmb{\mu_0} \right] \\
&= \frac{1}{n} \sum_{k=0}^{K-1} \sum_{\tau=1}^T (W_0)_\tau (X_{kT+\tau} - (\pmb{\mu_0})_\tau) + \frac{1}{n} \sum_{\tau=1}^{\tau_0} (W_{\tau_0})_\tau (X_{kT+\tau} - (\pmb{\mu_0})_\tau) \\
&= \frac{1}{n} \sum_{k=0}^{K-1} S_k + \frac{1}{n} S^{tail}_{K,\tau_0}
\end{split}
\end{align*}
where $\tau_0 \coloneqq \tau(n,T)$, $S_k \coloneqq \sum_{\tau=1}^T (W_0)_\tau (X_{kT+\tau} - (\pmb{\mu_0})_\tau)$ and $S^{tail}_{K,\tau_0} \coloneqq \sum_{\tau=1}^{\tau_0} (W_{\tau_0})_\tau (X_{kT+\tau} - (\pmb{\mu_0})_\tau)$.

To prove consistency we have to show that $\forall \epsilon>0: \text{lim}_{n\rightarrow \infty} P \left( | \frac{1}{n}s_{unif} - \frac{1}{T} (W_0 \cdot \pmb{\mu_0}) | \ge \epsilon \right) = 0$.
Indeed, given $\epsilon > 0$, we have
\begin{align*}
&\text{lim}_{n\rightarrow \infty} P \left( \left| \frac{1}{n}s_{unif} - \frac{1}{T} (W_0 \cdot \pmb{\mu_0}) \right| \ge \epsilon \right) \\
&\le \text{lim}_{n\rightarrow \infty} P \left( \left| \frac{1}{n} \sum_{k=0}^{K-1} S_k \right| \ge \epsilon/2 \lor \left|\frac{1}{n} S^{tail}_{K,\tau_0}\right| \ge \epsilon/2 \right) \\
&\le \text{lim}_{n\rightarrow \infty} P \left( \left| \frac{1}{n} \sum_{k=0}^{K-1} S_k \right| \ge \epsilon/2 \right) + P \left( \left|\frac{1}{n} S^{tail}_{K,\tau_0}\right| \ge \epsilon/2 \right)
\end{align*}
where $P \left( \left| \frac{1}{n} \sum_{k=0}^{K-1} S_k \right| \ge \epsilon/2 \right) \rightarrow 0$ according to the Law of Large Numbers applied to the i.i.d sequence $\{S_k\}$; and
\begin{align*}
    \text{lim}_{n\rightarrow \infty} & P \left( \left|\frac{1}{n} S^{tail}_{K,\tau_0}\right| \ge \epsilon/2 \right) \\
    &\le \text{lim}_{n\rightarrow \infty} P \left( \sum_{\tau=1}^{T} |\text{max}_{1\le\tilde{\tau}\le T}(W_{\tilde{\tau}})_\tau| \cdot |X_{kT+\tau} - (\pmb{\mu_0})_\tau| \ge \frac{n\epsilon}{2} \right) \\
    &= \text{lim}_{n\rightarrow \infty} P \left( \sum_{\tau=1}^{T} |\text{max}_{1\le\tilde{\tau}\le T}(W_{\tilde{\tau}})_\tau| \cdot |X_{\tau} - (\pmb{\mu_0})_\tau| \ge \frac{n\epsilon}{2} \right) = 0
\end{align*}
\end{proof}

\begin{lemma}[Maximum likelihood with relation to the complex hypothesis of uniform-degradation]
\label{lemma:unif_optimality}
Under the setup of Theorem \ref{theorem:unif_optimality2}, denote $s_0 \coloneqq W\cdot\pmb{\mu} - \epsilon_0 [\pmb{1}^\top \Sigma^{-1} \pmb{1}]$.
$\lambda_{LR}(H_0,H_A^{unif}(\epsilon_0) | \{X_t\}_{t=1}^{n})$ is minimized by $\epsilon = \epsilon_0$ if $s_{unif} \ge s_0$, and by $\epsilon = \frac{W \cdot\pmb{\mu} - s_{unif}}{\pmb{1}^\top \Sigma^{-1} \pmb{1}}$ if $s_{unif} \le s_0$.
\end{lemma}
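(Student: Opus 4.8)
The plan is to reduce the minimization defining the log-likelihood-ratio to an elementary one-dimensional constrained quadratic optimization. Starting from Proposition~\ref{prop:lr_general}, which gives $\lambda_{LR}(H_0,H_A\mid\{X_t\}_{t=1}^n) = \min_{\pmb{\epsilon_0}\in\mathbb{E}} 2\pmb{\epsilon}^\top\Sigma^{-1}\tilde{X} + \pmb{\epsilon}^\top\Sigma^{-1}\pmb{\epsilon}$, I would substitute the uniform form $\pmb{\epsilon} = \epsilon\pmb{1}$ dictated by $H_A^{unif}(\epsilon_0)$ (where $\mathbb{E} = \{\epsilon\pmb{1}\mid\epsilon\ge\epsilon_0\}$ by Definition~\ref{def:H1_unif}). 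This collapses the objective to the scalar function $g(\epsilon) = 2\epsilon\,(\pmb{1}^\top\Sigma^{-1}\tilde{X}) + \epsilon^2\,(\pmb{1}^\top\Sigma^{-1}\pmb{1})$, so that $\lambda_{LR} = \min_{\epsilon\ge\epsilon_0} g(\epsilon)$.

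Next I would observe that because $\Sigma^{-1}$ is positive-definite, the leading coefficient $\pmb{1}^\top\Sigma^{-1}\pmb{1}$ is strictly positive, hence $g$ is a strictly convex (upward) parabola in $\epsilon$ with unique unconstrained minimizer $\epsilon^\star = -\frac{\pmb{1}^\top\Sigma^{-1}\tilde{X}}{\pmb{1}^\top\Sigma^{-1}\pmb{1}}$. Minimizing a convex parabola over the half-line $[\epsilon_0,\infty)$ is immediate: the constrained minimizer is the projection of $\epsilon^\star$ onto the ray, i.e.\ it equals $\epsilon_0$ when $\epsilon^\star\le\epsilon_0$ (on this region $g$ is increasing), and equals $\epsilon^\star$ when $\epsilon^\star\ge\epsilon_0$.

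It then remains to translate the case-split on $\epsilon^\star$ versus $\epsilon_0$ into the threshold on $s_{unif}$ asserted by the lemma. The key identity is $\pmb{1}^\top\Sigma^{-1}\tilde{X} = W\cdot\tilde{X} = W\cdot X - W\cdot\pmb{\mu} = s_{unif} - W\cdot\pmb{\mu}$, which follows directly from $W = \pmb{1}^\top\Sigma^{-1}$ and $\tilde{X} = X-\pmb{\mu}$ (Definition~\ref{def:unif_weighted_mean}). Substituting this into $\epsilon^\star$ already yields $\epsilon^\star = \frac{W\cdot\pmb{\mu} - s_{unif}}{\pmb{1}^\top\Sigma^{-1}\pmb{1}}$, matching the second claimed minimizer. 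For the boundary, the inequality $\epsilon^\star\le\epsilon_0$ rearranges (using positivity of the denominator) to $\pmb{1}^\top\Sigma^{-1}\tilde{X}\ge -\epsilon_0\,(\pmb{1}^\top\Sigma^{-1}\pmb{1})$, i.e.\ $s_{unif}\ge W\cdot\pmb{\mu} - \epsilon_0\,[\pmb{1}^\top\Sigma^{-1}\pmb{1}] = s_0$; the reverse inequality gives $s_{unif}\le s_0$. Assembling the two cases produces exactly the stated minimizers.

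I expect no serious obstacle: the core argument is a textbook projection of a parabola's vertex onto a half-line, with convexity guaranteed by positive-definiteness of $\Sigma^{-1}$. The only step demanding a moment of care is the bookkeeping that rewrites $\pmb{1}^\top\Sigma^{-1}\tilde{X}$ in terms of $s_{unif}$ and $W\cdot\pmb{\mu}$, which is what converts the abstract vertex condition $\epsilon^\star\le\epsilon_0$ into the concrete threshold $s_{unif}\ge s_0$; everything else is sign-tracking in a single-variable quadratic.
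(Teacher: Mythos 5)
Your proposal is correct and follows essentially the same route as the paper's own proof: both apply Proposition~\ref{prop:lr_general} with $\pmb{\epsilon}=\epsilon\pmb{1}$, note the resulting objective is an upward parabola in $\epsilon$ (positive leading coefficient $\pmb{1}^\top\Sigma^{-1}\pmb{1}$ by positive-definiteness), compute the vertex $\epsilon^\star = \frac{W\pmb{\mu}-s_{unif}}{\pmb{1}^\top\Sigma^{-1}\pmb{1}}$ via the identity $\pmb{1}^\top\Sigma^{-1}\tilde{X}=s_{unif}-W\pmb{\mu}$, and project it onto $[\epsilon_0,\infty)$, with the case split $\epsilon^\star \lessgtr \epsilon_0$ translating exactly to $s_{unif} \gtrless s_0$. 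No gaps; your explicit rewriting of the vertex condition as the threshold on $s_{unif}$ is the same bookkeeping the paper performs.
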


\begin{proof}
Applying Proposition~\ref{prop:lr_general} to Definition~\ref{def:H1_general} yields
\begin{align}
\label{eq:lr_unif1}
\begin{split}
    \lambda_{LR}&(H_0,H_A^{unif}(\epsilon_0) | \{X_t\}_{t=1}^{n}) = \\
    &\text{min}_{\epsilon \ge \epsilon_0} 2\epsilon [ W\tilde{X} ] + \epsilon^2 [ \pmb{1}^\top \Sigma^{-1} \pmb{1} ] = \text{min}_{\epsilon \ge \epsilon_0} P(\epsilon)
\end{split}
\end{align}
where $P(\epsilon)$ is a parabola with respect to $\epsilon$, with leading coefficient $\pmb{1}^\top \Sigma^{-1}\pmb{1} > 0$ (since the full-rank covariance matrix $\Sigma$ is necessarily positive definite) and minimum $\epsilon_{min} = -\frac{2W\tilde{X}}{2[\pmb{1}^\top \Sigma^{-1} \pmb{1}]} = \frac{W \pmb{\mu} - s_{unif}}{\pmb{1}^\top \Sigma^{-1} \pmb{1}}$ (remember that $\tilde{X}=X-\pmb{\mu}$).
If $s_{unif} \le s_0$ then $\epsilon_{min} \ge \epsilon_0$ and $\text{min}_{\epsilon \ge \epsilon_0} P(\epsilon)$ is minimized by $\epsilon = \epsilon_{min}$.
If $s_{unif} \ge s_0$ then $\epsilon_{min} \le \epsilon_0$ (i.e., $\epsilon_0$ is to the right of the minimum of the parabola), hence $\forall \epsilon > \epsilon_0: P(\epsilon) > P(\epsilon_0)$, and $\text{min}_{\epsilon \ge \epsilon_0} P(\epsilon)$ is minimized by $\epsilon = \epsilon_0$.
\end{proof}

{\bf Proof of Theorem~\ref{theorem:unif_optimality2} (also compactly formulated in Theorem~\ref{theorem:unif_optimality})}:
Given $\alpha \in (0,1)$, consider a $\tilde{\kappa}$-threshold-test (Definition~\ref{def:threshold_test}) with relation to the log-likelihood $\lambda_{LR}(H_0,H_A^{unif}(\epsilon_0) | \{X_t\}_{t=1}^{n})$ (and with edge-case rejection-probability $\rho\in [0,1]$), such that the significance level of the test is $1-\alpha$.
Since $\lambda_{LR} = 2\text{ln}(LR)$ is monotonously increasing with relation to the likelihood-ratio, then according to Neyman-Pearson lemma~\citep{NeymanPearson} this test has the greatest power among all tests with significance $\tilde{\alpha} \le \alpha$.
We will show that this test is equivalent to a threshold-test on the uniform-degradation weighted-mean.

According to Lemma~\ref{lemma:unif_optimality}, we have
\begin{align}
\label{eq:lr_unif2}
\begin{split}
    \lambda_{LR}&(H_0,H_A^{unif}(\epsilon_0) | \{X_t\}_{t=1}^{n}) \\
    &= \text{min}_{\epsilon \ge \epsilon_0} 2\epsilon [ W\tilde{X} ] + \epsilon^2 [ \pmb{1}^\top \Sigma^{-1} \pmb{1} ] \\
    &= \begin{cases}
        2\epsilon_0 [ W\tilde{X} ] + \epsilon_0^2 [ \pmb{1}^\top \Sigma^{-1} \pmb{1} ] & \text{if } s_{unif} \ge s_0 \\ 
        2\frac{W\cdot\pmb{\mu} - s_{unif}}{\pmb{1}^\top \Sigma^{-1} \pmb{1}} [W\tilde{X}] + [\frac{W\cdot\pmb{\mu} - s_{unif}}{\pmb{1}^\top \Sigma^{-1} \pmb{1}}]^2 [ \pmb{1}^\top \Sigma^{-1} \pmb{1} ] & \text{if } s_{unif} \le s_0
    \end{cases} \\
    &= \begin{cases}
        2\epsilon_0 s_{unif} - 2\epsilon_0 W\pmb {\mu} + \epsilon_0^2 [ \pmb{1}^\top \Sigma^{-1} \pmb{1} ] & \text{if } s_{unif} \ge s_0 \\
        -\frac{(s_{unif} - W\pmb{\mu})^2}{\pmb{1}^\top \Sigma^{-1} \pmb{1}} & \text{if } s_{unif} \le s_0
    \end{cases} \\
\end{split}
\end{align}
Clearly $\lambda_{LR}$ is strictly increasing with $s_{unif}$ in $(-\infty, s_0]$.
Note that in the case $s_{unif} \ge s_0$, $\lambda_{LR}$ is the parabola $P(s_{unif}) = -(s_{unif}-W\pmb{\mu})^2$ (up to a positive multiplicative factor), whose maximum is $s_{max}=W\pmb{\mu}$. Since in this case $s_{unif} \ge s_0 = W\pmb{\mu} - \epsilon_0 [\pmb{1}^\top \Sigma^{-1} \pmb{1}] \le W\pmb{\mu} = s_{max}$, then $s_{unif}$ is to the left of the maximum of the parabola, and hence $\lambda_{LR}$ is strictly increasing with $s_{unif}$ in $[s_0, \infty)$.

Since $\lambda_{LR}$ is strictly increasing with $s_{unif}$ in both $(-\infty, s_0]$ and $[s_0, \infty)$, then it is strictly monotonously increasing with $s_{unif}$ in $\mathbb{R}$.
Hence there exists $\kappa \in \mathbb{R}$ such that $\lambda_{LR} < \tilde{\kappa} \Leftrightarrow s_{unif} < \kappa$, and the tests are equivalent.
$\square$

\begin{lemma}[Properties of statistics under uniform-degradation]
\label{lemma:unif_properties}
Let $X$ be a $T$-long episodic signal of length $n=KT$ for some $K\in \mathbb{N}$ (i.e., integer number of episodes), with parameters $\pmb{\mu_0}-\epsilon\cdot\pmb{1} \in \mathbb{R}^T, \Sigma_0 \in \mathbb{R}^{T\times T}$.
Denote $s_{simp}=\sum_{t=1}^nX_t$ as in Eq.~\eqref{eq:normalized_statistics} and $s_{unif}=WX$ as in Definition~\ref{def:unif_weighted_mean}.
Then we have:
\begin{align*}
    E[s_{simp}] &= K\pmb{1}^\top \pmb{\mu_0} - KT\epsilon \\
    E[s_{unif}] &= KW_0\pmb{\mu_0} - \epsilon KW_0\pmb{1} \\
    \text{Var}(s_{simp}) &= K\pmb{1}^\top\Sigma_0\pmb{1} \\
    \text{Var}(s_{unif}) &= K\pmb{1}^\top \Sigma_0^{-1} \pmb{1} \\
\end{align*}
\end{lemma}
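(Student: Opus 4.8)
The plan is to treat all four quantities as linear or quadratic functionals of the random vector $X=(X_1,\dots,X_n)^\top$, writing $\pmb 1$ for the all-ones vector of whatever dimension fits the context, and to lean entirely on two structural facts already in hand. By Proposition~\ref{prop:episodic_signal_covariance} the full covariance $\Sigma$ is block-diagonal with $K$ identical blocks $\Sigma_0$, so $\Sigma^{-1}$ is block-diagonal with blocks $\Sigma_0^{-1}$; and by the same periodicity the full mean $\pmb{\mu}$ and the weight vector $W=\pmb{1}^\top\Sigma^{-1}$ are $T$-periodic, consisting of $K$ copies of $\pmb{\mu_0}$ and of $W_0=\pmb{1}^\top\Sigma_0^{-1}$ respectively. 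Under $H_A^{unif}$ the mean is shifted to $\pmb{\mu}-\epsilon\pmb{1}$ while $\Sigma$ is unchanged, and this shift is the only place the degradation parameter $\epsilon$ enters. Because $n=KT$ exactly, there is no partial trailing episode to account for, which keeps the bookkeeping clean.

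First I would dispatch the two expectations by linearity. For $s_{simp}=\pmb{1}^\top X$ one gets $E[s_{simp}]=\pmb{1}^\top(\pmb{\mu}-\epsilon\pmb{1})$, and collapsing the $K$ periodic copies via $\pmb{1}^\top\pmb{\mu}=K\,\pmb{1}^\top\pmb{\mu_0}$ together with $\pmb{1}^\top\pmb{1}=n=KT$ yields $K\pmb{1}^\top\pmb{\mu_0}-KT\epsilon$. Similarly $E[s_{unif}]=W(\pmb{\mu}-\epsilon\pmb{1})$, and the $T$-periodicity of $W$ gives $W\pmb{\mu}=K\,W_0\pmb{\mu_0}$ and $W\pmb{1}=K\,W_0\pmb{1}$, which is exactly the claimed $KW_0\pmb{\mu_0}-\epsilon KW_0\pmb{1}$.

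For the variances I would use that any linear form $a^\top X$ has variance $a^\top\Sigma a$; since variances are translation-invariant, the mean shift drops out and $\epsilon$ never appears, explaining why the stated variances do not depend on $\epsilon$. For $s_{simp}$ this is $\pmb{1}^\top\Sigma\pmb{1}$, and the block-diagonal structure of $\Sigma$ splits the quadratic form into $K$ identical per-episode contributions $\pmb{1}^\top\Sigma_0\pmb{1}$, giving $K\pmb{1}^\top\Sigma_0\pmb{1}$. For $s_{unif}=WX$ with $W=\pmb{1}^\top\Sigma^{-1}$, the one genuinely load-bearing observation is the cancellation
\[
\text{Var}(s_{unif}) = W\Sigma W^\top = \pmb{1}^\top\Sigma^{-1}\,\Sigma\,\Sigma^{-1}\pmb{1} = \pmb{1}^\top\Sigma^{-1}\pmb{1},
\]
which uses that $\Sigma$ is symmetric and full-rank (so $W^\top=\Sigma^{-1}\pmb{1}$ and the middle $\Sigma^{-1}\Sigma$ telescopes); the block-diagonality of $\Sigma^{-1}$ then again collapses the form into $K$ copies of $\pmb{1}^\top\Sigma_0^{-1}\pmb{1}$.

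The computation is routine, so I do not expect a real obstacle; the only care needed is bookkeeping. Specifically, one must keep track of which $\pmb 1$ lives in $\mathbb{R}^n$ versus $\mathbb{R}^T$ when collapsing periodic sums, and one must invoke the episodic independence of Definition~\ref{def:episodic_signal} precisely once — to assert that $\Sigma$ carries no cross-episode blocks, which is what makes each quadratic form separate cleanly into $K$ identical pieces. (Note that the variance of $s_{unif}$ could alternatively be reached by the summation route hinted at in the sketch of Theorem~\ref{theorem:unif_power}, but the cancellation above is the shortest path.)
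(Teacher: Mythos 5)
Your proposal is correct and takes essentially the same route as the paper: the two expectations and $\text{Var}(s_{simp})$ are handled by the identical periodicity/block-diagonality bookkeeping, and your cancellation $W\Sigma W^\top=\pmb{1}^\top\Sigma^{-1}\Sigma\,\Sigma^{-1}\pmb{1}=\pmb{1}^\top\Sigma^{-1}\pmb{1}$ is precisely the paper's index computation for $\text{Var}(s_{unif})$ (where $\Sigma^{-1}\Sigma=I$ appears as the Kronecker-delta step $(\Sigma_0)^{-1}_{i\cdot}(\Sigma_0)_{\cdot l}=\delta_{il}$), just written in matrix form. The only difference is cosmetic --- the paper reduces to per-episode $T\times T$ blocks before cancelling while you cancel on the full $n\times n$ matrices and invoke block-diagonality afterwards --- and your version is, if anything, the tidier presentation.
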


\begin{proof}
The first 3 identities are straight-forward:
\begin{align*}
    E[s_{simp}] &= \sum_{k=0}^{K-1}\sum_{\tau=1}^T (\pmb{\mu_0})_\tau - \epsilon = K\pmb{1}^\top \pmb{\mu_0} - KT\epsilon \\
    E[s_{unif}] &= \sum_{k=0}^{K-1} W_0 \cdot (\pmb{\mu_0} - \epsilon\pmb{1}) = KW_0\pmb{\mu_0} - \epsilon KW_0\pmb{1} \\
    \text{Var}(s_{simp}) &= \sum_{i,j=1}^n \text{Cov}(X_i,X_j) = K\sum_{i,j=1}^T \text{Cov}(X_i,X_j) = K\pmb{1}^\top\Sigma_0\pmb{1}
\end{align*}

For the last identity denote $Y \coloneqq \Sigma_0^{-1} X \in \mathbb{R}^n$ (i.e., $Y_i = \sum_{m=1}^T (\Sigma_0^{-1})_{im} X_m$), such that $s_{unif} = \sum_i Y_i$.
\begin{align*}
    \text{Var}(s_{unif}) &= \sum_{i,j=1}^n \text{Cov}(Y_i,Y_j) \\
    &= \sum_{i,j=1}^n \text{Cov}(\sum_{m=1}^n \Sigma_{im}^{-1} X_m, \sum_{l=1}^n \Sigma_{jl}^{-1} X_l) \\
    &= \sum_{i,j,m,l=1}^n \Sigma_{im}^{-1} \Sigma_{jl}^{-1} \text{Cov}(X_m,X_l) \\
    &= K \sum_{i,j,m,l=1}^T (\Sigma_0)_{im}^{-1} (\Sigma_0)_{jl}^{-1} \text{Cov}(X_m,X_l) \\
    &= K \sum_{i,j,m,l=1}^T (\Sigma_0)_{jl}^{-1} (\Sigma_0)_{im}^{-1} (\Sigma_0)_{ml} \\
    &= K \sum_{i,j,l=1}^T (\Sigma_0)_{jl}^{-1} \left( (\Sigma_0)_{i\cdot}^{-1} \cdot (\Sigma_0)_{\cdot l} \right) \\
    &= K \sum_{i,j,l=1}^T (\Sigma_0)_{jl}^{-1} \delta_{il}
    = K\sum_{i,j=1}^T (\Sigma_0)_{ji}^{-1}
    = K\pmb{1}^\top \Sigma_0^{-1} \pmb{1}
\end{align*}
\end{proof}

{\bf Proof of Proposition~\ref{prop:unif_test_consistency}}:
Both $\sqrt{K}\tilde{s}_{simp}^K$ and $\sqrt{K}\tilde{s}_{unif}^K$ defined in Eq.~\eqref{eq:normalized_statistics} are under $H_0$ the sums of $K$ i.i.d variables with mean 0 and variance 1.
Thus, according to the Central Limit Theorem~\citep{CLT1,CLT2}, both converge-in-distribution to the standard normal distribution under $H_0$:
\begin{align*}
\begin{split}
    &\tilde{s}_{simp}^K \xrightarrow[K\rightarrow \infty]{\enskip D \enskip} N(0,1) \\
    &\tilde{s}_{unif}^K \xrightarrow[K\rightarrow \infty]{\enskip D \enskip} N(0,1) \\
\end{split}
\end{align*}
Hence, from the definition of convergence in distribution, we have
\begin{align*}
    \text{lim}&_{K\rightarrow \infty} P\left( \tilde{s}_{simp}^K \le q_\alpha^0 \big| H_0 \right) =
    \text{lim}_{K\rightarrow \infty} F_{\tilde{s}_{simp}^K | H_0}\left( q_\alpha^0 \right) =
    \Phi\left( q_\alpha^0 \right) = \alpha \\
    \text{lim}&_{K\rightarrow \infty} P\left( \tilde{s}_{unif}^K \le q_\alpha^0 \big| H_0 \right) =
    \text{lim}_{K\rightarrow \infty} F_{\tilde{s}_{unif}^K | H_0}\left( q_\alpha^0 \right) =
    \Phi\left( q_\alpha^0 \right) = \alpha \\
\end{align*}
where $F_{s | H}$ is the Cumulative Distribution Function of the random variable $s$ under the hypothesis $H$, and $\Phi$ is of the standard normal distribution.

Note that $\tilde{s}_{simp}^K, \tilde{s}_{unif}^K$ can be computed from $s_{simp},s_{unif}$ by substituting Lemma~\ref{lemma:unif_properties} (with $\epsilon=0$, corresponding to $H_0$) in Eq.~\eqref{eq:normalized_statistics}:
\begin{align}
\label{eq:normalized_statistics_simplified}
\begin{split}
    &\tilde{s}_{simp}^K = \frac{s_{simp} - K\pmb{1}^\top \pmb{\mu_0}}{\sqrt{K\pmb{1}^\top\Sigma_0\pmb{1}}} \\
    &\tilde{s}_{unif}^K = \frac{s_{unif} - KW_0\pmb{\mu_0}}{\sqrt{K\pmb{1}^\top \Sigma_0^{-1} \pmb{1}}} \\
\end{split}
\end{align}
and by substituting Lemma~\ref{lemma:unif_properties} with $\epsilon>0$ in Eq.~\eqref{eq:normalized_statistics_simplified}, we also have the properties of $\tilde{s}_{simp}^K, \tilde{s}_{unif}^K$ under $H_A^{\epsilon}$:
\begin{align*}
\begin{split}
    &E\left[ \tilde{s}_{simp}^K | H_A^{\epsilon} \right] = -\frac{KT\epsilon}{\sqrt{K\pmb{1}^\top\Sigma_0\pmb{1}}} = -\frac{\sqrt{K}T\epsilon}{\sqrt{\pmb{1}^\top\Sigma_0\pmb{1}}} \\
    &E\left[ \tilde{s}_{unif}^K | H_A^{\epsilon} \right] = -\frac{KW_0\pmb{1}\epsilon}{\sqrt{K\pmb{1}^\top \Sigma_0^{-1} \pmb{1}}} = -\frac{\sqrt{K}W_0\pmb{1}\epsilon}{\sqrt{\pmb{1}^\top \Sigma_0^{-1} \pmb{1}}} \\
    &\text{Var}(\tilde{s}_{simp}^K | H_A^{\epsilon}) = \text{Var}(\tilde{s}_{unif}^K | H_A^{\epsilon}) = 1
\end{split}
\end{align*}
Accordingly, using the Central Limit Theorem again, we have under $H_A^\epsilon$:
\begin{align*}
\begin{split}
    &\tilde{s}_{simp}^K + \frac{\sqrt{K}T\epsilon}{\sqrt{\pmb{1}^\top\Sigma_0\pmb{1}}} \xrightarrow[K\rightarrow \infty]{\enskip D \enskip} N(0,1) \\
    &\tilde{s}_{unif}^K + \frac{\sqrt{K}W_0\pmb{1}\epsilon}{\sqrt{\pmb{1}^\top \Sigma_0^{-1} \pmb{1}}} \xrightarrow[K\rightarrow \infty]{\enskip D \enskip} N(0,1) \\
\end{split}
\end{align*}
and by the definition of convergence in distribution, we receive
\begin{align}
\label{eq:unif_asymptotic_rejection}
\begin{split}
    \text{lim}&_{K\rightarrow \infty} P\left( \tilde{s}_{simp}^K \le q_\alpha^0 \big| H_A^\epsilon \right) =
    \text{lim}_{K\rightarrow \infty} F_{\tilde{s}_{simp}^K | H_A^\epsilon}\left( q_\alpha^0 \right) = \\
    &\text{lim}_{K\rightarrow \infty} F_{\tilde{s}_{simp}^K + \frac{\sqrt{K}T\epsilon}{\sqrt{\pmb{1}^\top\Sigma_0\pmb{1}}} | H_0}\left( q_\alpha^0 \right) =
    \text{lim}_{K\rightarrow \infty} \Phi\left( q_\alpha^0 + \frac{\sqrt{K}T\epsilon}{\sqrt{\pmb{1}^\top\Sigma_0\pmb{1}}} \right) = 1 \\
    \text{lim}&_{K\rightarrow \infty} P\left( \tilde{s}_{unif}^K \le q_\alpha^0 \big| H_A^\epsilon \right) =
    \text{lim}_{K\rightarrow \infty} F_{\tilde{s}_{unif}^K | H_A^\epsilon}\left( q_\alpha^0 \right) = \\
    &\text{lim}_{K\rightarrow \infty} F_{\tilde{s}_{unif}^K + \frac{\sqrt{K}W_0\pmb{1}\epsilon}{\sqrt{\pmb{1}^\top \Sigma_0^{-1} \pmb{1}}} | H_0}\left( q_\alpha^0 \right) =
    \text{lim}_{K\rightarrow \infty} \Phi\left( q_\alpha^0 + \frac{\sqrt{K}W_0\pmb{1}\epsilon}{\sqrt{\pmb{1}^\top \Sigma_0^{-1} \pmb{1}}} \right) = 1 \\
\end{split}
\end{align}
$\square$


{\bf Proof of Theorem \ref{theorem:unif_power2} (also compactly formulated in Theorem~\ref{theorem:unif_power})}:
Following identical reasoning to the proof of Proposition~\ref{prop:unif_test_consistency} with $\epsilon$ replaced by $\epsilon/\sqrt{K}$, and recalling that $W_0=\pmb{1}^\top \Sigma_0^{-1}$ (Definition~\ref{def:unif_weighted_mean}), we receive the analog of Eq.~\eqref{eq:unif_asymptotic_rejection}:
\begin{align}
\label{eq:unif_asymptotic_power}
\begin{split}
    \text{lim}&_{K\rightarrow \infty} P\left( \tilde{s}_{simp}^K \le q_\alpha^0 \big| H_A^{\epsilon,K} \right) =
    \Phi\left( q_\alpha^0 + \frac{T\epsilon}{\sqrt{\pmb{1}^\top\Sigma_0\pmb{1}}} \right) \\
    \text{lim}&_{K\rightarrow \infty} P\left( \tilde{s}_{unif}^K \le q_\alpha^0 \big| H_A^{\epsilon,K} \right) =
    \Phi\left( q_\alpha^0 + \epsilon \sqrt{\pmb{1}^\top \Sigma_0^{-1} \pmb{1}} \right) \\
\end{split}
\end{align}

To complete the proof, since $\Phi$ is monotonously increasing, we only have to show that $\frac{T}{\sqrt{\pmb{1}^\top \Sigma_0 \pmb{1}}} \le \sqrt{\pmb{1}^\top \Sigma_0^{-1} \pmb{1}}$, or equivalently $\frac{T}{\pmb{1}^\top \Sigma_0^{-1} \pmb{1}} \le \frac{\pmb{1}^\top \Sigma_0 \pmb{1}}{T}$, which can be seen as a matrix-form generalization for the harmonic-algebraic means inequality.

Since the invertible covariance matrix $\Sigma_0$ is necessarily symmetric and positive definite, it has a symmetric positive definite square-root $R^2=\Sigma_0$.
Since $\pmb{1}^\top \Sigma_0 \pmb{1} = \pmb{1}^\top R^\top R \pmb{1} = \lVert R\pmb{1} \rVert^2$ and $\pmb{1}^\top \Sigma_0^{-1} \pmb{1} = \lVert R^{-1}\pmb{1} \rVert^2$, we indeed have by Cauchy-Schwarz inequality
\begin{align}
\label{eq:matrix_means_inequality}
    (\pmb{1}^\top \Sigma_0^{-1} \pmb{1}) (\pmb{1}^\top \Sigma_0 \pmb{1}) =
    \lVert R^{-1}\pmb{1} \rVert^2 \cdot \lVert R\pmb{1} \rVert^2 \ge ((\pmb{1}^\top R^{-1}) (R \pmb{1}))^2 =
    (\pmb{1}^\top\pmb{1})^2 =
    T^2
\end{align}
$\square$

{\bf Proof of Proposition~\ref{prop:unif_power_gain}}
Since $\Sigma_0$ is symmetric it is orthogonally diagonalizable, i.e., $\Sigma_0 = U^\top AU$ where $A$ is diagonal and $U$ is orthogonal.
Since $\Sigma_0$ is positive-definite (note that Definition~\ref{def:H0} assumes full-rank covariance matrix), its eigenvalues are positive, i.e., $\forall 1\le i \le T: \lambda_i=A_{ii}>0$.
We also have $\pmb{1}^\top \Sigma_0 \pmb{1} = \pmb{1}^\top U^\top AU \pmb{1} = u^\top A u$ (where $u=U\pmb{1}$), and $\pmb{1}^\top \Sigma_0^{-1} \pmb{1} = \pmb{1}^\top U^\top A^{-1}U \pmb{1} = u^\top A^{-1} u$.

From this we receive
\begin{align*}
    G^2 =& \frac{(\pmb{1}^\top \Sigma_0^{-1} \pmb{1})(\pmb{1}^\top \Sigma_0 \pmb{1})}{T^2} =
    \frac{(u^\top A^{-1}u)(u^\top Au)}{T^2} \\=&
    \frac{1}{T^2} (\sum_{i=1}^T u_i^2\lambda_i) (\sum_{j=1}^T u_j^2/\lambda_j) =
    \frac{1}{T^2} \sum_{i,j=1}^T (u_iu_j)^2\frac{\lambda_i}{\lambda_j} \\=&
    \frac{1}{2T^2} \sum_{i,j=1}^T (u_iu_j)^2\left(\frac{\lambda_i}{\lambda_j} + \frac{\lambda_j}{\lambda_i}\right) \\=&
    \frac{1}{2T^2} \sum_{i,j=1}^T (u_iu_j)^2\frac{\lambda_i^2+\lambda_j^2}{\lambda_i\lambda_j} \\=&
    \frac{1}{2T^2} \sum_{i,j=1}^T (u_iu_j)^2\frac{(\lambda_i-\lambda_j)^2 + 2\lambda_i\lambda_j}{\lambda_i\lambda_j} \\=&
    \frac{1}{2T^2} \left[ 2\sum_{i,j=1}^T (u_iu_j)^2 + \sum_{i,j=1}^T (u_iu_j)^2\frac{(\lambda_i-\lambda_j)^2}{\lambda_i\lambda_j} \right] \\=&
    \frac{1}{2T^2} \left[ 2u^\top u + \sum_{i,j=1}^T (u_iu_j)^2\frac{(\lambda_i-\lambda_j)^2}{\lambda_i\lambda_j} \right] \\=&
    \frac{1}{2T^2} \left[ 2(\pmb{1}^\top I\pmb{1}) + \sum_{i,j=1}^T (u_iu_j)^2\frac{(\lambda_i-\lambda_j)^2}{\lambda_i\lambda_j} \right] \\=&
    1 + \frac{1}{2T^2} \sum_{i,j=1}^T (u_iu_j)^2\frac{(\lambda_i-\lambda_j)^2}{\lambda_i\lambda_j}
\end{align*}
and since $\forall i: u_i = U_{i\cdot}\pmb{1}\ne0$ as the sum of a row of an orthogonal matrix, we only need to denote $w_{ij} \coloneqq \frac{(u_iu_j)^2}{2T^2\lambda_i\lambda_j} > 0$.
$\square$

\begin{lemma}[Sensitivity of the minimum to deviations in the elements]
\label{lemma:min_delta_bound}
Let a finite set $\mathbb{A}$, functions $f,g: \mathbb{A}\rightarrow\mathbb{R}$, and $\epsilon > 0$.
Note that since $\mathbb{A}$ is finite, both $f,g$ are bounded and denote $|g|\le G$ an upper bound.
Denote $y\coloneqq \text{min}_{a\in\mathbb{A}} f(a)+\epsilon g(a)$ and $\tilde{y}\coloneqq \text{min}_{a\in\mathbb{A}} f(a)$.
Then $|y-\tilde{y}| \le 3\epsilon G$.
\end{lemma}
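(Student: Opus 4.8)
The plan is to prove the sharp two-sided estimate $-\epsilon G \le y - \tilde{y} \le \epsilon G$ by the standard stability-of-the-minimum argument, from which the claimed (and looser) bound $|y-\tilde{y}|\le 3\epsilon G$ follows immediately. Because $\mathbb{A}$ is finite, both minima are attained, so I would begin by fixing a minimizer $a^*$ of $f(a)+\epsilon g(a)$ and a minimizer $\tilde{a}$ of $f(a)$, so that $y=f(a^*)+\epsilon g(a^*)$ and $\tilde{y}=f(\tilde{a})$.

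The two key steps are then a pair of one-line comparisons, each obtained by testing one problem's minimizer against the other problem's objective. First, evaluating the perturbed objective at $\tilde{a}$ and using minimality of $a^*$ gives $y \le f(\tilde{a})+\epsilon g(\tilde{a}) = \tilde{y}+\epsilon g(\tilde{a}) \le \tilde{y}+\epsilon G$, where the last inequality uses $g(\tilde{a})\le |g(\tilde{a})|\le G$. Second, evaluating $f$ at $a^*$ and using minimality of $\tilde{a}$ gives $\tilde{y} \le f(a^*) = y - \epsilon g(a^*) \le y + \epsilon G$, using $-g(a^*)\le G$. Rearranging the two displays yields $y-\tilde{y}\le \epsilon G$ and $\tilde{y}-y\le \epsilon G$ respectively, hence $|y-\tilde{y}|\le \epsilon G \le 3\epsilon G$.

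There is no genuine obstacle here; the only point requiring care is bookkeeping, namely keeping straight which minimizer is substituted into which objective and the direction of each resulting inequality. I note that the constant $3$ in the statement is slack: the argument above already delivers the sharper constant $1$, and the stated factor most likely arises from a looser route that passes through $f(a^*)$ via the triangle inequality $|y-\tilde{y}|\le |y-f(a^*)| + |f(a^*)-\tilde{y}|$, bounding the first term by $\epsilon G$ and the second by $2\epsilon G$ (the latter because $f(a^*)=y-\epsilon g(a^*)\le \tilde{y}+2\epsilon G$). Either way the conclusion holds a fortiori.
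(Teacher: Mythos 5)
Your proof is correct, and it is built from exactly the same two ingredients as the paper's: testing the minimizer of each problem in the other problem's objective. The difference is in how you assemble them. You keep the two one-sided inequalities $y \le \tilde{y}+\epsilon g(\tilde{a}) \le \tilde{y}+\epsilon G$ and $\tilde{y} \le f(a^*) = y-\epsilon g(a^*) \le y+\epsilon G$ separate, which immediately gives the sharp two-sided bound $|y-\tilde{y}|\le \epsilon G$. The paper instead first derives the sandwich $0 \le f(a_0)-f(\tilde{a}_0) \le \epsilon\bigl(g(\tilde{a}_0)-g(a_0)\bigr)$ and then bounds $|y-\tilde{y}| = |f(a_0)-f(\tilde{a}_0)+\epsilon g(a_0)|$ by the triangle inequality, paying $|g(\tilde{a}_0)|+|g(a_0)|+|g(a_0)|\le 3G$ --- precisely the detour through $f(a_0)$ that you conjectured as the source of the factor $3$. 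So your argument is strictly tighter: the constant $1$ is in fact sharp (take $\mathbb{A}=\{a\}$ with $g(a)=G$), while the paper's constant $3$ is slack. For the lemma's only use in the paper, bounding the statistic perturbation in the proof of Theorem~\ref{theorem:part_optimality}, any $\mathcal{O}(\epsilon)$ bound suffices, so the two versions are interchangeable there; yours simply improves the hidden constant in Eq.~\eqref{eq:part_mean_approx_error}.
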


\begin{proof}
Denote by $a_0, \tilde{a}_0$ the minimizers of $y,\tilde{y}$ respectively, i.e., $y=f(a_0)+\epsilon g(a_0) \le f(\tilde{a}_0)+\epsilon g(\tilde{a}_0)$ and $\tilde{y} = f(\tilde{a}_0) \le f(a_0)$.
From the last two inequalities we get $0 \le f(a_0) - f(\tilde(a)_0) \le \epsilon (g(\tilde{a}_0) - g(a_0))$.
Finally from the triangle inequality,
\begin{align*}
|y-\tilde{y}| &= |f(a_0)-f(\tilde{a}_0)+\epsilon g(a_0)| \le |\epsilon (g(\tilde{a}_0) - g(a_0))| + |\epsilon g(a_0)| \\
&\le \epsilon \left[ |g(\tilde{a}_0)| + |g(a_0))| + |g(a_0)| \right] \le 3\epsilon G
\end{align*}
\end{proof}


{\bf Proof of Theorem \ref{theorem:part_optimality}}:
Similarly to the proof of Theorem \ref{theorem:unif_optimality2}, we wish to show that the log-likelihood-ratio $\lambda_{LR}$ from Lemma~\ref{lemma:unif_optimality} -- after substituting Definition~\ref{def:H1_part} -- is strictly monotonously increasing with $s_{part}$.

Given $a_0 \in \{0,1\}^T$, let $a(a_0) \in \{0,1\}^n$ be its $T$-periodic completion to $n$ dimensions, and recall the notations $m(p) = \lceil pT \rceil, n=KT+\tau_0$.
Then we have 
\begin{align*}
\begin{split}
    \lambda_{LR}&(H_0,H_A^{part} | \{X_t\}_{t=1}^{n}) = \\
    &= \text{min}_{a_0 \in A_m^T} 2\epsilon a^\top\Sigma^{-1}\tilde{X} + \epsilon^2 a^\top\Sigma^{-1}a \\
    &= 2\epsilon \cdot \text{min}_{a_0 \in A_m^T} \left( a^\top\Sigma^{-1}\tilde{X} + 0.5\epsilon (a^\top\Sigma^{-1}a) \right) \\
\end{split}
\end{align*}

Denote $f(a_0)=a(a_0)^\top\Sigma^{-1}\tilde{X}$, $g(a_0)=0.5a(a_0)^\top\Sigma^{-1}a(a_0)$ and $y=\text{min}_{a_0\in A_m^T} f(a_0)+\epsilon g(a_0)$, such that $\lambda_{LR} = 2\epsilon y$ is monotonously increasing with $y$.
Note that
\begin{align*}
\begin{split}
    \text{min}_{a_0\in A_m^T} f(a_0)
    =& \text{min}_{a_0 \in A_m^T} a^\top \Sigma^{-1} \tilde{X} \\
    =& \text{min}_{a_0 \in A_m^T} \sum_{k=0}^{K-1} \sum_{\substack{\tau=1\\(a_0)_\tau=1}}^T \tilde{s}_{kT+\tau} + \sum_{\substack{\tau=1\\(a_0)_\tau=1}}^{\tau_0} \tilde{s}_{kT+\tau} \\
    =& \text{min}_{a_0 \in A_m^T} \sum_{\tau\in o(a_0)} \tilde{S}_\tau \\
    =& s_{part}(X)
\end{split}
\end{align*}
Also note that the term $g(a_0)$ is bounded:
$$\forall a_0 \in A_m^T: |g(a_0)| \le \frac{1}{2}\sum_{i,j=1}^T |(\Sigma^{-1})_{ij}| \le \frac{K+1}{2}\sum_{i,j=1}^T |(\Sigma_0^{-1})_{ij}|$$
Hence, by Lemma~\ref{lemma:min_delta_bound}, we have
\begin{equation}
\label{eq:part_mean_approx_error}
    |y-s_{part}| \le \epsilon\frac{3(K+1)}{2}\sum_{i,j=1}^T |(\Sigma_0^{-1})_{ij}| = \mathcal{O}(\epsilon)
\end{equation}
(where $\mathcal{O}(\epsilon)$ is defined with relation to $\epsilon\rightarrow0$).

Now consider the $\alpha$-quantile of $y$ under $H_0$, denoted $\tilde{\kappa}=q_\alpha(y|H_0)$.
By construction $P(y\le \tilde{\kappa} | H_0) = \alpha$ (up to non-continuous probability density in the edge case $y=\tilde{\kappa}$).
According to Neyman-Pearson lemma, a threshold-test on $y$ has the greatest power $P_\alpha$ among all statistical tests with significance level $\le \alpha$ (see the proof of Theorem~\ref{theorem:unif_optimality2} for more details), i.e., $P_\alpha = P(y\le \tilde{\kappa} | H_A^{part}) = F_{y|H_A^{part}}(\tilde{\kappa})$ (where $F_{s|H}$ is the Cumulative Distribution Function of the variable $s$ given the hypothesis $H$).

Denote the $\alpha$-quantile of the actual test-statistic $s_{part}$ by $\kappa=q_\alpha(s_{part}|H_0)$.
Since $\forall X \in \mathbb{R}^n: |y-s_{part}|=\mathcal{O}(\epsilon)$ (Eq.~\eqref{eq:part_mean_approx_error}), we also have $\big|\tilde{\kappa}-\kappa\big| = \big|q_\alpha(y|H_0)-q_\alpha(s_{part}|H_0)\big| = \mathcal{O}(\epsilon)$.
Hence, along with Eq.~\eqref{eq:part_mean_approx_error}, we have
\begin{align*}
    P\left(s_{part}\le \kappa | H_A^{part}\right) &\ge P\left(y+\mathcal{O}(\epsilon) \le \tilde{\kappa}-\mathcal{O}(\epsilon) \big| H_A^{part}\right) \\
    &= P\left(y \le \tilde{\kappa}-\mathcal{O}(\epsilon) \big| H_A^{part}\right) \\
    &= F_{y|H_A^{part}}\left(\tilde{\kappa}-\mathcal{O}(\epsilon)\right) \\
    &= F_{y|H_A^{part}}\left(\tilde{\kappa}\right) - \mathcal{O}(\epsilon) \\
    &= P_\alpha - \mathcal{O}(\epsilon)
\end{align*}
where the second-to-last equality is true since $F_{y|H_A^{part}}(x)$ has a finite derivative at $x=\tilde{\kappa}$, as the CDF of the minimum of the normal variables $\{ a^\top\Sigma^{-1}\tilde{X} + 0.5\epsilon (a^\top\Sigma^{-1}a) \}_{a\in A_m^T}$.
$\square$


\section{Bootstrap for Sequential Tests: Extended Discussion}
\label{sec:detailed_sequential_test}

Section~\ref{sec:sequential_test} describes a mechanism for sequential hypothesis testing with relation to episodic signals.
The mechanism simply runs individual hypothesis tests repeatedly with a constant significance level $\alpha$, similarly to the concept of $\alpha$-spending functions~\citep{alpha_spending,alpha_spending_notes}, and in particular to Pocock approach~\citep{Pocock}.

Note that Pocock's constant $\alpha$-spending function is often avoided, as it is claimed to spend "too much" $\alpha$-budget in the beginning of the sequential test on account of its end.
In our online setup this time-homogeneous approach is welcome, as we do not to rely on well-defined beginning and end.
However, in contrast to Pocock, we cannot assume independence between nor normality of the aggregative parts of the data.

The sequential test (described in Algorithm~\ref{algo:sequential_test}) uses a constant manually-determined lookback-horizon $h$.
Any individual test at time $t = kT + \tau$ runs the simple threshold-test of Algorithm~\ref{algo:individual_test} on the signal $X_{(k-h)T},...,X_{kT+\tau}$, i.e., it looks exactly $h+\tau/T$ episodes back.
In practice, $n_h$ different lookback-horizons $h_1,...,h_{n_h}$ can be used simultaneously, such that at any point of time, we reject $H_0$ if any of the lookback tests rejects it.
This allows us to detect slight changes which are only detectable over large amounts of data (large $h$); while still allowing quick detection of larger abrupt changes, without mixing them with older irrelevant data (small $h$).

In order to determine the significance level $\alpha$ for the individual tests within the sequential test, we use the bootstrap mechanism described in Algorithm~\ref{algo:sequential_bootstrap} (also see extended pseudo-code in Algorithm~\ref{algo:sequential_bootstrap2} in Appendix~\ref{sec:algorithms}).
The mechanism simulates sequential tests using bootstrap-sampling of $\text{max}(h_1,...,h_{n_h}) + \tilde{h}$ episodes (length of simulation + maximum lookback horizon) from a reference dataset of $N$ episodes assumed to be i.i.d.
Once the episodes are sampled, the simulation runs $\tilde{h}$ episodes without stopping condition, keeps track of the resulted $p$-values, and eventually returns the minimal $p$-value among all the individual tests.
This simulative process is repeated $\tilde{B}$ times with different bootstrap-samples, and the $\alpha_0$-quantile among all the minimal-$p$-values is chosen as the individual-test significance level $\alpha$. Indeed, $\alpha_0$ is the relative part of bootstrap-samples in which at least one individual test returned $p$-value smaller than $\alpha$.

The sequential bootstrap mechanism of Algorithm~\ref{algo:sequential_bootstrap} may look computationally overwhelming since it runs a bootstrap that calls another bootstrap (Algorithm~\ref{algo:individual_bootstrap}, called through Algorithm~\ref{algo:individual_test}).
However, if the sequential test runs individual tests in $n_h$ different lookback-horizons (where typically $n_h \le 3$) in frequency of $F$ test-points per episode, then the inner bootstrap of Algorithm~\ref{algo:individual_bootstrap} will only be called $n_h\cdot F$ times in total (thanks to the bootstrap-storage mechanism described in Algorithm~\ref{algo:individual_test}).
Also note that in spite of its name, the whole sequential bootstrap algorithm is intended to run only once (and not sequentially) -- after the reference dataset becomes available.

As a practical remark for implementation, note that Algorithm~\ref{algo:individual_test} necessarily returns $p$-value$\ge \frac{1}{B+1}$, which is the resolution of the inner bootstrap. Now consider the case where in Algorithm~\ref{algo:sequential_bootstrap}, in more than $\alpha_0$ of the simulated sequential tests, there is certain individual test whose return value is $\frac{1}{B+1}$.
In other words, the bootstrap found that under $H_0$, with probability higher than $\alpha_0$, a sequential test of length $\tilde{h}$ will encounter the most extreme possible result of Algorithm~\ref{algo:individual_test} at least once.
In that case Algorithm~\ref{algo:sequential_test} would not be able to distinguish any degradation from $H_0$: we would have the individual test threshold set to $\alpha=\text{quantile}_{\alpha_0}(P)=\frac{1}{B+1}$, which can never be overcome.
For this reason, Algorithm~\ref{algo:sequential_test} makes sure to check whether $\alpha=\frac{1}{B+1}$.
Possible solutions in this situation are increase of $B$ for better resolution, or reduction of the required significance level through either $\tilde{h}$ or $\alpha_0$.

\paragraph{Multiple test-statistics:} Every iteration, Algorithm~\ref{algo:sequential_test} calculates the test-statistic for multiple lookback horizons, where Algorithm~\ref{algo:sequential_bootstrap} is responsible of controlling the family-wise type-I error rate through the test-thresholds.
In a similar manner, Algorithm~\ref{algo:sequential_test} can be generalized to run multiple test-statistics in parallel: simply iterate over the statistics the same as iterating over the lookback-horizons.

Heterogeneous test-statistics should provide more robustness to the alternative hypothesis, since every statistic is often affected differently by every alternative hypothesis.
This comes at the cost of reduced sensitivity of each statistic, expressed through decrease of the test-thresholds by Algorithm~\ref{algo:sequential_bootstrap}.


\section{Algorithms (Pseudocode)}
\label{sec:algorithms}

This appendix concentrates the pseudo-code of the algorithms for hypothesis testing and for bootstrap-based $\alpha$ tuning, in the contexts of both individual and sequential tests.
Algorithm~\ref{algo:sequential_bootstrap2} is a more detailed version of the pseudo-code of Algorithm~\ref{algo:sequential_bootstrap}.

\begin{algorithm}[!ht]
\SetAlgoLined
 {\bf Input}: $x\in \mathbb{R}^{N\times T}$ assumed to be drawn from a $T$-long episodic signal; sample size $n=KT+\tau_0\in\mathbb{N}$; a test-statistic function $s: \mathbb{R}^n \rightarrow \mathbb{R}$; number of repetitions $B\in\mathbb{N}$\;
 {\bf Output}: test-statistic bootstrap distribution $S\in \mathbb{R}^B$\;
 
 {\bf Algorithm}:\\
 Initialize $S \in \mathbb{R}^B$\;
 \For{b in 1:B}{
  // sample \\
  Initialize $y \in \mathbb{R}^n$\;
  \For{k in 0:K-1}{
  Sample $j$ uniformly from $(1,...,N)$\;
  $y[kT+1:kT+T] \leftarrow (x_{j1},...,x_{jT})$\;
  }
  Sample $j$ uniformly from $(1,...,N)$\;
  $y[KT+1:KT+\tau_0] \leftarrow (x_{j1},...,x_{j\tau_0})$\;
  
  // calculate \\
  $S_b \leftarrow s(y)$\;
 }
 Return $S$\;
 \caption{Individual\_test\_bootstrap}
 \label{algo:individual_bootstrap}
\end{algorithm}

\begin{algorithm}[!ht]
\SetAlgoLined
 {\bf Input}: reference episodic signal $x_0\in \mathbb{R}^{N\times T}$; test data $x\in \mathbb{R}^n$; a test-statistic function $s: \mathbb{R}^n \rightarrow \mathbb{R}$; bootstrap repetitions $B\in\mathbb{N}$; bootstrap distributions storage $BS$; allowed type-I error rate $\alpha\in(0,1)$\;
 {\bf Output}: rejection $\in \{0,1\}$; P-value $p\in \mathbb{R}$\;
 
 {\bf Algorithm}:\\
 \If{$BS[n]$ not exists}{
  $BS[n] \leftarrow \text{Individual\_test\_bootstrap}(T,N,x_0,n,s,B)$;  \hspace{1cm}  (Algorithm~\ref{algo:individual_bootstrap})\\
 }
 $S \leftarrow BS[n]$\;
 $\kappa_\alpha \leftarrow \text{quantile}_\alpha(S)$\;
 $y \leftarrow s(x)$\;
 $count \leftarrow \left| \{ b\in \{1,...,B\} | S_b \le y \} \right|$\;
 $p \leftarrow \frac{1+\text{count}}{1+B}$\;
 $reject$ $= 1$ if $y < \kappa_\alpha$ else $0$; \hspace{1cm} (or equivalently, $1$ if $p < \alpha$ else $0$)\\
 Return $reject$, $p$\;
 \caption{Individual\_degradation\_test}
 \label{algo:individual_test}
\end{algorithm}

\begin{algorithm}[!ht]
\SetAlgoLined
 {\bf Input}: $x\in \mathbb{R}^{N\times T}$ assumed to be drawn from a $T$-long episodic signal; test-statistic function $s$; inner-bootstrap repetitions $B\in\mathbb{N}$; inner-bootstrap storage $BS$; tests frequency $d \in [1,T]$ and lookback horizons $h_1,...,h_{n_h}\in \mathbb{N}$; sequential test length $\tilde{h}\in \mathbb{N}$; outer-bootstrap repetitions $\tilde{B}\in\mathbb{N}$\;
 {\bf Output}: bootstrap-distribution $P \in [0,1]^{\tilde{B}}$ of the minimal-$p$-value in a sequential test of $\tilde{h}$ episodes under $H_0$\;
 
 {\bf Algorithm}:\\
 Initialize $P = (1,...,1)\in [0,1]^{\tilde{B}}$\;
 $h_{max} \leftarrow \text{max}(h_1,...,h_{n_h})$\;
 \For{b in 1:$\tilde{B}$}{
  // sample \\
  Initialize $Y \in \mathbb{R}^{(h_{max}+\tilde{h})T}$\;
  
  \For{k in 0:($h_{max}$+$\tilde{h}$-1)}{
   Sample $j$ uniformly from $(1,...,N)$\;
   $Y[kT+1:kT+T] \leftarrow (x_{j1},...,x_{jT})$\;
  }
  
  // calculate p-value at any time for any lookback horizon \\
  \For{$k$ in 0:($\tilde{h}$-1)}{
   \For{$\tau$ in 1:d:T}{
    \For{h in $h_1,...,h_{n_h}$}{
     $y \leftarrow Y[(h_{max}+k-h)T : (h_{max}+k)T+\tau]$\;
     $p \leftarrow \text{Individual\_test}(x_0=x,x=y,s=s,B=B,BS=BS,\alpha=1).p$; \hspace{1cm}  (Algorithm~\ref{algo:individual_test})\\
     $P[b] \leftarrow \text{min}(P[b], p)$\;
    }
   }
  }
 }
 Return $P$\;
 \caption{Sequential\_test\_bootstrap}
 \label{algo:sequential_bootstrap2}
\end{algorithm}

\begin{algorithm}[!ht]
\SetAlgoLined
 {\bf Input}: reference episodic signal $x_0\in \mathbb{R}^{N\times T}$; test data stream $x$; test-statistic function $s$; inner-bootstrap repetitions $B\in\mathbb{N}$; tests frequency $d \in [1,T]$ and lookback horizons $h_1,...,h_{n_h}\in \mathbb{N}$; family-wise significance  parameters $\alpha_0\in(0,1), \tilde{h}\in \mathbb{N}$; outer-bootstrap repetitions $\tilde{B}\in\mathbb{N}$\;
 {\bf Output}: time of $H_0$ rejection\;
 
 {\bf Algorithm}:\\
 Initialize bootstrap-storage $BS$\;
 $P \leftarrow \text{Sequential\_bootstrap}(x_0,s,B,BS,d,\{h_i\},\tilde{h},\tilde{B})$; \hspace{1cm}  (Algorithm~\ref{algo:sequential_bootstrap})\\
 $\alpha \leftarrow \text{quantile}_{\alpha_0}(P)$\;
 \If{$\alpha=\frac{1}{B+1}$}{
  // Can never reject $H_0$ \\
  Warn("Either increase $B$ or reduce significance requirements.")\;
  Return ERROR\;
 }
 
 \For{$k$ in ($h_{max}$, $h_{max}$+1, ...)}{
  \For{$\tau$ in 1:d:T}{
   \For{h in $h_1,...,h_{n_h}$}{
    $y \leftarrow x[(k-h)T : kT+\tau]$\;
    $r \leftarrow \text{Individual\_test}(x=y,s=s,BS=BS,\alpha=\alpha)$.reject; \hspace{1cm}  (Algorithm~\ref{algo:individual_test})\\
    \If{r=1}{
     // Reject $H_0$ \\
     Return $kT+\tau$\;
    }
   }
  }
 }
 \caption{Sequential\_degradation\_test}
 \label{algo:sequential_test}
\end{algorithm}


\section{Experiments Implementation Details}
\label{sec:experiments_implementation}

In the Pendulum~\citep{Pendulum} environment, where the goal is to keep a one-dimensional pendulum-pole pointing upwards, we define several alternative scenarios.
\textit{ccostx} scenario (with a parameter $x$) increases the cost of action ("control cost", which is quadratic in the activated force) to $x\%$ of its original value. Note that the control cost is typically the smaller among the components of the reward, which also include the angle of the pendulum and its speed.
\textit{noisex} scenario adds an additive random normally-distributed noise to each action, whose standard deviation is $x\%$ of the range of valid actions.
\textit{lenx} and \textit{massx} scenarios change the Pendulum length and mass respectively to $x\%$ of their original values. Note that while these scenarios are not necessarily harder to act in, the changes are still supposed to cause degradation since the agent is not re-trained for them.

In the HalfCheetah~\citep{HalfCheetah} environment, where the goal is to train a two-dimensional cheetah to run as fast as possible, we also define several alternative scenarios.
\textit{ccostx} and \textit{massx} are similar to the analog scenarios in Pendulum described above.
The "control cost" in this case is also quadratic with the activated force, and is typically smaller than the other component of the reward -- the speed of the Cheetah.
\textit{gravityx} scenario changes the gravity to $x\%$ of its original value.

Table~\ref{tab:scenarios} briefly summarizes the various scenarios, and Table~\ref{tab:envs} (in Section~\ref{sec:methodology}) summarizes the parameters of the tests setup per environment.


\begin{table}
\centering
\caption{Environments scenarios}
\label{tab:scenarios}
\begin{tabular}{|l|l|}
\hline
Environment & Scenarios \\
\hline\hline
Pendulum-v0 &
\vtop{\hbox{\strut $H_0$}\hbox{\strut ccost\textit{x}: action cost $\times$= x\%}\hbox{\strut noise\textit{x}: additive noise = x\% of max action}\hbox{\strut len\textit{x}: lenth $\times$= x\%}\hbox{\strut mass\textit{x}: mass $\times$= x\%}} \\
\hline
HalfCheetah-v3 &
\vtop{\hbox{\strut $H_0$}\hbox{\strut ccost\textit{x}: action cost $\times$= x\%}\hbox{\strut mass\textit{x}: mass $\times$= x\%}\hbox{\strut gravity\textit{x}: gravity $\times$= x\%}}\\
\hline
Humanoid &
\vtop{\hbox{\strut $H_0$}\hbox{\strut ccost\textit{x}: action cost $\times$= x\%}\hbox{\strut len\textit{x}: leg size $\times$= x\%}}\\
\hline
\end{tabular}
\end{table}

For every environment, before running the statistical tests according to Section~\ref{sec:methodology}, the recorded rewards are downsampled in time by factor $d$: every interval of samples $\{x_{t}\}_{t=d\cdot\tilde{t}+1}^{d\cdot\tilde{t}+d}$ is replaced by its mean as a single sample $\tilde{x}_{\tilde{t}} \coloneqq \frac{1}{d} \sum_{t=d\cdot\tilde{t}+1}^{d\cdot\tilde{t}+d} x_t$.
The downsampling reduces the dimension of the covariance matrix $\Sigma_0$ to $\frac{T}{d} \times \frac{T}{d}$, making it less noisy to estimate. In addition, it reduces the computational complexity of the experiments in this section.
After the downsampling, the sequential tests apply an individual test in every single time-step, i.e., the testing-frequency of the sequential tests is $F=T/d$ per episode.

The statistical tests compared in Section~\ref{sec:results} are mostly based on the threshold-tests described in Algorithm~\ref{algo:individual_test} (for individual tests) and Algorithm~\ref{algo:sequential_test} (for sequential tests), with different test-statistics:
\begin{itemize}
    \item {\bf Mean}: simple mean of the rewards.
    \item {\bf CUSUM}: the standard cumulative-sum~\citep{CUSUM,CUSUM_description} test, with every time-step normalized by its standard-deviation (estimated over all the reference episodes), and with reference value  $k=0.5$.
    As CUSUM is online by nature, it is used as is (beginning to run $h$ episodes in advance for any lookback horizon $h$) instead of as part of Algorithm~\ref{algo:sequential_test}. The family-wise significance level of CUSUM is controlled using Algorithm~\ref{algo:sequential_bootstrap}.
    Note that through the normalization mentioned above, we let CUSUM take advantage of the episodic setup and the trusted reference data.
    Other normalization methods (time-invariant normalization and no normalization) did not improve CUSUM results in the experiments of Section~\ref{sec:results}.
    \item {\bf Hotelling}: Hotelling test~\citep{Hotelling} is an optimal test for detection of mean-shift under unchanged covariance in multivariate normal variables. We generalize the implementation so that the test can be applied to non-integer number of multivariate variables (corresponding to non-integer number of episodes).
    Note that the alternative hypothesis of Hotelling test is very general, which results in reduced test power for the specific domain of interest in the current work -- degradation -- as demonstrated in Section~\ref{sec:results}.
    \item {\bf UDWM}: the uniform-degradation weighted-mean from Definition~\ref{def:unif_weighted_mean}. The test based on this statistic is also named UDT.
    \item {\bf PDM}: 0.9-partial-degradation mean from Definition~\ref{def:part_weighted_mean}, corresponding to degradation focused on 90\% of the time-steps. The test based on this statistic is also named PDT.
    While this statistic may look quite similar to UDWM, it can produce different results in environments where UDWM concentrates most of the weights in few time-steps -- if PDM "drops" these time-steps. This may make PDM more robust to degradation scenarios where the dominantly-weighted time-steps are not affected. Further discussion regarding $p$ and the relation to the CVaR statistic is provided in Appendix~\ref{sec:detailed_optimal_test}.
    \item {\bf Mixed}: The mixed statistic essentially incorporates multiple statistics together -- Mean and PDM in this case -- and testing whether any of them has "extreme" values.
    It is defined as $s=min(\text{p-value(Mean)},\text{p-value(PDM)})$, i.e., we calculate both statistics and take the more significant p-value.
    Note that $s$ itself is the statistic, and that its p-value is derived using Algorithm~\ref{algo:individual_bootstrap}'s bootstrap as in any of the other test statistics.
    We see below that the Mixed test enjoys most of the value of PDM, and still performs reasonably well wherever PDM is not robust enough (namely, when the mean reward decreases but the highly-weighted time-steps actually \textit{increase}).
    The test based on this statistic is also named MDT.
\end{itemize}


\section{Complementary Figures}
\label{sec:detailed_results}

Figures~\ref{fig:weights}-\ref{fig:hum_params} introduce additional results from the experiments described in Section~\ref{sec:experiments}.
Note that Section~\ref{sec:experiments} refers directly to some of the results presented in this section.

\begin{figure}[!h]
    \centering
    
    \begin{subfigure}{0.32\textwidth}
    \includegraphics[width=1.\linewidth]{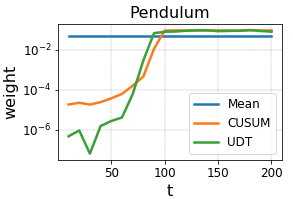}
    \end{subfigure}
    \begin{subfigure}{0.33\textwidth}
    \includegraphics[width=1.\linewidth]{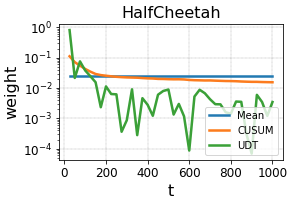}
    \end{subfigure}
    \begin{subfigure}{0.33\textwidth}
    \includegraphics[width=1.\linewidth]{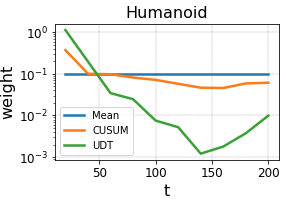}
    \end{subfigure}

    \caption{\small The weights assigned to the various time-steps by the various tests. Mind the logarithmic scale. Note that the weights of CUSUM are received from its normalization scheme, i.e., $w_t=1/\text{std}(r_t)$.}
    \label{fig:weights}
\end{figure}

\begin{figure}[!h]
    \centering
    
    \begin{subfigure}{0.32\textwidth}
    \includegraphics[width=1.\linewidth]{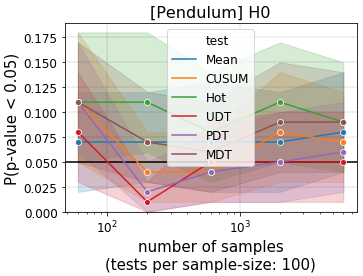}
    \end{subfigure}
    \begin{subfigure}{0.33\textwidth}
    \includegraphics[width=1.\linewidth]{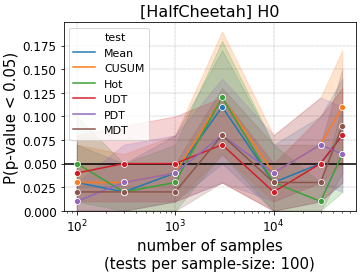}
    \end{subfigure}
    \begin{subfigure}{0.33\textwidth}
    \includegraphics[width=1.\linewidth]{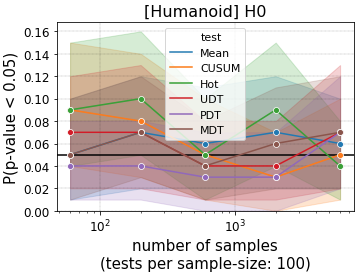}
    \end{subfigure}

    \begin{subfigure}{0.32\textwidth}
    \includegraphics[width=1.\linewidth]{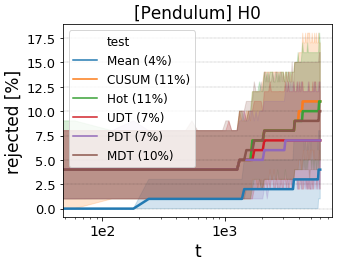}
    \end{subfigure}
    \begin{subfigure}{0.33\textwidth}
    \includegraphics[width=1.\linewidth]{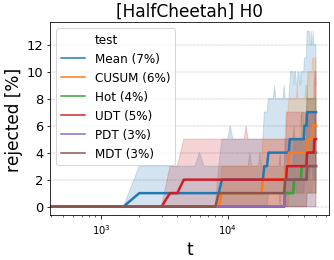}
    \end{subfigure}
    \begin{subfigure}{0.33\textwidth}
    \includegraphics[width=1.\linewidth]{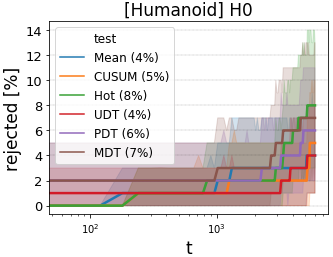}
    \end{subfigure}

    \caption{\small Percent of rejections of $H_0$ when $H_0$ is true, for various statistical tests, for both individual (top) and sequential (bottom) tests. Each point in each plot represents $M=100$ tests, and the shaded area represents 95\% confidence interval. The tests were tuned by Algorithm~\ref{algo:individual_bootstrap} (individual) and Algorithm~\ref{algo:sequential_bootstrap} (sequential), using a reference dataset, to yield rejection rate of 5\% under $H_0$.}
    \label{fig:H0}
\end{figure}

\begin{figure}[!h]
    \centering
    
    \begin{subfigure}{0.33\textwidth}
    \includegraphics[width=1.\linewidth]{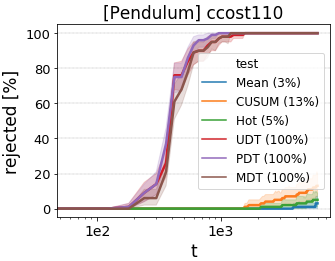}
    \end{subfigure}
    \begin{subfigure}{0.32\textwidth}
    \includegraphics[width=1.\linewidth]{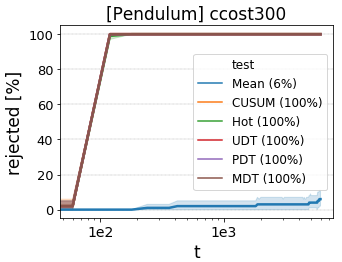}
    \end{subfigure}
    \begin{subfigure}{0.33\textwidth}
    \includegraphics[width=1.\linewidth]{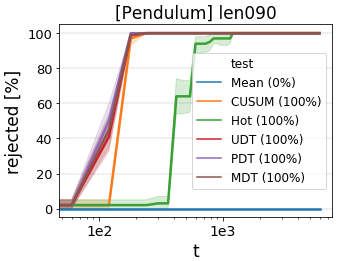}
    \end{subfigure}
    
    \begin{subfigure}{0.33\textwidth}
    \includegraphics[width=1.\linewidth]{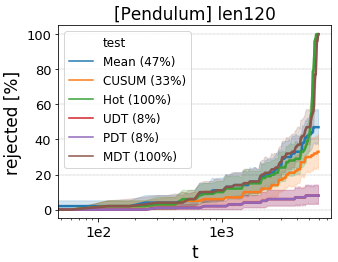}
    \end{subfigure}
    \begin{subfigure}{0.32\textwidth}
    \includegraphics[width=1.\linewidth]{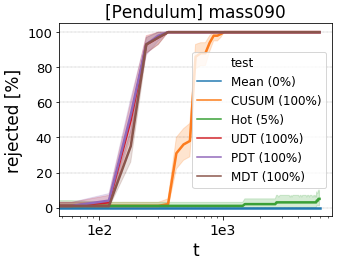}
    \end{subfigure}
    \begin{subfigure}{0.33\textwidth}
    \includegraphics[width=1.\linewidth]{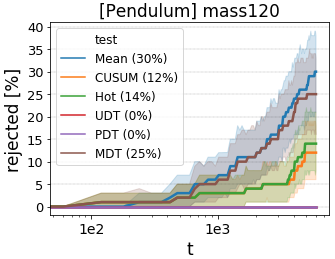}
    \end{subfigure}
    
    \begin{subfigure}{0.33\textwidth}
    \includegraphics[width=1.\linewidth]{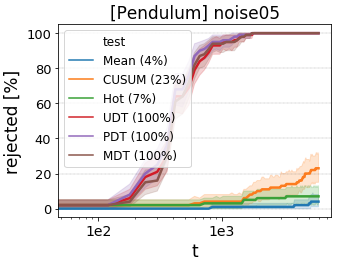}
    \end{subfigure}
    \begin{subfigure}{0.32\textwidth}
    \includegraphics[width=1.\linewidth]{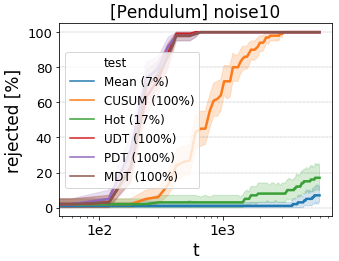}
    \end{subfigure}
    \begin{subfigure}{0.33\textwidth}
    \includegraphics[width=1.\linewidth]{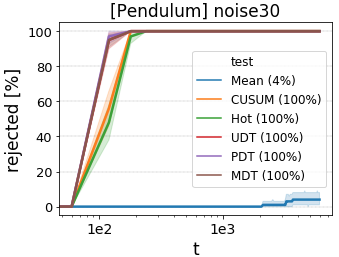}
    \end{subfigure}
    
    \caption{\small Sequential tests in different scenarios in Pendulum environment: cumulative percent of rejections vs. number of simulated time-steps. In the legend, the numbers in parenthesis are the final percents of rejection.}
    \label{fig:pend_seq}
\end{figure}

\begin{figure}[!h]
    \centering
    
    \begin{subfigure}{0.33\textwidth}
    \includegraphics[width=1.\linewidth]{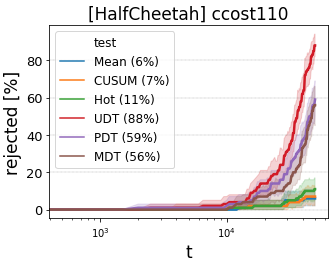}
    \end{subfigure}
    \begin{subfigure}{0.32\textwidth}
    \includegraphics[width=1.\linewidth]{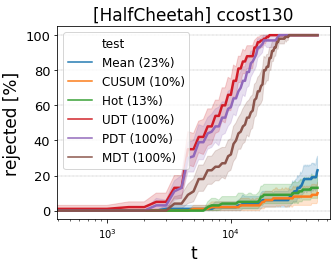}
    \end{subfigure}
    \begin{subfigure}{0.33\textwidth}
    \includegraphics[width=1.\linewidth]{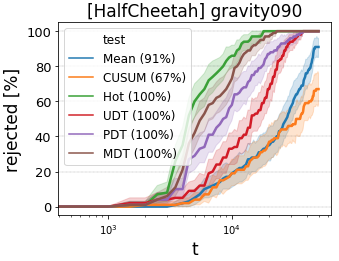}
    \end{subfigure}
    
    \begin{subfigure}{0.33\textwidth}
    \includegraphics[width=1.\linewidth]{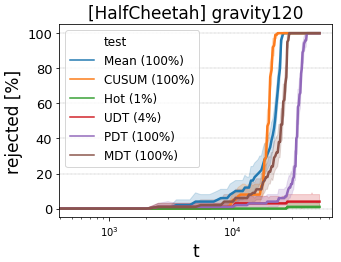}
    \end{subfigure}
    \begin{subfigure}{0.32\textwidth}
    \includegraphics[width=1.\linewidth]{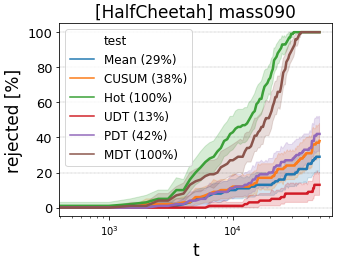}
    \end{subfigure}
    \begin{subfigure}{0.33\textwidth}
    \includegraphics[width=1.\linewidth]{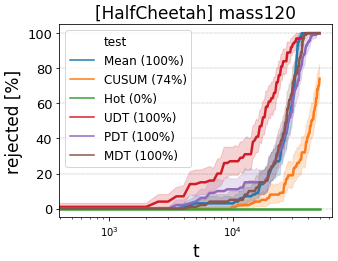}
    \end{subfigure}
    
    \caption{\small Sequential tests in different scenarios in HalfCheetah environment: cumulative percent of rejections vs. number of simulated time-steps. In the legend, the numbers in parenthesis are the final percents of rejection.}
    \label{fig:hc_seq}
\end{figure}

\begin{figure}[!h]
    \centering
    
    \begin{subfigure}{0.33\textwidth}
    \includegraphics[width=1.\linewidth]{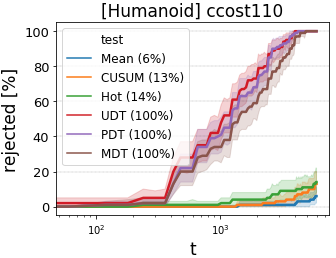}
    \end{subfigure}
    \begin{subfigure}{0.33\textwidth}
    \includegraphics[width=1.\linewidth]{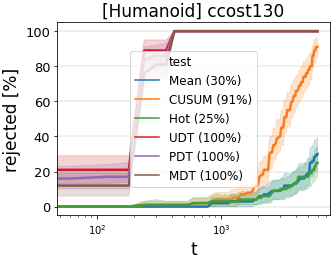}
    \end{subfigure} \\
    
    \begin{subfigure}{0.33\textwidth}
    \includegraphics[width=1.\linewidth]{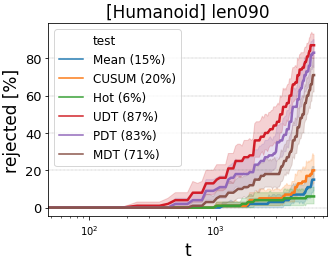}
    \end{subfigure}
    \begin{subfigure}{0.33\textwidth}
    \includegraphics[width=1.\linewidth]{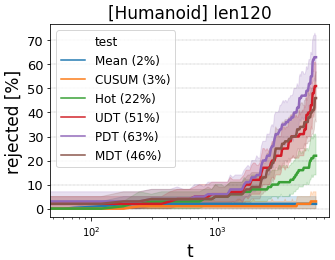}
    \end{subfigure}

    \caption{\small Sequential tests in different scenarios in Humanoid environment: cumulative percent of rejections vs. number of simulated time-steps. In the legend, the numbers in parenthesis are the final percents of rejection.}
    \label{fig:hum_seq}
\end{figure}

\begin{figure}[!h]
    \centering
    
    \begin{subfigure}{0.44\textwidth}
    \includegraphics[width=1.\linewidth]{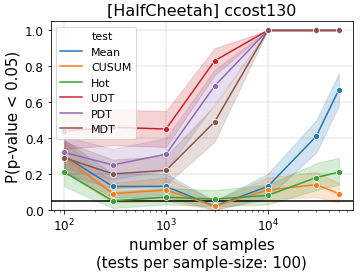}
    \caption{}
    \label{fig:hc_ind_ccost130}
    \end{subfigure}
    \begin{subfigure}{0.55\textwidth}
    \includegraphics[width=1.\linewidth]{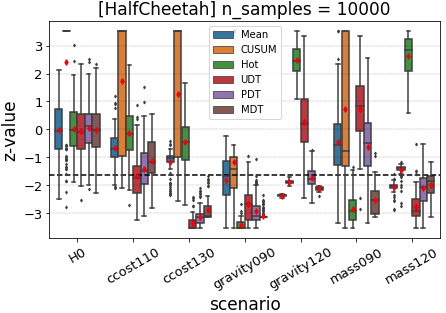}
    \caption{}
    \label{fig:hc_ind_10000}
    \end{subfigure}
    \caption{\small Individual (not sequential) tests in HalfCheetah environment: (a) percent of rejections (with significance $\alpha=0.05$) vs. number of samples: recall that $T=1000$ samples correspond to a single episode, and note that Mean, CUSUM and Hotelling perform better in the beginning of the first episode -- before most of the noise comes in; (b) for each scenario and each test-statistic -– the distribution of the $M=100$ z-values corresponding to simulated data blocks of 10 episodes each. The horizontal line represents the rejection threshold for significance $\alpha=0.05$.}
    \label{fig:hc_ind}
\end{figure}

\begin{figure}[!h]
    \centering
    
    \begin{subfigure}{0.4\textwidth}
    \includegraphics[width=1.\linewidth]{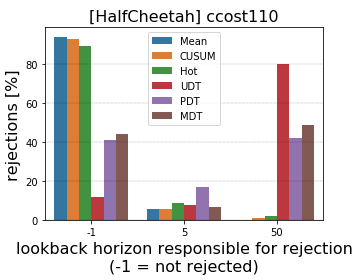}
    \end{subfigure}
    \begin{subfigure}{0.4\textwidth}
    \includegraphics[width=1.\linewidth]{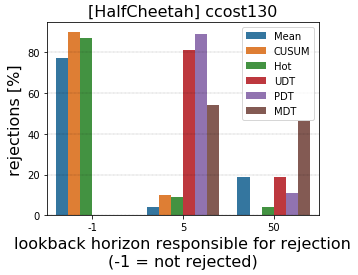}
    \end{subfigure}

    \caption{\small Lookback horizons for which $H_0$ was rejected in sequential tests: smaller degradation requires longer horizon (i.e., more data) for detection.}
    \label{fig:horizons}
\end{figure}

\begin{figure}[!h]
    \centering
    \includegraphics[width=0.8\linewidth]{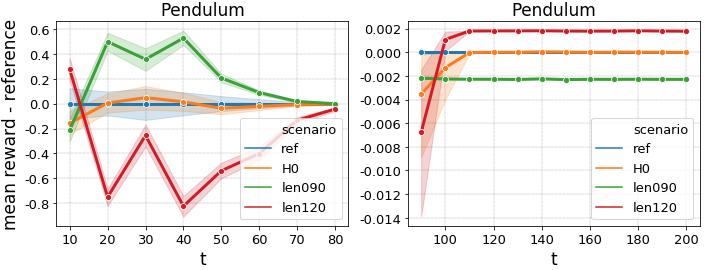}
    \caption{\small Rewards degradation in Pendulum following changes in pole length, over $N=3000$ episodes per scenario. The figure is split due to the extreme scale difference between $t<90$ and $t>90$.}
    \label{fig:pend_degradation}
\end{figure}

\begin{figure}[!h]
\centering
\begin{subfigure}{.37\textwidth}
  \centering
  \includegraphics[width=1.\linewidth]{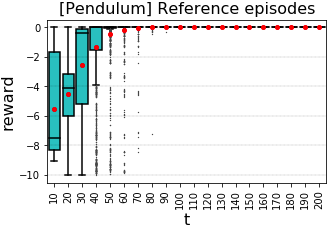}
  \caption{}
  \label{fig:pend_rewards}
\end{subfigure}
\begin{subfigure}{.23\textwidth}
  \centering
  \includegraphics[width=1.\linewidth]{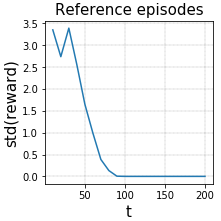}
  \caption{}
  \label{fig:pend_std}
\end{subfigure}
\begin{subfigure}{.37\textwidth}
  \centering
  \includegraphics[width=1.\linewidth]{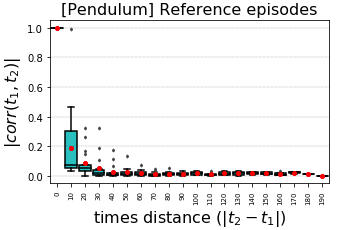}
  \caption{}
  \label{fig:pend_cor_diag}
\end{subfigure}
\caption{\small Parameters of an episodic signal of the rewards in Pendulum environment, estimated over $N=3000$ episodes of $T=200$ time-steps: (a) distribution of rewards per time-step; (b) standard deviations; (c) correlation($t_1,t_2$) vs. $|t_2-t_1|$. The estimations were done in resolution of 10 time-steps, i.e., every episode was split into 20 intervals of 10 consecutive rewards, and each sample is the average over an interval.}
\label{fig:pend_params}
\end{figure}

\begin{figure}[!h]
\centering
\begin{subfigure}{.37\textwidth}
  \centering
  \includegraphics[width=1.\linewidth]{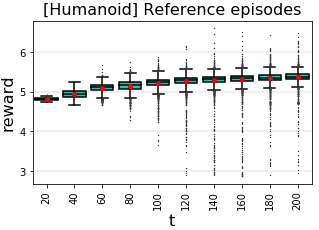}
  \caption{}
  \label{fig:hum_rewards}
\end{subfigure}
\begin{subfigure}{.23\textwidth}
  \centering
  \includegraphics[width=1.\linewidth]{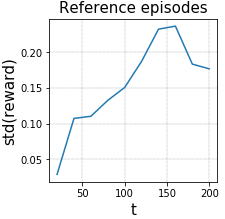}
  \caption{}
  \label{fig:hum_std}
\end{subfigure}
\begin{subfigure}{.37\textwidth}
  \centering
  \includegraphics[width=1.\linewidth]{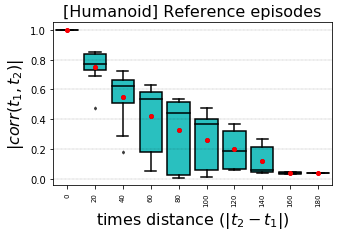}
  \caption{}
  \label{fig:hum_cor_diag}
\end{subfigure} \\

\begin{subfigure}{.4\textwidth}
  \centering
  \includegraphics[width=1.\linewidth]{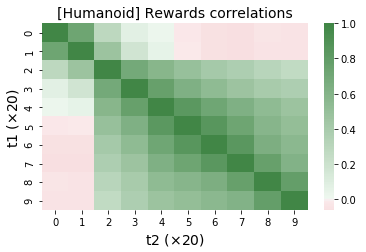}
  \caption{}
  \label{fig:hum_cor}
\end{subfigure}
\begin{subfigure}{.4\textwidth}
  \centering
    \includegraphics[width=1.\linewidth]{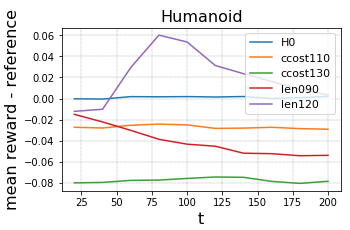}
  \caption{}
  \label{fig:hum_degradation}
\end{subfigure}
\caption{\small Parameters of an episodic signal of the rewards in Humanoid environment, estimated over $N=5000$ episodes of $T=200$ time-steps: (a) distribution of rewards per time-step; (b) standard deviations; (c) correlation($t_1,t_2$) vs. $|t_2-t_1|$; (d) correlations map; (e) rewards degradation following changes in control costs and leg size. The estimations were done in resolution of 20 time-steps, i.e., every episode was split into 10 intervals of 20 consecutive rewards, and each sample is the average over an interval.}
\label{fig:hum_params}
\end{figure}


\section{Sensitivity to Covariance Matrix Estimation}
\label{sec:cov_sensitivity}

In most of the analysis in this work we assume that both the means $\pmb{\mu_0}$ and the covariance $\Sigma_0$ of the episodic signal $X$ are known.
In practice, this can be achieved either through detailed domain knowledge, or by estimation from the recorded reference dataset of Setup~\ref{setup:ind}, assuming it satisfies Eq.~\eqref{eq:episodic_dist}.
The parameters estimation errors decrease as $\mathcal{O}(1/\sqrt{N})$ with the number $N$ of reference episodes, and are distributed according to the Central Limit Theorem (for means) and Wishart distribution~\citep{Wishart} (for covariance).

If $N$ is suspected to be too small for accurate estimation, it is possible to deal with the estimation errors of the model parameters through regularization. One possible regularization is assuming absence of correlations between distant time-steps ($\exists\delta\in\mathbb{N},\forall |t_2-t_1|>\delta: (\Sigma_0)_{t_1t_2}=0$). Another is to essentially reduce $T$ through grouping of sequences of time-steps together (as we do in Section~\ref{sec:experiments}, for example).

To test the practical consequences of inaccurate parameters estimation, we repeated some of the offline (individual) tests of Section~\ref{sec:experiments} for HalfCheetah -- with different sizes of reference datasets. The reference datasets vary between $N=100$ and $N=10000$ episodes (where $N=10000$ corresponds to Section~\ref{sec:experiments}).
As in Section~\ref{sec:experiments}, we downsample each episode from $T=1000$ to $F=T/d=40$ time-steps.

Figure~\ref{fig:cov_sensitivity} shows the results of the sensitivity tests.
Even with as little as $N=100$ reference episodes, the largest weights are successfully assigned to the first time-steps (mind the logarithmic scale in both axes), although certain later weights are still noisy.
$N=300$ is sufficient to yield a consistent statistic distribution under $H_0$, i.e., to reliably tune the false alarm rate.
All sizes of reference datasets yield similar test power in the tested scenarios \textit{ccost130} and \textit{gravity090}.
$N=3000$ is hardly distinguishable from $N=10000$ by any mean.

\begin{figure}[!b]
  \centering
  \includegraphics[width=.8\linewidth]{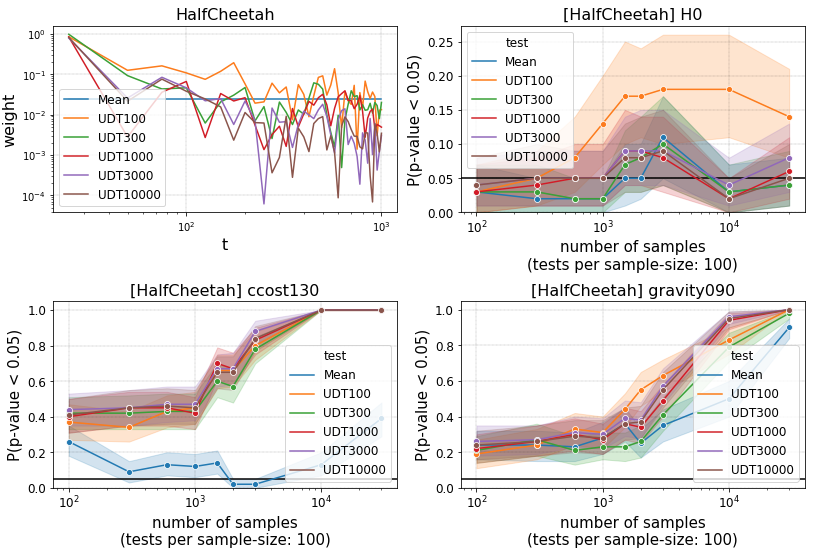}
  \caption{\small The weights of Uniform Degradation Tests (UDT), based on estimation of parameters from reference datasets of various sizes (top left); and percents of degradation detections in individual tests in different scenarios (with significance $1-\alpha=0.95$).}
  \label{fig:cov_sensitivity}
\end{figure}


\end{document}